\renewcommand{\cite}[1]{\citep{#1}}
\author{
	Rohan Chauhan
	\thanks{Part of this work was done while Rohan Chauhan was visiting Archimedes AI.}\\
	University of California, Irvine \\
	\tt{rmchauha@uci.edu}
	\and
	Ioannis Panageas
	\thanks{Supported by NSF grant CCF-2454115.}\\
	University of California, Irvine\\
	\tt{ipanagea@uci.edu}
}
\title{Learning the Inverse Temperature of Ising Models under Hard Constraints using One Sample }
\date{}
\begin{document}
\startcontents[sections]

\maketitle

\begin{abstract}%
We consider the problem of estimating inverse temperature parameter $\beta$ of an $n$-dimensional truncated Ising model using a single sample. Given a graph $G = (V,E)$ with $n$ vertices, a truncated Ising model is a probability distribution over the $n$-dimensional hypercube $\{-1,1\}^n$ where each configuration $\mathbf{\sigma}$ is constrained to lie in a truncation set $S \subseteq \{-1,1\}^n$  and has probability $\Pr(\mathbf{\sigma}) \propto \exp(\beta\mathbf{\sigma}^\top A\mathbf{\sigma})$ with $A$ being the adjacency matrix of $G$. We adopt the recent setting of [Galanis et al. SODA'24], where the truncation set $S$ can be expressed as the set of satisfying assignments of a $k$-SAT formula. Given a single sample $\mathbf{\sigma}$ from a truncated Ising model, with inverse parameter $\beta$, underlying graph $G$ of bounded degree $\Delta$ and $S$ being expressed as the set of satisfying assignments of a $k$-SAT formula, we design in nearly $\mathcal{O}(n)$ time an estimator $\hat{\beta}$ that is $\mathcal{O}(\Delta^3/\sqrt{n})$-consistent with the true parameter $\beta$ for $k \gtrsim \log(d^2k)\Delta^3.$

 Our estimator is based on the maximization of the pseudolikelihood, a notion that has received extensive analysis for various probabilistic models without [Chatterjee, Annals of Statistics '07] or with truncation [Galanis et al. SODA '24]. Our approach generalizes recent techniques from [Daskalakis et al. STOC '19, Galanis et al. SODA '24], to confront the more challenging setting of the truncated Ising model.
    

    
\end{abstract}
\newpage
{ \Large \bfseries Contents\par}
\printcontents[sections]{l}{1}{\setcounter{tocdepth}{2}}
\newpage

\section{Introduction}

Markov random fields (MRFs) are a common framework for analyzing high-dimensional distributions with complex conditional structures. A well-studied example of an MRF and the primary topic of inquiry in this paper is the \emph{Ising model} \cite{ising1925beitrag}, a probability measure $\mu_{G, \beta}$, over all assignments $\bsigma$ in the binary hypercube $\{-1,1\}^n$. The model is parameterized by a graph $G = (V,E)$ and inverse temperature $\beta$, taking the form $\mu_{G, \beta} \propto \exp\left(\beta \bsigma^\top A \bsigma\right)$, where $A$ being the adjacency matrix of $G$. The simplicity of the Ising model has led to widespread adoption in fields as disparate as statistical physics, finance, the social sciences, and computer vision, among others (see \cite{chatterjee2007estimation, qin2011polynomial, harris2013add, WangCMN15} and the references therein for a brief collection of examples). These applications have, in turn, motivated a substantial body of research on efficient sampling \cite{bresler2015efficiently, lubetzky2013cutoff, sly2012computational}, rigorous testing \cite{daskalakis2019testingisingmodels}, and principled inference of the inverse temperature parameter and interaction matrix \cite{dagan2020learningisingmodelsmultiple, dagan2021statisticalestimationdependentdata} under the framework of the Ising model.

Beginning from the work of \cite{chatterjee2007estimation}, there has been substantial interest in the task of estimating inverse temperature parameter $\beta$, given only one sample $\bsigma \sim \mu_{G, \beta}$ and the graph $G$, using the maximum pseudolikelihood estimator of \cite{besag1975statistical}. This setup subsumes the setting of multi-sample estimation as $\ell$ samples of an Ising model over $n$ nodes is equivalently a single sample of an $n\ell$ Ising model over $\ell$ disconnected components of a graph. This line of work is driven by the inherent technical constraints of network data, wherein it is often impractical to obtain independent observations of the same network responses \cite{daskalakis2019regression, dagan2020learningisingmodelsmultiple}. Interestingly, despite both static and dynamic phase transitions in model behavior as $\beta$ varies which can render sampling computationally intractable (NP-hard) \cite{GALANIS_2016}, it remains possible to construct consistent estimators of $\beta$ and the interaction matrix $A$, provided that $\beta = \mathcal{O}(1)$ \cite{dagan2020learningisingmodelsmultiple, dagan2021statisticalestimationdependentdata, daskalakis2019regression, mukherjee2022highdimensionallogisticregression}.
\begin{figure}[!b]
    \centering
    \begin{subfigure}[b]{0.45\textwidth}
      \centering
  \tikzset{every picture/.style={line width=0.4pt}} 

\begin{tikzpicture}[x=0.75pt,y=0.75pt,yscale=-0.45,xscale=0.45]

\draw    (72.33,64.33) -- (69.83,132.17) ;
\draw    (38.91,178.19) -- (65,154) ;
\draw    (200.83,146.67) -- (238.75,89.76) ;
\draw    (263.83,213.17) -- (294.17,160.5) ;
\draw    (83.67,56.67) -- (157.83,72.83) ;
\draw    (288.17,151.17) -- (208,158) ;
\draw    (79.83,152.83) -- (110,228) ;
\draw    (318.33,54.33) -- (394.75,72.76) ;
\draw    (82.83,137.5) -- (235.5,83.5) ;
\draw    (44.5,184.83) -- (246,221.67) ;
\draw    (122,240) -- (246,224) ;
\draw    (41.81,194.91) -- (99.17,234.5) ;
\draw    (83.83,146.83) -- (248.83,215.5) ;
\draw    (361.33,134.34) -- (400.67,86.34) ;
\draw    (270,224) -- (314.33,218.33) ;
\draw    (326.33,230.33) -- (343.6,286.41) ;
\draw    (311.33,66.34) -- (352,135.01) ;
\draw    (396.33,75.67) -- (259,79) ;
\draw    (360.33,297) -- (486.33,271) ;
\draw    (423.25,238.76) -- (446.75,182.76) ;
\draw    (357.2,289.61) -- (409.6,256.81) ;
\draw    (461.2,187.21) -- (489.2,263.61) ;
\draw    (313.33,151.01) -- (345.67,145.67) ;
\draw    (363.2,156.81) -- (412,239.21) ;
\draw    (469.67,175) -- (518.4,186.01) ;
\draw    (326.33,206.33) -- (350.4,156.01) ;
\draw    (338.33,218.33) -- (408,242.41) ;
\draw    (502.33,260) -- (523.33,202.01) ;
\draw    (179.33,67.67) -- (295.33,51.01) ;
\draw    (420.33,75.67) -- (518,87.67) ;
\draw    (430.67,109.67) -- (414,86.34) ;
\draw    (317.67,421.33) -- (401.25,389.76) ;
\draw    (423.17,386.67) -- (466.75,396.26) ;
\draw    (287.25,371.26) -- (301.25,416.76) ;
\draw    (335.83,218) -- (445.25,172.26) ;
\draw    (446.67,130.34) -- (529,179.67) ;
\draw    (71,285.67) -- (143.75,292.76) ;
\draw    (164.67,304.99) -- (206.75,337.76) ;
\draw    (124.33,389.67) -- (208.25,352.76) ;
\draw    (69.17,293.17) -- (205.67,345) ;
\draw    (225.67,353.99) -- (296.25,421.76) ;
\draw    (167.17,290.99) -- (242.33,285.67) ;
\draw    (292.17,351.49) -- (339.25,306.76) ;
\draw    (296.33,360.33) -- (400.33,383.67) ;
\draw    (118.33,246.33) -- (245.25,279.76) ;
\draw    (121.75,383.26) -- (245.75,294.26) ;
\draw    (491.67,280.49) -- (455.25,314.26) ;
\draw    (61.75,297.76) -- (103.25,380.26) ;
\draw    (258.17,296.49) -- (278.25,350.26) ;
\draw    (455.17,335.49) -- (479,387) ;
\draw    (439.75,332.76) -- (415.17,372.49) ;
\draw    (254.33,273.67) -- (258,236) ;
\draw    (266.67,232.99) -- (337.75,290.26) ;
\draw    (457.67,163) -- (519.75,102.76) ;
\draw    (296.17,354.99) -- (418.33,261) ;
\draw    (305.75,140.26) -- (396.75,80.26) ;
\draw    (118.17,229.99) -- (190.25,169.76) ;
\draw    (267.75,229.76) -- (406.33,249) ;
\draw    (529,179.67) -- (528.33,105) ;
\draw    (44.75,191.26) -- (184.25,162.26) ;
\draw    (56.29,273.71) -- (164.25,86.76) ;
\draw    (299.75,137.26) -- (303.25,66.76) ;
\draw    (369.67,145.67) -- (427.25,129.26) ;

\draw [fill=white] (449,324.33) circle (13.8);
\node at (449,324.33) {$+$};
\draw [fill=white] (412.33,383.67) circle (13.8);
\node at (412.33,383.67) {$+$};

\draw [fill=white] (479,399) circle (13.8);
\node at (479,399) {$+$};

\draw [fill=white] (307,428.33) circle (13.8);
\node at (307,428.33) {$+$};

\draw [fill=white] (110,240) circle (13.8);
\node at (110,240) {$-$};

\draw [fill=white] (196,158) circle (13.8);
\node at (196,158) {$-$};

\draw [fill=white] (72,144) circle (13.8);
\node at (72,144) {$-$};

\draw [fill=white] (72.33,52.33) circle (13.8);
\node at (72.33,52.33) {$+$};

\draw [fill=white] (32,188) circle (13.8);
\node at (32,188) {$-$};

\draw [fill=white] (301,150) circle (13.8);
\node at (301,150) {$+$};

\draw [fill=white] (258,224) circle (13.8);
\node at (258,224) {$-$};

\draw [fill=white] (247,79) circle (13.8);
\node at (247,79) {$+$};

\draw [fill=white] (169.33,76) circle (13.8);
\node at (169.33,76) {$+$};

\draw    (181.33,76) -- (234.5,76.83) ;
\draw    (169.33,88) -- (193.17,145.83) ;
\draw    (247,91) -- (258,212) ;
\draw    (179,83.5) -- (290.2,145.2) ;
\draw [fill=white] (306.33,54.33) circle (13.8);
\node at (306.33,54.33) {$+$};

\draw [fill=white] (348.33,297) circle (13.8);
\node at (348.33,297) {$-$};

\draw [fill=white] (408.33,75.67) circle (13.8);
\node at (408.33,75.67) {$+$};

\draw [fill=white] (59,285.67) circle (13.8);
\node at (59,285.67) {$+$};

\draw [fill=white] (457.67,175) circle (13.8);
\node at (457.67,175) {$-$};

\draw [fill=white] (254.33,285.67) circle (13.8);
\node at (254.33,285.67) {$+$};

\draw [fill=white] (437.67,120.33) circle (13.8);
\node at (437.67,120.33) {$+$};

\draw [fill=white] (217.67,345) circle (13.8);
\node at (217.67,345) {$+$};

\draw [fill=white] (326.33,218.33) circle (13.8);
\node at (326.33,218.33) {$-$};

\draw [fill=white] (156.33,296.33) circle (13.8);
\node at (156.33,296.33) {$+$};

\draw [fill=white] (418.33,249) circle (13.8);
\node at (418.33,249) {$-$};

\draw [fill=white] (498.33,271) circle (13.8);
\node at (498.33,271) {$-$};

\draw [fill=white] (529,191.67) circle (13.8);
\node at (529,191.67) {$-$};

\draw [fill=white] (357.67,145.67) circle (13.8);
\node at (357.67,145.67) {$+$};

\draw [fill=white] (528.33,93) circle (13.8);
\node at (528.33,93) {$-$};

\draw [fill=white] (284.33,360.33) circle (13.8);
\node at (284.33,360.33) {$-$};

\draw [fill=white] (112.33,389.67) circle (13.8);
\node at (112.33,389.67) {$+$};
\end{tikzpicture}
      \caption{Low Temperature $(\beta = \beta_1)$}
    \end{subfigure}
    \hfill
    \begin{subfigure}[b]{0.45\textwidth}
      \centering
   \tikzset{every picture/.style={line width=0.4pt}} 

\begin{tikzpicture}[x=0.75pt,y=0.75pt,yscale=-0.45,xscale=0.45]

\draw [fill=white] (110,240) circle (13.8);
\node at (110,240) {$-$};

\draw [fill=white] (196,158) circle (13.8);
\node at (196,158) {$-$};

\draw [fill=white] (72,144) circle (13.8);
\node at (72,144) {$-$};

\draw [fill=white] (72.33,52.33) circle (13.8);
\node at (72.33,52.33) {$+$};

\draw [fill=white] (32,188) circle (13.8);
\node at (32,188) {$+$};

\draw [fill=white] (301,150) circle (13.8);
\node at (301,150) {$+$};

\draw [fill=white] (258,224) circle (13.8);
\node at (258,224) {$+$};

\draw [fill=white] (247,79) circle (13.8);
\node at (247,79) {$+$};

\draw    (72.33,64.33) -- (69.83,132.17) ;
\draw    (38.91,178.19) -- (65,154) ;
\draw    (200.83,146.67) -- (238.75,89.76) ;
\draw    (263.83,213.17) -- (294.17,160.5) ;
\draw    (83.67,56.67) -- (157.83,72.83) ;
\draw    (288.17,151.17) -- (208,158) ;
\draw    (79.83,152.83) -- (110,228) ;
\draw    (318.33,54.33) -- (394.75,72.76) ;
\draw    (82.83,137.5) -- (235.5,83.5) ;
\draw    (44.5,184.83) -- (246,221.67) ;
\draw    (122,240) -- (246,224) ;
\draw    (41.81,194.91) -- (99.17,234.5) ;
\draw    (83.83,146.83) -- (248.83,215.5) ;
\draw [fill=white] (169.33,76) circle (13.8);
\node at (169.33,76) {$-$};

\draw    (181.33,76) -- (234.5,76.83) ;
\draw    (169.33,88) -- (193.17,145.83) ;
\draw    (247,91) -- (258,212) ;
\draw    (179,83.5) -- (290.2,145.2) ;
\draw [fill=white] (306.33,54.33) circle (13.8);
\node at (306.33,54.33) {$-$};

\draw [fill=white] (348.33,297) circle (13.8);
\node at (348.33,297) {$-$};

\draw [fill=white] (408.33,75.67) circle (13.8);
\node at (408.33,75.67) {$+$};

\draw [fill=white] (59,285.67) circle (13.8);
\node at (59,285.67) {$-$};

\draw [fill=white] (457.67,175) circle (13.8);
\node at (457.67,175) {$+$};

\draw [fill=white] (254.33,285.67) circle (13.8);
\node at (254.33,285.67) {$-$};

\draw [fill=white] (437.67,120.33) circle (13.8);
\node at (437.67,120.33) {$+$};

\draw [fill=white] (217.67,345) circle (13.8);
\node at (217.67,345) {$+$};

\draw [fill=white] (326.33,218.33) circle (13.8);
\node at (326.33,218.33) {$-$};

\draw [fill=white] (156.33,296.33) circle (13.8);
\node at (156.33,296.33) {$+$};

\draw [fill=white] (418.33,249) circle (13.8);
\node at (418.33,249) {$+$};

\draw [fill=white] (498.33,271) circle (13.8);
\node at (498.33,271) {$+$};

\draw [fill=white] (529,191.67) circle (13.8);
\node at (529,191.67) {$-$};

\draw [fill=white] (357.67,145.67) circle (13.8);
\node at (357.67,145.67) {$-$};

\draw [fill=white] (528.33,93) circle (13.8);
\node at (528.33,93) {$+$};

\draw [fill=white] (284.33,360.33) circle (13.8);
\node at (284.33,360.33) {$+$};

\draw [fill=white] (112.33,389.67) circle (13.8);
\node at (112.33,389.67) {$-$};

\draw    (361.33,134.34) -- (400.67,86.34) ;
\draw    (270,224) -- (314.33,218.33) ;
\draw    (326.33,230.33) -- (343.6,286.41) ;
\draw    (311.33,66.34) -- (352,135.01) ;
\draw    (396.33,75.67) -- (259,79) ;
\draw    (360.33,297) -- (486.33,271) ;
\draw    (423.25,238.76) -- (446.75,182.76) ;
\draw    (357.2,289.61) -- (409.6,256.81) ;
\draw    (461.2,187.21) -- (489.2,263.61) ;
\draw    (313.33,151.01) -- (345.67,145.67) ;
\draw    (363.2,156.81) -- (412,239.21) ;
\draw    (469.67,175) -- (518.4,186.01) ;
\draw    (326.33,206.33) -- (350.4,156.01) ;
\draw    (338.33,218.33) -- (408,242.41) ;
\draw    (502.33,260) -- (523.33,202.01) ;
\draw    (179.33,67.67) -- (295.33,51.01) ;
\draw    (420.33,75.67) -- (518,87.67) ;
\draw    (430.67,109.67) -- (414,86.34) ;
\draw    (317.67,421.33) -- (401.25,389.76) ;
\draw    (423.17,386.67) -- (466.75,396.26) ;
\draw    (287.25,371.26) -- (301.25,416.76) ;
\draw    (335.83,218) -- (445.25,172.26) ;
\draw    (446.67,130.34) -- (529,179.67) ;
\draw [fill=white] (449,324.33) circle (13.8);
\node at (449,324.33) {$-$};

\draw [fill=white] (412.33,383.67) circle (13.8);
\node at (412.33,383.67) {$+$};

\draw [fill=white] (479,399) circle (13.8);
\node at (479,399) {$-$};

\draw [fill=white] (307,428.33) circle (13.8);
\node at (307,428.33) {$-$};

\draw    (71,285.67) -- (143.75,292.76) ;
\draw    (164.67,304.99) -- (206.75,337.76) ;
\draw    (124.33,389.67) -- (208.25,352.76) ;
\draw    (69.17,293.17) -- (205.67,345) ;
\draw    (225.67,353.99) -- (296.25,421.76) ;
\draw    (167.17,290.99) -- (242.33,285.67) ;
\draw    (292.17,351.49) -- (339.25,306.76) ;
\draw    (296.33,360.33) -- (400.33,383.67) ;
\draw    (118.33,246.33) -- (245.25,279.76) ;
\draw    (121.75,383.26) -- (245.75,294.26) ;
\draw    (491.67,280.49) -- (455.25,314.26) ;
\draw    (61.75,297.76) -- (103.25,380.26) ;
\draw    (258.17,296.49) -- (278.25,350.26) ;
\draw    (455.17,335.49) -- (479,387) ;
\draw    (439.75,332.76) -- (415.17,372.49) ;
\draw    (254.33,273.67) -- (258,236) ;
\draw    (266.67,232.99) -- (337.75,290.26) ;
\draw    (457.67,163) -- (519.75,102.76) ;
\draw    (296.17,354.99) -- (418.33,261) ;
\draw    (305.75,140.26) -- (396.75,80.26) ;
\draw    (118.17,229.99) -- (190.25,169.76) ;
\draw    (267.75,229.76) -- (406.33,249) ;
\draw    (529,179.67) -- (528.33,105) ;
\draw    (44.75,191.26) -- (184.25,162.26) ;
\draw    (56.29,273.71) -- (164.25,86.76) ;
\draw    (299.75,137.26) -- (303.25,66.76) ;
\draw    (369.67,145.67) -- (427.25,129.26) ;

\end{tikzpicture}
      \caption{High Temperature $(\beta = \beta_2)$}
    \end{subfigure}
    \caption{Two typical spin configurations over the Ising model at temperatures \(\beta_1, \beta_2\) with $\beta_1 \gg \beta_2$ (equivalently at a lower temperature $T_1$ and a higher temperature $T_2$ where $T \propto 1/\beta$). Each node has spin \(+1\) or spin \(-1\). The left panel shows the configuration at a lower temperature, with alignment producing large domains of positive and negative spin assignments, while the right panel exhibits a more disordered pattern. }
  \end{figure}
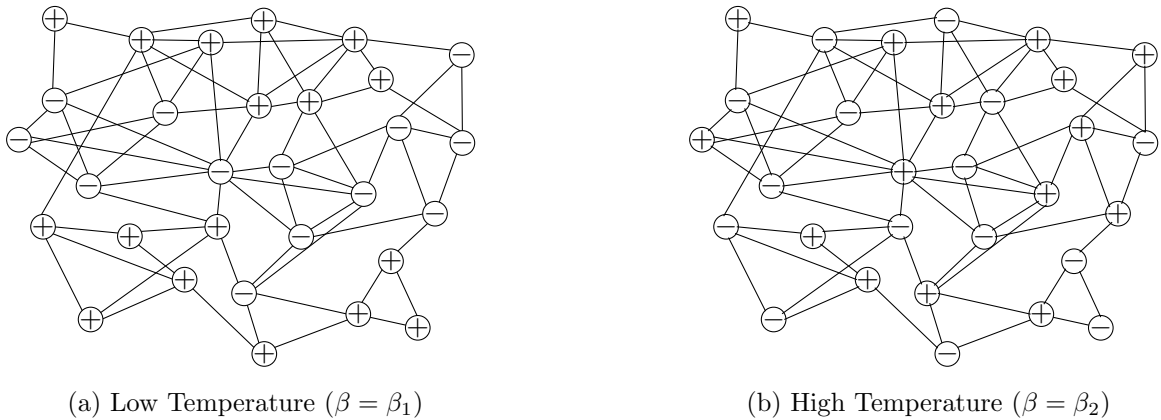
  
In many real-world applications, however, we face not only \emph{soft} constraints, which influence the model behavior by introducing correlations or dependencies, while the full support of the measure remains intact, but also \emph{hard} constraints: Certain configurations are outright forbidden, and entire regions of the configuration space are excluded from the support of the distribution. Such hard-constrained models (also called \textit{truncated}) arise naturally in applications involving high-dimensional, interconnected systems with strict feasibility requirements. One particularly notable example of which arises in the context of spatial transcriptomics—a biological framework for characterizing gene and protein expression in cells within organic tissue, relative to their spatial organization. The local relationships between cells are often represented as nodes in a graph with edges linking cells together that are close in physical space \citep{eng2019transcriptome}.  Associated with each node is a set of measurements of expression to capture how the relationships between cells impact the phenotype of a given node. The complex relationship between genes often forces certain configurations of expressions to be \textit{infeasible}. In fact, the phenomenon of \emph{lateral inhibition} can cause a cell expressing gene one to prevent its neighbors from expressing it as well, instead causing them to express gene two, as seen in the Notch-Delta pathway \cite{ghosh2001lateral}. The \emph{hard constraints} discussed in the above setting are not unique to spatial transcriptomics and are also commonly found in the context of channel assignments in communication networks \citep{zafer2006blocking}, carrier-sense multiple access networks \citep{durvy2006packing, durvy2009fairness}, and multicasting networks \citep{karvo2002blocking, luen2006nonmonotonicity} among others.

In this work, we study the problem of parameter estimation in $n$-dimensional Ising models that are hard-constrained to the satisfying assignments of a bounded-degree $k$-SAT formula $\Phi$ expressed in CNF (Conjunctive Normal Form), using one sample. This means that we have access to a sample from an Ising model, conditioned that it only takes values in a subset $S \subset \{\pm 1\}^n$ that is represented through the satisfying assignments of a $k$-SAT formula, adopting the framework from \cite{galanis2024learning}. The choice of the $k$-SAT framework is motivated by the fundamental observation that any truncation set $S \subseteq \{\pm 1\}^n$ can be exactly realized as the solution space of such a formula, provided the number of clauses is sufficiently large. This representation is particularly advantageous for statistical inference, as the structural parameters, in particular the clause size $k$ and the variable degree, serve as natural controls for the complexity of the induced correlations in the analysis of the Maximum Pseudolikelihood Estimator (MPLE).

More generally, learning in truncated MRFs using one or multiple samples has been studied in the context of discrete product distributions truncated by the set of satisfying assignments of $k$-SAT formulas \cite{galanis2024learning, galanis2025oneshotlearningksat}, more generally by truncated sets with combinatorial structure \cite{fotakis2022efficientparameterestimationtruncated}, the hard-core model, and integer valued spins constrained over proper $H-$colorings \cite{blanca2018structure, bhattacharya2021parameterestimationundirectedgraphical}. Our key deviation from the aforementioned works stems from the fact that the Ising model is \emph{not a product distribution}, and common tools used to control the concentration of measure on the hypercube do not apply. This, moreover, induces two sources of interdependence, namely from the model itself and from the structure of the truncation set. With this background in mind, we seek to address the following challenge. 
\begin{equation*}
    \parbox{0.85\linewidth}{
        \begin{center}
            \textit{Is it possible to efficiently learn discrete distributions with complex dependencies under hard constraints, having access to a single sample?}
        \end{center}
        }
\end{equation*}

\subsection{Our Results}
Our main contribution is an affirmative answer to the previous challenge, by providing a sufficient condition on the $k$-SAT formula that induces the truncation set, in terms of the maximum degree $\Delta$ of $G$. We begin by formally defining the class of truncated Ising measures that is the primary inquiry of this work. Given a graph $G$, with associated adjacency matrix $A$, and inverse temperature $\beta$ we define the pmf of a \emph{truncated} Ising model for any $\bsigma\in \{\pm 1\}^n$ to be
\[
\pr_{\beta, S}(\bsigma)  := \frac{1}{Z_{\beta, S} } \exp\left(\beta \bsigma ^\top A \bsigma\right) \mathbf{1}\{\bsigma \in S\}, \tag{Truncated Ising Model}\label{eq:truncated_ising}
\]
where $\mathbf{1}\{\bsigma \in S\}$ captures the indicator function determining if $\bsigma \in S$ and $Z_{\beta,S}$ is a renormalization term denoted the partition function.
In our case, $S$ is expressed as the set of satisfying assignments of a bounded degree $k$-SAT formula $\Phi_{n,k,d} = \Phi$ in constrained normal form. Formally, let $\Phi$ be a formula in conjunctive normal form over variables $\mathcal{V} = \{v_1, \dots, v_n\}$ and clause set $\mathcal{C}$, subject to the structural constraints that each clause contains exactly $k$ literals (variables or their negations) and each variable occurs in at most $d$ clauses. We identify each configuration $\bsigma \in \{-1, 1\}^n$ with a truth assignment via the mapping where variable $v_i$ is \emph{true} if $\sigma_i = 1$ and \emph{false} if $\sigma_i = -1$. A configuration $\bsigma$ is then said to satisfy $\Phi$ if and only if every clause $C \in \mathcal{C}$ contains at least one literal that evaluates to \emph{true} under this assignment. Recall, any subset of the hypercube can be represented as the set of satisfying assignments of a degree-$d$ $k$-SAT formula, provided that $d$ is sufficiently large. 

Our main result -- stated below -- is a sufficient condition on the degree $d$ of the formula $\Phi$ in terms of the size of each clause $k$ and the maximum degree $\Delta$ of the underlying graph $G$ of the Ising model, for computationally and statistically efficient estimation of the inverse temperature parameter $\beta$. 

\begin{theorem}[Informal Version of Theorem \ref{thm:main}]
        Let $\bsigma$ be a single sample from a truncated, $n$-dimensional Ising model with inverse temperature $\beta^*$, where the truncation set is captured by the satisfying assignments of a $k$-SAT formula $\Phi_{n,k,d}$ and the underlying graph $G$ has maximum degree $\Delta$ of order $o(n^{1/6})$.
        For $n$ sufficiently large, $\beta^*$ is $\mathcal{O}(1)$ and  $k \ge \Omega(4\Delta^3(1 + \log(d^2k +1)))$, there exists an $\mathcal{O}(\Delta^3 n\log(n))-$algorithm which takes as input $\bsigma$ and outputs an estimator $\hat{\beta}$ such that   \[
            \pr_{\beta^*, S}\left[ |\est{\beta} - \beta^* | \le \frac{c\Delta^3}{\sqrt{n}}\right] \ge 99\%, \, \text{ for a constant  $c > 0$ independent of $n,\Delta,d,k$}.
            \]
      
\end{theorem}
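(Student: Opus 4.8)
The plan is to follow the maximum pseudolikelihood (MPLE) blueprint of \cite{chatterjee2007estimation} and \cite{galanis2024learning}, adapted to the non-product truncated Ising setting. Write the pseudolikelihood objective $\varphi(\beta) := \sum_{i=1}^n \log \pr_{\beta,S}(\sigma_i \mid \bsigma_{-i})$, where the conditional probabilities are taken with respect to the truncated model~\eqref{eq:truncated_ising}. The estimator $\est\beta$ is the maximizer of $\varphi$. Since $\varphi$ is concave in $\beta$ (the conditional log-likelihood of an exponential family in the natural parameter), it suffices to control the derivative $\varphi'(\beta)$: show that $\varphi'(\beta^*)$ is small with high probability (a \emph{first-order / unbiasedness} estimate) and that $-\varphi''(\beta)$ is bounded below by a positive constant on the relevant interval (a \emph{strong concavity / curvature} estimate). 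Standard calculus then gives $|\est\beta - \beta^*| \lesssim |\varphi'(\beta^*)| / \inf_\beta(-\varphi''(\beta))$, and the claimed $O(\Delta^3/\sqrt n)$ rate follows once $|\varphi'(\beta^*)| = O(\Delta^3)$ and $-\varphi'' = \Omega(n/\Delta^{?})$ with the right powers.

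For the first-order term, I would first compute $\varphi'(\beta^*)$ explicitly. In the untruncated Ising model $\varphi'(\beta)$ is a sum of local terms $\sigma_i m_i - \tanh(\beta m_i)m_i$ with $m_i = (A\bsigma)_i$; under truncation the $\tanh$ is replaced by a conditional expectation $\E[\sigma_i \mid \bsigma_{-i}]$ that also accounts for whether flipping $\sigma_i$ keeps $\bsigma$ in $S$. Crucially, $\varphi'(\beta^*)$ is a sum of terms each of which is mean-zero \emph{conditioned on} $\bsigma_{-i}$, so $\E[\varphi'(\beta^*)] = 0$. To get concentration I would use a bounded-difference / Chatterjee-style exchangeable-pairs argument, or the Dobrushin-uniqueness machinery: the condition $k \gtrsim \Delta^3 \log(d^2 k)$ is exactly what one expects to force the \emph{combined} influence of the Ising interactions and the SAT constraints to satisfy a Dobrushin-type contraction (each variable's marginal is barely perturbed by any other, because a random $k$-SAT clause is satisfied with probability $1-2^{-k}$ and the local neighborhood has size $O(\Delta + dk)$). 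Under such a condition, concentration of measure on $\{\pm1\}^n$ holds and $\mathrm{Var}(\varphi'(\beta^*)) = O(n \cdot \mathrm{poly}(\Delta))$, so $|\varphi'(\beta^*)| = O(\sqrt n \cdot \mathrm{poly}(\Delta))$ by Chebyshev — which, divided by the $\Omega(n)$ curvature lower bound, yields the rate. The $\Delta^3$ in the final bound presumably tracks the degree-dependence of this variance proxy.

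The curvature lower bound $-\varphi''(\beta) = \sum_i m_i^2 \,\mathrm{Var}(\sigma_i \mid \bsigma_{-i})\ \gtrsim n$ is where I expect the real fight. One needs (a) that a constant fraction of vertices have $m_i \neq 0$, i.e. $\sum_i \mathbf 1\{m_i \neq 0\}$ (or $\sum_i m_i^2$) is $\Omega(n)$ with high probability, and (b) that for such vertices the truncated conditional variance of $\sigma_i$ is bounded away from $0$ — meaning both signs of $\sigma_i$ are feasible (consistent with $S$) with non-negligible probability. Part (b) is the genuinely new difficulty relative to the untruncated case: a hard SAT constraint could in principle \emph{pin} $\sigma_i$ to one value regardless of $\beta$, killing the variance. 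This is precisely why we need $k$ large relative to $d$ and $\Delta$: with $k \gtrsim \Delta^3\log(d^2 k)$, a union bound over the $\le d$ clauses containing $v_i$ shows that the probability (over the sample) that some clause becomes \emph{unit} (all but $\sigma_i$ falsifying it) is exponentially small, so with high probability $\sigma_i$ is free to flip. For part (a) I would argue that if $m_i = 0$ for $\Omega(n)$ vertices then $\bsigma$ lies in a low-dimensional "balanced" set whose probability under $\pr_{\beta^*,S}$ is exponentially small — again invoking concentration of the local fields $m_i$ around their nonzero typical values, which the Dobrushin/temperature condition supplies. Combining a high-probability lower bound on $-\varphi''$ over a suitable $O(1)$-length interval around $\beta^*$ (using Lipschitzness of $\varphi''$ in $\beta$ to pass from pointwise to uniform) with the first-order estimate closes the argument; the running time $O(\Delta^3 n\log n)$ comes from evaluating $\varphi'$ and doing a binary search for its root, with the $\log n$ for the target accuracy and $\Delta^3$ for the per-vertex local computation.
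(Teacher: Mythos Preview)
Your high-level blueprint (MPLE, bound $\varphi'(\beta^*)$ in probability, lower-bound $-\varphi''$, combine) matches the paper, and the first-derivative step via exchangeable pairs is exactly what the paper does. The gap is in the curvature step, and it is a real one.

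You repeatedly invoke ``Dobrushin-uniqueness machinery'' and ``concentration of measure on $\{\pm1\}^n$'' to handle both the variance bound and the lower bound on $\sum_i m_i^2$. But the theorem is stated for $\beta^* = \mathcal{O}(1)$ with no smallness assumption, so $\beta^*$ can sit well beyond the Dobrushin/uniqueness threshold; moreover the truncation set $S$ can be disconnected in Hamming distance, so Glauber dynamics is not even ergodic and the log-Sobolev/Dobrushin toolkit simply does not apply here. The paper makes a point of this. Likewise, your proposed union bound for flippability (``the probability that some clause containing $v_i$ becomes unit is exponentially small'') implicitly treats the other $k-1$ spins in a clause as if they were independent; under the Ising measure at arbitrary constant $\beta$ they are correlated in a way you have no control over, so the $2^{-k}$-type bound is unjustified. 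The same issue undermines your argument for part~(a): you cannot argue that the ``balanced'' set $\{m_i=0\text{ for many }i\}$ has small measure without some concentration input, which you do not have.

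The paper's workaround is the missing idea: it first constructs (via a Lov\'asz Local Lemma argument over random vertex orderings) an independent set $I\subset V$ in the Ising graph $G$ that intersects every clause of $\Phi$ in at least $k/(4\Delta^3)$ variables. Conditioning on $\bsigma_{V\setminus I}$ then collapses the Ising model on $I$ to a genuine \emph{product} distribution (no edges inside $I$), at which point LLL-style arguments \`a la \cite{galanis2024learning} legitimately bound the probability that a variable is pinned by its clauses. The lower bound on the magnetizations is obtained not by a global concentration argument but by a conditional one: a bijection $h$ pairs each $i\in I'$ with a neighbor $h(i)\in V\setminus I$, and one shows $\mathbb{E}[m_i^2\mid \bsigma_{-h(i)}]\ge c\,e^{-B}/\Delta^2$ directly from the edge $A_{ih(i)}$, then controls the deviation of $\sum_i m_i^2 e_i$ from its conditional mean by a second exchangeable-pairs variance bound. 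The $\Delta^3$ in the rate comes from the size of $I$ and the clause-coverage fraction, not from a Dobrushin influence sum.
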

\begin{remark*}[Consistency]
    Notice when $\Delta$ is $\mathcal{O}(1)$, our estimate achieves $\mathcal{O}(1/\sqrt{n})$-consistency, matching the minimax rate for parameter estimation. Likewise, the restriction of $\Delta$ to be on the order of $o(n^{1/6})$ ensures the error of our estimator $\hat{\beta}$ is asymptotically diminishing and in turn consistent.
  \end{remark*}
  \begin{remark*}[Lower Bounds]
 We further note that Galanis et al. \cite{galanis2024learning} recently established a lower bound for learning discrete Boolean distributions, demonstrating that estimation is information-theoretically impossible when $k \le \log(d) - \log(k) + \Theta(1)$. This result relies on constructing a $k$-SAT instance with a single satisfying assignment. Importantly, this lower bound extends to our setting, implying that our results are optimal up to a constant factor when $G$ is a graph of bounded degree. While subsequent work \cite{galanis2025oneshotlearningksat} has provided a tighter bound, those techniques do not immediately translate to our context; the quadratic Hamiltonian induces a rugged energy landscape distinct from boolean product setting, necessitating different analytical tools.
  \end{remark*}

\subsection{Technical Overview}
Given a single-parameter exponential family like we focus on, a natural approach to estimating the parameter is to find the maximum likelihood estimate. However, the computational intractability of the partition function $Z_{\beta, S}$ for Ising models (see \cite{GALANIS_2016} and the references therein) renders this approach infeasible. In light of these challenges, we utilize the maximum pseudolikelihood estimator introduced by \cite{besag1975statistical} and provided below. 
\[
\label{MPLE} \tag{MPLE} \hat{\beta} := \argmax_{\tilde{\beta}} \prod_{i \in [n]} \pr_{\tilde{\beta},S}(\sigma_i | \bsigma_{-i}) = \argmin_{\tilde{\beta} } -\sum_{i \in [n]} \log(\pr_{\tilde{\beta},S}(\sigma_i | \bsigma_{-i})) := \argmin_{\tilde{\beta} } \phi(\tilde{\beta}; A, \bsigma).
\] 
We note that the second equality holds due to the monotonicity of the $\log(\cdot)$ function. 
Towards demonstrating the consistency of the maximum (log)-pseudolikelihood estimate $\hat{\beta}$, we follow the first and second derivative paradigm outlined by Chatterjee \cite{chatterjee2007estimation, daskalakis2019regression,galanis2024learning, galanis2025oneshotlearningksat}, which involves showing,
\begin{itemize}
    \item $\pr_{\beta^*, S} [ \nabla_\beta \phi(\beta^*; A, \bsigma)  \le \mathcal{O}(\sqrt{n})] \ge 1- o(1)$,
    \item $\inf_{\beta \in (-B,B)}  \nabla^2_\beta \phi(\beta; A, \bsigma) \ge \Omega(n/\Delta^3)$ with probability $1- o(1)$ over $\bsigma \sim \pr_{\beta^*, S}$.
\end{itemize}
The first condition ensures the derivative of the log-pseudolikelihood objective with respect to the true model parameters $\beta$ divided by $n$ is \emph{close} to $0$, which is the value of the gradient of $\phi$ computed at the estimator, which in turn implies $\beta$ is an \emph{approximate stationary point of the objective}. 
Moreover, by demonstrating that the second derivative of the objective $\nabla^2_\beta \phi(\beta; A, \bsigma)$ is $\Omega(n/\Delta^3)$-strongly convex with probability $1 - o(1)$ over a draw of the truncated Ising model, it implies that approximate stationary points of the objective are close in Euclidean distance to the optimum. These two facts combine to show the proximity of the optimum of the log-pseudolikelihood objective to $\beta$.

Showing both of these conditions hold simultaneously is made complex due to the highly non-uniform measure induced by conditional dependencies of both the interaction matrix $A$ and the truncation set $S$. To demonstrate the first condition, we craft upper bounds on the variance of the first derivative of $\phi$, using the technique of exchangeable pairs pioneered by \cite{chatterjee2007estimation}, which, when combined with Chebyshev's inequality, implies an upper bound in probability. The primary challenge of this work lies in establishing the second condition. Previous works which used the deterministic structure of the interaction matrix to guarantee the concavity of the objective. By contrast, in our setting, the second derivative is governed by the local geometry of the truncation set $S$ in the vicinity of the sample $\bsigma$. The Hessian $\nabla_{\beta}^2$ is computed by summing a function of the local fields $m_i(\bsigma) = \langle A_i, \bsigma \rangle$, where the support of this sum is limited to the valid neighbors of $\bsigma$ in $S$ with respect to the Hamming distance \footnote{The Hamming distance metric counts the number of differing indices between two vectors. Thus, a neighbor at Hamming distance one is obtained by flipping exactly one index of $\bsigma$.} 

To show that, with high probability under the truncated Ising model, a sample \(\bsigma\) has many neighboring configurations at Hamming distance 1, we construct an argument based on the Lovász Local Lemma (LLL), to guarantee the existence of a large number of satisfying assignments to \(\Phi\) that differ from \(\bsigma\) in exactly one bit. Using this powerful tool, however, requires control of the probabilities of partial spin assignments, which, given the tendency of the Ising model to contract into arbitrarily small portions of the hypercube and exhibit long-range correlations, can prove challenging. Counteracting this, our argument conditions on nodes outside of a specially crafted independent set $I$ of the graph $G$, which preserves the marginal distribution of any given spin $\pr_{\beta^*}[\sigma_i | (\sigma_1, ..., \sigma_{i-1}, \sigma_{i+1}, ..., \sigma_n)]$ despite limited to a small fraction of the support of the measure, and collapses the Ising model into a product measure. To conclude the strong convexity of $\phi(\cdot; A, \bsigma)$ it remains to show a lower bound on $m_i(\bsigma)$ in probability via a coupling argument which exploits the underlying edge structure of the connectivity graph $G$. 

We additionally note that the recent results on estimating Ising models using the pseudolikelihood approach \cite{dagan2020learningisingmodelsmultiple} rely on sophisticated concentration inequalities derived from the fast mixing nature of Glauber dynamics on the Boolean hypercube and their relation to the Gibbs measure in order to conclude the strong convexity of the PL objective and bound the gradient with high probability.  In our model, these powerful tools are not applicable due to the fragmented nature of the truncation set, rendering the domain of our measure into disconnected islands and make the Glauber dynamics \emph{non-Ergodic}; the inequalities only imply concentration within a connected component of $S$, which may be too small to be informative. We lastly emphasize that our polynomial time algorithm \emph{does not} scale with the size of the truncation set, enabling inference despite $S$ being small with regards to the entire hypercube.

\subsection{Related Work}
The literature of parameter estimation in Markov Random Fields, and over hard-constrained and truncated measures, is vast. In light of this, we mention a brief collection of works relevant to our setting, and defer additional background and discussion to the appendix. Single sample estimation initiated by \citep{besag1975statistical, chatterjee2007estimation} has yielded a rich bounty of results ranging from the setting of the Ising model \citep{chatterjee2007estimation, bhattacharya2018inference, ghosal2020joint, dagan2020learningisingmodelsmultiple}, peer dependent logistic regression \citep{daskalakis2019regression, mukherjee2022highdimensionallogisticregression, daskalakis2020logisticregressionpeergroupeffectsinference}, higher order Ising models \citep{mukherjee2022estimation}, and robust inference over discrete distributions \citep{diakonikolas2021outlier}. \cite{bhattacharyya2021efficient} demonstrated the feasibility of single-sample learning in the context of the hard-core model, a size-weighted distribution over all independent sets in a graph $G$; following up on this, \citep{galanis2024learning, galanis2025oneshotlearningksat} studied parameter inference in a product distribution truncated by the satisfying assignments of a $k$-SAT formula. The hard-constrained models studied in this work are a subset of the literature analyzing efficient parameter estimation and learning in truncated  \cite{daskalakis2019computationally, daskalakis2018efficient, fotakis2022efficientparameterestimationtruncated, de2023testing, nagarajan2020analysis} and censored distributions \cite{lugosi2024hardness, plevrakis2021learning, fotakis2021efficient}.

\section{Preliminaries}
\subsection{Notation}
We denote the set of $\{1, 2, ..., n\}$ as $[n]$. Vectors $\bx \in \R^d$ are denoted with boldface, and matrices $M \in \R^{m \times n}$ with capital letters. Given a vector $\mathbf{a} = (a_1, a_2, \dots , a_n)$ and a subset $I \subseteq [n]$, let $\mathbf{a}_I$ denote the length-$|I|$ coordinate vector $\{a_i : i \in I\}$, and $\mathbf{a}_{-i}$ denote the vector $\mathbf{a}$ with the $i-$th element removed. We denote the probability of an event $\mathcal{A}$ over the \emph{untruncated} measure parameterized by $\beta $ as $\pr_{\mu_{\beta}}$ and over the \emph{truncated} Ising measure as $\mu_{G, \beta, S} (\mathcal{A}) = \pr_{\beta,S}[\mathcal{A}] = \pr_{\mu_\beta}[\mathcal{A}|S]$. We often remove the explicit dependence on $S$ and $G$ for clarity of explanation. 

We will say an estimator $\hat{\beta}$ is consistent with a rate $\mathcal{O}(f(n))$ (or equivalently $f(n)$-consistent) with
respect to the true parameter $\beta^*$ if there exists an integer $n_0$ and a constant $C > 0$ such that for every $n \ge n_0$, with probability at least $1 - o(1)$,
\[
|\hat{\beta} - \beta^*| \le C f(n).
\]
Lastly, we call an entry $\sigma_i$ of $\bsigma$ to be \emph{flippable} if both $(\sigma_i, \bsigma_{-i})$ and $(-\sigma_i, \bsigma_{-i})$ lie in $S$, and moreover we denote by $e_i(\bsigma)$ the indicator of the event that $\sigma_i$ of $\bsigma$ is flippable.

\subsection{Maximum Pseudo-Likelihood Estimation}
Towards explicitly computing the (log)-pseudolikelihood objective and its associated derivatives, we begin by finding the \emph{conditional} distributions of the individual spins conditioned on the rest of the assignment, $\pr_{\beta}(\sigma_i | \bsigma_{-i})$. Notice, when $\sigma_i$ is \emph{not} flippable, the conditional distribution is trivially one, while for flippable $i$, the probability is given by the following: 
\[
\bold{Pr}_{\beta}(\sigma_i|\bm{\sigma}_{-i}) = \frac{\exp(\beta m_i(\bsigma) \sigma_i)}{ \exp(-\beta  m_i(\bsigma)) + \exp(\beta m_i(\bsigma))} \, \text{ where } m_i(\bsigma) := \sum_{j= 1}^n A_{ij}\sigma_j.
\] 
Denoting $\mathcal{F}(\bm{\sigma})$ to be the set of flippable variables in $\bsigma$, the negative log pseudo-likelihood objective $\phi(\beta; A, \bsigma)$ can be written explicitly as follows: 
\begin{equation}
\label{eq:psudo}
\begin{aligned}
    \phi(\beta; A, \bm{\sigma}) &:= -\sum_{i \in \filter(\bsigma)} \log\left( \bold{Pr}_{\beta}( \sigma_i|\bm{\sigma}_{-i})\right) \\&= \sum_{i \in \mathcal{F}(\bm{\sigma})} \left(\log\left(  \exp(- \beta m_i(\bsigma)) + \exp(\beta m_i(\bsigma))\right) - \beta m_i(\bsigma)\sigma_i\right).
    \end{aligned}
\end{equation}
In the sequel, we drop the reference to $A$ in the pseudo-likelihood when the interaction matrix is clear. The first and second derivatives of the objective (\ref{eq:psudo}) with respect to the inverse temperature parameter $\beta$, denoted by  $\phi_1(\beta;\bsigma)  = \nabla_\beta \phi(\beta; A, \bsigma)$,  $\phi_2(\beta;\bsigma)  = \nabla^2_\beta \phi(\beta; A, \bsigma)$  are given below:
\begin{align*}
     & \phi_1(\beta; \bsigma) =\sum_{i \in \mathcal{F}(\bm{\sigma})} ( m_i(\bsigma)( \tanh(\beta m_i(\bsigma)) - \sigma_i)), \quad \phi_2(\beta; \bsigma) = \sum_{i \in \filter(\bsigma)} \frac{ m_i(\bsigma)^2}{ \cosh^2(\beta m_i(\bsigma)) }.
\end{align*}
Note that the negative log-pseudo-likelihood is convex as the second derivative is \emph{always} non-negative (sum of squares). 

\subsection{An Auxiliary Lemma}
In our analysis, demonstrating the consistency of our estimator, we frequently require the existence of combinatorial objects, such as satisfying assignments to a $k$-SAT formula or fulfilling specific constraints without their explicit construction. Such existence questions often reduce to avoiding a collection of undesirable events, each of which occurs with low probability and exhibits limited dependence on the others. This setting is naturally addressed using the probabilistic method, and in particular, the Lovász Local Lemma.
\begin{lemma}[Symmetric Lovász Local Lemma]
\label{lem:sym_lll}
Given a collection of events $\{A_i\}_{i \in [n]}$, where each event \(A_i\) satisfies \(\Pr(A_i)\le p\) and each event is mutually independent from all but at most \(d\) other events.  If
\[
  e\cdot p \cdot (d+1)\;\le\;1, \tag{Symmetric LLL}
\]
then $\pr\left(\bigcap_{i=1}^n \overline{A_i}\right) > 0$, where $\overline{A_i}$ denotes the complement of $A_i$ and $e$ refers to Euler's number.
\end{lemma}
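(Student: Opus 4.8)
The plan is to derive the symmetric statement as a special case of the general (asymmetric) Lovász Local Lemma, whose proof in turn reduces to a single inductive claim. For each $i \in [n]$ fix a set $\Gamma(i) \subseteq [n] \setminus \{i\}$ with $|\Gamma(i)| \le d$ such that $A_i$ is mutually independent of $\{A_j : j \notin \Gamma(i) \cup \{i\}\}$; such a set exists by hypothesis. I would take $x_i := \tfrac{1}{d+1}$ for every $i$, assuming $d \ge 1$ (if $d = 0$ the events are fully independent and $\Pr(\bigcap_i \overline{A_i}) = \prod_i (1 - \Pr(A_i)) \ge (1-p)^n > 0$ since $p \le 1/e < 1$, so there is nothing to prove).

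The heart of the argument is the claim that for every $i \in [n]$ and every $S \subseteq [n] \setminus \{i\}$ with $\Pr(\bigcap_{j \in S}\overline{A_j}) > 0$,
\[
\Pr\Bigl(A_i \,\Big|\, \bigcap_{j \in S} \overline{A_j}\Bigr) \;\le\; \frac{1}{d+1},
\]
which I would prove by induction on $|S|$. The base case $|S| = 0$ is just $\Pr(A_i) \le p \le \tfrac{1}{e(d+1)} \le \tfrac{1}{d+1}$, using the hypothesis $e p (d+1) \le 1$. For the inductive step, partition $S = S_1 \cup S_2$ with $S_1 = S \cap \Gamma(i)$ and $S_2 = S \setminus \Gamma(i)$, and write
\[
\Pr\Bigl(A_i \,\Big|\, \bigcap_{j \in S} \overline{A_j}\Bigr) = \frac{\Pr\bigl(A_i \cap \bigcap_{j \in S_1}\overline{A_j} \mid \bigcap_{j \in S_2}\overline{A_j}\bigr)}{\Pr\bigl(\bigcap_{j \in S_1}\overline{A_j} \mid \bigcap_{j \in S_2}\overline{A_j}\bigr)}.
\]
The numerator is at most $\Pr(A_i \mid \bigcap_{j\in S_2}\overline{A_j}) = \Pr(A_i) \le p$, since $A_i$ is independent of the events indexed by $S_2$. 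Writing $S_1 = \{j_1,\dots,j_r\}$ with $r \le |\Gamma(i)| \le d$, the denominator equals $\prod_{l=1}^{r}\bigl(1 - \Pr(A_{j_l} \mid \bigcap_{m<l}\overline{A_{j_m}} \cap \bigcap_{j\in S_2}\overline{A_j})\bigr)$, and each conditioning set appearing here has at most $(r-1) + |S_2| \le |S|-1$ elements, so the inductive hypothesis bounds each factor below by $1 - \tfrac{1}{d+1}$. Combining and using $r \le d$,
\[
\Pr\Bigl(A_i \,\Big|\, \bigcap_{j \in S} \overline{A_j}\Bigr) \;\le\; \frac{p}{\bigl(1 - \tfrac{1}{d+1}\bigr)^{d}} \;=\; p\,\Bigl(1 + \tfrac1d\Bigr)^{d} \;<\; p\,e \;\le\; \frac{1}{d+1},
\]
where the strict inequality is the elementary bound $(1+\tfrac1d)^d < e$ and the last step is again $ep(d+1)\le 1$; this proves the claim.

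With the claim in hand, apply it with $S = \{1, \dots, i-1\}$ for each $i$ and telescope:
\[
\Pr\Bigl(\bigcap_{i=1}^n \overline{A_i}\Bigr) = \prod_{i=1}^{n} \Pr\Bigl(\overline{A_i} \,\Big|\, \bigcap_{j < i}\overline{A_j}\Bigr) \;\ge\; \prod_{i=1}^{n}\Bigl(1 - \tfrac{1}{d+1}\Bigr) \;=\; \Bigl(1 - \tfrac{1}{d+1}\Bigr)^{n} \;>\; 0,
\]
which is the assertion. I expect the one genuinely delicate point to be organizing the inductive step so that every conditional probability appearing in it is conditioned on an intersection of strictly fewer than $|S|$ complemented events — this is precisely what lets the induction close — together with carrying positivity of the relevant conditioning probabilities through the induction, which follows from the same lower bounds. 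Everything else (the chain-rule expansion of the denominator and the inequality $(1+1/d)^d<e$) is routine.
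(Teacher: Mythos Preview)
Your proof is correct and follows the standard textbook derivation of the symmetric Lovász Local Lemma from the asymmetric version via the choice $x_i = \tfrac{1}{d+1}$ and the inductive bound on conditional probabilities. The paper itself does not prove this lemma: it is stated in the preliminaries as a known auxiliary result and used as a black box, so there is no ``paper's own proof'' to compare against. Your argument is exactly the classical one (see, e.g., Alon--Spencer), and the one point you flag as delicate---that the conditioning sets in the chain-rule expansion of the denominator have strictly fewer than $|S|$ elements---is handled correctly.
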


\section{Learning Truncated Ising Models}
In this section, we prove our main result, i.e., we provide a sufficient condition on the "complexity" of the $k$-SAT formula $\Phi$, (and by extension the truncation set $S$) in terms of the size of the clauses $k$, the degree of the formula $d$ and the maximum degree of the graph $\Delta$ for efficient estimation of the inverse temperature parameter $\beta$. In advance of proving our result, we lay out some mild assumptions on the interaction matrix $A$ and Ising model $\mu_{G, \beta, S}$ which have been employed in past works \cite{galanis2024learning, galanis2025oneshotlearningksat, dagan2020learningisingmodelsmultiple, daskalakis2019regression, chatterjee2007estimation, bhattacharya2018inference}.
\begin{assumption}
\label{asmp:model}
    Within our model (\ref{eq:truncated_ising}), we assume 
    \begin{itemize}
        \item $A$ is the adjacency matrix of a connected graph over $n$ nodes, with maximum degree $\Delta$ being $o(n^{1/6})$ and entries $A_{ij} \in \left\{-\frac{1}{\Delta}, +\frac{1}{\Delta}\right\}$ representing positive or negative interactions. 
        \item The inverse temperature parameter $\beta$ lies in the open interval $(-B, B)$.
        \item The truncation set $S$ is the set of satisfying assignments to a $k$-SAT formula $\Phi$ in conjunctive normal form.
    \end{itemize}
\end{assumption}
\begin{remark*}[Assumptions]
    The assumption of graph connectivity ensures that the interaction matrix contains sufficient signal energy for consistent parameter estimation. This requirement plays a role analogous to the lower bounds on the Frobenius norm of the interaction matrix posited in \cite{chatterjee2007estimation, daskalakis2019regression, bhattacharya2018inference}. Absent such a structural guarantee, the log-partition function $Z_{\beta, S}$ may fail to diverge asymptotically with $n$, which prevents the pseudolikelihood objective from achieving consistency. This pathology is exemplified by the Curie-Weiss model with couplings scaling as $1/n$; there, the Frobenius norm of the interaction matrix is $\mathcal{O}(1)$. In contrast, our connectivity and degree assumptions imply that the squared Frobenius norm of $A$ scales as $\Omega(n/\Delta^2)$, which diverges to infinity, thereby ensuring the problem is well-posed.
    
    While general specifications of the Ising model allow for arbitrary coupling strengths $A_{ij} \in \mathbb{R}$, we restrict the interaction magnitudes to a uniform value $|A_{ij}| = 1/\Delta$ towards isolating the structural component of peer influence. This constraint posits that the social pressure exerted by any single neighbor is functionally equivalent, thereby modeling a ``peer effect'' \cite{bertrand2000network, sacerdote2001peer, duflo2003role} where individual decisions are driven by the collective consensus of the local group rather than the idiosyncratic intensity of specific dyadic relationships. By imposing this homogeneity, we ensure that the model captures the emergent coordination arising from network topology and group alignment, rather than being dominated by outliers with arbitrarily strong pairwise connections.
\end{remark*}
The formal version of our main result is given as follows. 
\begin{theorem}[Main result]
    \label{thm:main}
       Let $\bsigma$ be a single sample from a truncated, $n$-dimensional Ising model satisfying Assumption \ref{asmp:model}.
        For all $k \ge  \frac{4\Delta^3(1 + \log(d^2k +1))}{\log(1 + \exp(-2B))} $, if $n$ is sufficiently large, there exists an $\mathcal{O}(\Delta^3 n\log n)-$time algorithm which takes as input $\bsigma$ and outputs an estimator $\hat{\beta}$ such that   \[
            \pr_{\beta^*, S}\left[ |\est{\beta} - \beta^* | \le \frac{c\Delta^3}{\sqrt{n}}\right] \ge 99\%, \, \text{ for a constant  $c > 0$ independent of $n,\Delta,d,k$}.
            \]
\end{theorem}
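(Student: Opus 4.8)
\textbf{Proof strategy for Theorem \ref{thm:main}.}
The plan is to follow the first-and-second-derivative paradigm of Chatterjee, as outlined in the technical overview, establishing the two probabilistic bounds on $\phi_1(\beta^*;\bsigma)$ and $\inf_{\beta\in(-B,B)}\phi_2(\beta;\bsigma)$, and then combining them via a deterministic convex-analysis argument. Concretely, if $|\phi_1(\beta^*;\bsigma)|\le c_1\sqrt{n}$ and $\phi_2(\beta;\bsigma)\ge c_2 n/\Delta^3$ uniformly on $(-B,B)$, then since $\hat\beta$ is the (unique, by strong convexity) minimizer of $\phi$, a Taylor expansion of $\phi_1$ around $\beta^*$ gives $0=\phi_1(\hat\beta;\bsigma)=\phi_1(\beta^*;\bsigma)+\phi_2(\xi;\bsigma)(\hat\beta-\beta^*)$ for some intermediate $\xi$, whence $|\hat\beta-\beta^*|\le |\phi_1(\beta^*;\bsigma)|/\phi_2(\xi;\bsigma)\le (c_1/c_2)\,\Delta^3/\sqrt n$; a union bound over the two failure events, each $o(1)$, yields the $99\%$ guarantee for $n$ large. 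The runtime claim follows by noting that computing $\phi$ and its derivatives requires evaluating the $m_i(\bsigma)$ (total work $O(\Delta n)$) and checking flippability of each coordinate against $\Phi$, then running a one-dimensional convex minimization to additive accuracy $1/\sqrt n$ on the interval $(-B,B)$, which costs $O(\log n)$ iterations.

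For the \emph{first condition}, I would bound $\mathrm{Var}_{\beta^*,S}[\phi_1(\beta^*;\bsigma)]$ using the method of exchangeable pairs: construct the pair $(\bsigma,\bsigma')$ where $\bsigma'$ is obtained from $\bsigma$ by one step of Glauber dynamics \emph{within} $S$ (picking a uniformly random coordinate and resampling it from the correct conditional, which keeps us in $S$ when the coordinate is flippable and does nothing otherwise). One checks $\mathbb{E}[\phi_1(\beta^*;\bsigma')\mid\bsigma]$ relates to $\phi_1(\beta^*;\bsigma)$ through an antisymmetric function $F(\bsigma,\bsigma')$, and Chatterjee's variance lemma then bounds $\mathrm{Var}[\phi_1]$ by (a constant times) $\mathbb{E}[\text{local discrepancy}]$, which is $O(n)$ because each coordinate flip changes only $O(\Delta)$ terms of the sum, each of bounded magnitude since $|m_i(\bsigma)|\le 1$ and $\beta^*=O(1)$. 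Chebyshev then gives $\Pr[|\phi_1(\beta^*;\bsigma)|>t\sqrt n]\le \mathrm{Var}/(t^2 n)=o(1)$ for suitable $t$; note $\mathbb{E}[\phi_1(\beta^*;\bsigma)]=0$ should follow from a stationarity/integration-by-parts identity for the truncated measure (or can be absorbed since the gradient at the truth has zero mean under the pseudolikelihood score).

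The \emph{second condition} is the crux and where I expect the real difficulty. Since $\phi_2(\beta;\bsigma)=\sum_{i\in\mathcal F(\bsigma)} m_i(\bsigma)^2/\cosh^2(\beta m_i(\bsigma))$ and $\cosh^2(\beta m_i)\le\cosh^2(B)=O(1)$, it suffices to show that with probability $1-o(1)$ there are $\Omega(n/\Delta^3)$ coordinates $i$ that are simultaneously (a) flippable in $\bsigma$ and (b) have $|m_i(\bsigma)|$ bounded below by an absolute constant. For (b), I would run a coupling argument on the underlying graph $G$: conditioning on the spins outside a suitable neighborhood, the magnetization $m_i(\bsigma)=\sum_j A_{ij}\sigma_j$ is a sum of $\le\Delta$ terms of magnitude $1/\Delta$, and one shows it avoids a small ball around $0$ with constant probability — using that, after conditioning away, the relevant spins retain enough randomness (anti-concentration of a bounded sum). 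For (a), the flippability event $e_i(\bsigma)$ requires that flipping bit $i$ keeps all clauses of $\Phi$ satisfied; here I would invoke the symmetric Lovász Local Lemma (Lemma \ref{lem:sym_lll}) to argue that, conditioned appropriately, a uniformly-random-looking restriction of $\bsigma$ has many flippable coordinates: the ``bad'' event for coordinate $i$ is that some clause containing $v_i$ becomes unsatisfied upon flipping, which (since each variable sits in $\le d$ clauses and each clause has $k$ literals) has probability $\le d\cdot 2^{-(k-1)}$ and depends on $O(d^2 k)$ other coordinates' bad events, so the LLL condition $e\cdot p\cdot(D+1)\le 1$ is exactly what the hypothesis $k\ge 4\Delta^3(1+\log(d^2k+1))/\log(1+e^{-2B})$ is engineered to guarantee. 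The subtle point — and the main obstacle — is that the Ising measure is \emph{not} a product measure and exhibits long-range correlations, so one cannot directly treat coordinates as independent when applying LLL or anti-concentration; the resolution sketched in the overview is to carve out a large independent set $I$ in $G$ of size $\Omega(n/\Delta)$, condition on $\bsigma_{V\setminus I}$, which collapses $\mu_{\beta^*}$ restricted to $I$ into a genuine product measure while preserving each marginal $\Pr_{\beta^*}[\sigma_i\mid\bsigma_{-i}]$, and then run the LLL/anti-concentration arguments on this product measure over $I$ — carefully tracking that the truncation $S$ still permits $\Omega(|I|/\Delta^2)$ flippable-and-large-magnetization coordinates. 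Assembling these pieces, taking a union bound over the (constantly many) bad events and over the net needed to make the $\phi_2$ bound uniform in $\beta$, completes the argument.
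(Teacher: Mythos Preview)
Your high-level skeleton matches the paper's exactly: interpolate between $\beta^*$ and $\hat\beta$ to reduce to bounding $|\phi_1(\beta^*;\bsigma)|$ above and $\inf_\beta\phi_2(\beta;\bsigma)$ below, handle $\phi_1$ by exchangeable pairs, and attack $\phi_2$ by conditioning on the complement of an independent set so that the measure becomes a product. You also correctly note that the LLL enters to control flippability. However, there are two genuine gaps in the $\phi_2$ argument.

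\textbf{First, the independent set must be tailored to the formula $\Phi$, not just to $G$.} You propose to ``carve out a large independent set $I$ in $G$'' and then run LLL on the product measure over $I$. But an arbitrary independent set of $G$ may intersect some clause $C$ in very few variables --- possibly zero --- so that after conditioning on $\bsigma_{V\setminus I}$, the residual formula $\Phi'$ has tiny (or empty) clauses. On such clauses the LLL hypothesis fails: your estimate $p\le d\cdot 2^{-(k-1)}$ tacitly assumes $\Omega(k)$ free variables remain in each clause. The paper resolves this with a separate, nontrivial LLL argument (over a \emph{random ordering} of $V$, not over spins): it shows that if $k\gtrsim\Delta^3\log(dk\Delta^2)$ then there exists an independent set $I$ such that \emph{every} clause of $\Phi$ retains at least $k/(4\Delta^3)$ variables in $I$. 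This is where the $\Delta^3$ in the hypothesis on $k$ is spent, and it is not a step you can absorb into ``carefully tracking.'' Relatedly, on the product measure over $I$ the marginals are not uniform but $e^{2|\beta^*|}/(1+e^{2|\beta^*|})$-biased, so the clause-failure probability is $((1+e^{-2B})^{-1})^{k'}$ rather than $2^{-(k-1)}$; this is exactly why $\log(1+e^{-2B})$ appears in the denominator of the hypothesis on $k$.

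\textbf{Second, you need concentration of $\phi_2$, not just a conditional-expectation lower bound.} Your plan establishes that, conditionally on $\bsigma_{V\setminus I}$ (or on $\bsigma_{-h(i)}$ for a suitable matching $h$), there are in expectation $\Omega(n/\Delta^3)$ good coordinates. But the theorem is about a \emph{single} sample $\bsigma$, so you must upgrade this to a high-probability statement. The paper does this via a \emph{second} exchangeable-pairs computation, bounding the variance of $\sum_i m_i(\bsigma)^2 e_i(\bsigma)-\sum_i\mathbb{E}_{\beta^*}[m_i^2 e_i\mid\bsigma_{-h(i)}]$ by $O(n)$ and then applying Chebyshev. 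Your proposal omits this step entirely. (The ``net over $\beta$'' you mention is, by contrast, unnecessary: since $\cosh^2(\beta m_i)\le\cosh^2(B)$ uniformly, the lower bound on $\phi_2$ is automatic once $\sum_i m_i^2 e_i$ is controlled.) Finally, a minor point on runtime: the number of PGD iterations is $O(\kappa\log n)$ with condition number $\kappa=\Theta(\Delta^3)$, not $O(\log n)$, which accounts for the $\Delta^3$ factor in the stated complexity.
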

\begin{remark*}[The algorithm]
We compute $\hat{\beta}$ in time $\mathcal{O}(\Delta^3 n \log n)$ by running projected gradient descent (PGD) on the normalized log-pseudolikelihood objective $n^{-1} \phi(\beta; \bsigma)$. Standard results from convex optimization (e.g., \cite{BV04}) imply that PGD converges to an $\epsilon$-optimal solution in $\mathcal{O}(\kappa \log(1/\epsilon))$ iterations, where $\kappa$ is the condition number of the objective, that is the ratio of the smoothness (i.e., the Lipschitz constant of the gradient) to the strong convexity parameter. In the sequel (Section \ref{sec:second}), we demonstrate that the pseudo-likelihood objective is strongly convex with parameter $\Omega(n/\Delta^3)$, implying the normalized objective is $\Omega(1/\Delta^3)$-strongly convex. This, combined with an upper bound of $1$ on the norm of the normalized gradient of the log-pseudolikelihood, yields the condition number of the objective is $\kappa = \mathcal{O}(\Delta^3)$. Due to the statistical limitations of the pseudolikelihood estimator, whose distance from the true parameter $\beta$ can be as large as \(\mathcal{O}(\Delta^3 / \sqrt{n})\), we set \(\epsilon = 1/\sqrt{n}\). Obtaining an accuracy of $\epsilon$, requires \(\mathcal{O}(\Delta^3 \log n)\) iterations, each requiring \(\mathcal{O}(n)\) time, resulting in an overall runtime of \(\mathcal{O}(\Delta^3 n \log n)\).
\end{remark*}

To prove Theorem \ref{thm:main}, we begin by explicitly demonstrating how the conditions on the first and second derivatives of the $\phi(\beta; \bsigma)$ imply the consistency of the MPLE $\hat{\beta}$ in Section \ref{sec:roadmap}. We then establish the conditions on the first derivative in Section \ref{sec:first}, and the second in Section \ref{sec:second}. 
\subsection{Roadmap for proving Theorem \ref{thm:main}\label{sec:roadmap}}

In this subsection, we demonstrate the relationship between the derivatives of the (log)-pseudolikelihood and the estimation error $|\hat{\beta} - \beta^*|$. 
\begin{lemma}
\label{lem:sketch}
    Let $\beta^* \in (-B, B)$ be the true parameter of the truncated Ising model $\mu_{G, \beta^*, S}$ and $\hat{\beta}$ be the MPLE.  It follows that with probability $1- o(1)$ \[
    |\hat{\beta} - \beta^*| \le \frac{|\phi_1(\beta^*; \bsigma)|}{\min_{\tilde{\beta}
    }\phi_2(\tilde{\beta}; \bsigma)}
    \]
\end{lemma}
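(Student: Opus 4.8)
The plan is to exploit the fact that $\hat{\beta}$ is, by definition, the minimizer of the convex function $\phi(\cdot;\bsigma)$ on $(-B,B)$, so that $\phi_1(\hat{\beta};\bsigma)=0$ (or, if the optimization is constrained and $\hat\beta$ sits on the boundary, a first-order optimality inequality holds; I would argue that under the stated hypotheses the minimizer is interior with probability $1-o(1)$, or simply carry the boundary case through the same inequality). The first step is therefore to write the exact first-order Taylor expansion of $\phi_1$ around $\beta^*$: since $\phi_1$ is differentiable with derivative $\phi_2$, there is some $\tilde\beta$ between $\beta^*$ and $\hat\beta$ with
\[
\phi_1(\hat\beta;\bsigma) \;=\; \phi_1(\beta^*;\bsigma) \;+\; \phi_2(\tilde\beta;\bsigma)\,(\hat\beta-\beta^*).
\]
Using $\phi_1(\hat\beta;\bsigma)=0$ and rearranging gives $|\hat\beta-\beta^*| \cdot \phi_2(\tilde\beta;\bsigma) = |\phi_1(\beta^*;\bsigma)|$, and then lower-bounding the coefficient $\phi_2(\tilde\beta;\bsigma) \ge \min_{\tilde\beta}\phi_2(\tilde\beta;\bsigma)$ yields the claimed bound — provided that minimum is positive.

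The second step, which is really the only place probability enters, is to ensure the denominator is strictly positive (equivalently, that $\phi$ is strictly convex on the relevant range) so that the division is legitimate and $\hat\beta$ is well defined as a unique minimizer. Since $\phi_2(\beta;\bsigma) = \sum_{i\in\mathcal F(\bsigma)} m_i(\bsigma)^2/\cosh^2(\beta m_i(\bsigma))$ is a sum of nonnegative terms, it is positive as soon as there is at least one flippable coordinate $i$ with $m_i(\bsigma)\neq 0$; this is a high-probability event under $\mu_{G,\beta^*,S}$ (and is in fact subsumed by the much stronger quantitative lower bound $\min_{\tilde\beta}\phi_2 \ge \Omega(n/\Delta^3)$ proved later in Section~\ref{sec:second}). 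I would simply invoke that later bound, or a weak qualitative consequence of it, to conclude that with probability $1-o(1)$ the minimum over $\tilde\beta\in(-B,B)$ of $\phi_2(\tilde\beta;\bsigma)$ is positive, the MPLE exists and is unique, and the Taylor argument applies.

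The main obstacle, such as it is, is bookkeeping rather than mathematics: handling the possibility that the constrained minimizer $\hat\beta$ lies on the boundary of $(-B,B)$ (in which case $\phi_1(\hat\beta;\bsigma)$ need not vanish but the optimality condition $\phi_1(\hat\beta;\bsigma)(\beta^*-\hat\beta)\ge 0$ still holds, which suffices to push the mean-value argument through in the right direction), and making sure the mean-value point $\tilde\beta$ indeed lies in $(-B,B)$ so that the infimum in the statement is taken over a set containing it — this holds because $\tilde\beta$ is between $\beta^*$ and $\hat\beta$, both of which are in $[-B,B]$. Once these edge cases are dispatched, the inequality is immediate. I expect the write-up to be short: one displayed Taylor expansion, one line invoking strict convexity from Section~\ref{sec:second}, and one line of rearrangement.
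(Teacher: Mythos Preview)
Your proposal is correct and follows essentially the same route as the paper: the paper applies the fundamental theorem of calculus to $s(t)=(\hat\beta-\beta^*)\,\phi_1\bigl(t\hat\beta+(1-t)\beta^*;\bsigma\bigr)$ rather than your mean-value expansion of $\phi_1$, but both arguments reduce to the identical inequality after lower-bounding $\phi_2$ on the segment by its minimum. Your handling of the boundary case and the positivity of $\phi_2$ is, if anything, slightly more careful than the paper's own sketch.
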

\begin{proofsketch}
    We relate $\hat{\beta}$ with $\beta^*$, via smooth interpolation of both the parameter values themselves $\beta(t) = t\hat{\beta} + (1-t)\beta^*$, and the gradient $s(t) = (\hat{\beta} - \beta^*)\phi_1(\beta(t); \bsigma)$. As the derivative of the pseudolikelihood at $\hat{\beta}$ is zero, we note that $s(1) = 0$. The fundamental theorem of calculus implies  \[
-(\hat{\beta} - \beta^*)\phi_1(\beta^*; \bsigma) = s(1) - s(0) = \int_0^1 s'(t) dt = (\hat{\beta} - \beta^*)^2 \int_0^1 \phi_2(\beta(t) ;\bsigma) dt. 
\] The lemma follows from $\int_0^1 \phi_2(\beta(t) ;\bsigma) dt \geq \min_{\tilde{\beta} \in (-B,B)} \phi_2(\tilde{\beta}; \bsigma)$ and $\phi_2(\tilde{\beta}; \bsigma) \ge 0$. 
\end{proofsketch}

With this lemma in hand, demonstrating Theorem \ref{thm:main} reduces to showing $\phi_1(\beta^*; \bsigma) = \mathcal{O}(\sqrt{n})$ and $\phi_2(\beta; \bsigma) = \Omega(n/\Delta^3)$ simultaneously with probability $1-o(1)$. 

\subsection{Analysis of First Moment \label{sec:first}}
The lemma below establishes the upper bound on $\phi_1(\beta; \bsigma)$. To demonstrate an upper bound on $\phi_1(\beta; \bsigma)$ in probability, we use the technique of exchangeable pairs \cite{chatterjee2007estimation} to construct a bound on its variance.  With the variance controlled, we invoke Markov’s inequality to conclude $\phi_1(\beta; \bsigma)$ that concentrates around its mean.
\begin{lemma}[Upper Bound on $\phi_1(\beta; \bsigma)$ in Probability]
    \label{lem:beta_bound}
Fix a constant $\delta > 0$.  The log-pseudolikelihood $\phi(\beta; \bsigma)$ of a truncated Ising model fulfilling Assumption \ref{asmp:model} satisfies the following upper bound in probability, for all $\beta \in \R$ 
      \[
 \pr_{\beta}\left[ \left|\phi_1(\beta; \bsigma)\right|\le  \sqrt{\frac{(12 + 4B)n}{\delta}}\right] \ge 1 - \delta.
    \]
\end{lemma}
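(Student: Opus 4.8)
The plan is to bound the variance of $\phi_1(\beta;\bsigma)$ by $O(n)$ and then apply Chebyshev's inequality (what the text calls Markov's inequality applied to the squared deviation). The key structural fact to exploit is that $\phi_1(\beta;\bsigma) = \sum_{i\in\mathcal F(\bsigma)} m_i(\bsigma)\bigl(\tanh(\beta m_i(\bsigma)) - \sigma_i\bigr)$ has a conditional-expectation structure: for $\sigma_i$ flippable, $\Pr_\beta[\sigma_i = 1 \mid \bsigma_{-i}] = \frac{e^{\beta m_i}}{e^{\beta m_i}+e^{-\beta m_i}}$, so the conditional expectation of $\sigma_i$ given $\bsigma_{-i}$ is exactly $\tanh(\beta m_i(\bsigma))$ (note $m_i(\bsigma)$ depends only on $\bsigma_{-i}$). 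Hence each summand is, conditionally on $\bsigma_{-i}$, a centered quantity — this is the martingale-difference-type cancellation that makes the exchangeable-pairs method work. First I would set up the exchangeable pair $(\bsigma,\bsigma')$ where $\bsigma'$ is obtained from $\bsigma$ by picking a coordinate $I\in[n]$ uniformly at random and resampling $\sigma_I$ from its conditional distribution $\Pr_\beta(\cdot\mid\bsigma_{-I})$; by construction $(\bsigma,\bsigma')$ is exchangeable under $\Pr_\beta$ (equivalently $\Pr_{\beta,S}$, since resampling a single coordinate from its conditional keeps us in $S$ when the coordinate is flippable and does nothing when it is not).

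Next I would define the antisymmetric function $F(\bsigma,\bsigma') := \phi_1(\beta;\bsigma) - \phi_1(\beta;\bsigma')$ and verify the two ingredients of Chatterjee's variance bound: (i) $\mathbb{E}[F(\bsigma,\bsigma')\mid\bsigma] = \frac{1}{n}\,\phi_1(\beta;\bsigma)\cdot(\text{something})$ — more precisely I expect $\mathbb{E}[F(\bsigma,\bsigma')\mid\bsigma]$ to equal $\tfrac{1}{n}$ times a quantity that controls $\phi_1$, using the conditional-centering identity above applied coordinate by coordinate; and (ii) a pointwise bound $|F(\bsigma,\bsigma')| \le C$ for an absolute constant $C$. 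For (ii): changing one coordinate $\sigma_I$ changes $m_j(\bsigma)$ by at most $2|A_{Ij}| \le 2/\Delta$ for the (at most $\Delta$) neighbors $j$ of $I$, and changes the flippability status $e_j(\bsigma)$ of at most the neighbors and $I$ itself; since $|m_i| \le 1$ always (row sums of $|A|$ are at most $\Delta\cdot\tfrac1\Delta = 1$) and $|\tanh(\cdot) - \sigma_i| \le 2$, each affected summand is $O(1)$ and its change is $O(1/\Delta)$ across $O(\Delta)$ neighbors plus $O(1)$ from the at most $O(\Delta)$ summands whose flippability toggled — the $1/\Delta$ scaling and the degree bound conspire to keep this bounded by an absolute constant, yielding the numerator $12+4B$ after tracking constants (the $B$ appearing through $|\tanh(\beta m_i)|\le |\beta m_i| \le B$-type estimates in the conditional-expectation term). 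Chatterjee's lemma then gives $\mathrm{Var}(\phi_1(\beta;\bsigma)) \le \tfrac12 \mathbb{E}[|(\phi_1(\bsigma)-\phi_1(\bsigma'))\cdot g(\bsigma,\bsigma')|]$ for the appropriate antisymmetric $g$, which after plugging in (i) and (ii) is $\le (12+4B)\,n$.

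Finally, since $\mathbb{E}_\beta[\phi_1(\beta;\bsigma)] = 0$ — this follows from the exchangeability: $\mathbb{E}[F(\bsigma,\bsigma')] = 0$ by antisymmetry, and combined with (i) this forces $\mathbb{E}[\phi_1] = 0$; alternatively, directly, $\mathbb{E}[\sum_i m_i(\tanh(\beta m_i) - \sigma_i)] = \mathbb{E}[\sum_i m_i(\tanh(\beta m_i) - \mathbb{E}[\sigma_i\mid\bsigma_{-i}])] = 0$ using the tower property and restricting the sum to flippable $i$ — Chebyshev's inequality gives $\Pr_\beta\bigl[|\phi_1(\beta;\bsigma)| \ge \lambda\bigr] \le \mathrm{Var}(\phi_1)/\lambda^2 \le (12+4B)n/\lambda^2$. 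Setting $\lambda = \sqrt{(12+4B)n/\delta}$ gives the claimed bound $1-\delta$. The main obstacle I anticipate is step (ii), the uniform bound on $|\phi_1(\bsigma) - \phi_1(\bsigma')|$: one must carefully account for the summands that enter or leave $\mathcal F(\bsigma)$ when a coordinate flips, since a variable's flippability is a non-smooth, combinatorial function of its neighborhood in both $G$ and the SAT formula $\Phi$ — but the saving grace is that only $O(\Delta)$ coordinates can change flippability status (those sharing a $G$-edge with $I$, since $m_j$ is unchanged otherwise, so $\tanh(\beta m_j)$ and the whole summand are unchanged for non-neighbors regardless of $S$), so the total contribution stays $O(1)$. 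Getting the explicit constants $12$ and $4B$ requires bookkeeping but no new idea.
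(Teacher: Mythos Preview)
Your high-level strategy (bound the second moment of $\phi_1$ via exchangeable pairs, then apply Chebyshev/Markov) matches the paper, but your execution has a genuine gap at the key step: the choice of the antisymmetric function $F$. The paper does \emph{not} take $F(\bsigma,\bsigma')=\phi_1(\bsigma)-\phi_1(\bsigma')$. It sets
\[
F(\btau,\btau')=\tfrac12\sum_{i}\bigl(m_i(\btau)+m_i(\btau')\bigr)(\tau_i-\tau_i'),
\]
which collapses to $m_J(\bsigma)(\sigma_J-\sigma_J')$ when only the resampled coordinate $J$ differs. The payoff is that $f(\bsigma):=\mathbb{E}[F(\bsigma,\bsigma')\mid\bsigma]$ equals $-\tfrac1n\phi_1(\beta;\bsigma)$ \emph{exactly}, via the one-line computation $\mathbb{E}[\sigma_J'\mid\bsigma_{-J}]=\tanh(\beta m_J)$ for flippable $J$ and $\sigma_J'=\sigma_J$ otherwise. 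Chatterjee's identity $\mathbb{E}[f^2]=\tfrac12\,\mathbb{E}\bigl[(f(\bsigma)-f(\bsigma'))F(\bsigma,\bsigma')\bigr]$ then lands directly on the quantity you want. With your $F$, the function whose second moment the identity bounds is $\tilde f(\bsigma)=\phi_1(\bsigma)-\mathbb{E}[\phi_1(\bsigma')\mid\bsigma]$, not $\phi_1$ itself; your step (i) amounts to the claim $\tilde f=\tfrac{c}{n}\phi_1$, which does not hold exactly (flipping $\sigma_J$ perturbs $m_i$ for all $G$-neighbors $i$ of $J$ and may perturb $\mathcal F(\bsigma)$), and you have not controlled the resulting error terms.

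Your step (ii) analysis of flippability is also incorrect. Membership $j\in\mathcal F(\bsigma)$ is determined by the SAT formula $\Phi$, not by $G$: flipping coordinate $J$ can toggle $e_j$ for any $j$ sharing a \emph{clause} with $J$ --- up to $dk$ many --- not just the $O(\Delta)$ $G$-neighbors. Your reasoning that ``the whole summand is unchanged for non-$G$-neighbors'' conflates the summand's value with its presence in the sum: the quantity $m_j(\tanh(\beta m_j)-\sigma_j)$ is indeed unchanged for $j$ not adjacent to $J$ in $G$, but if such a $j$ enters or leaves $\mathcal F(\bsigma)$ this value is added to or removed from $\phi_1$, contributing up to $2$ per toggle. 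So $|\phi_1(\bsigma)-\phi_1(\bsigma')|$ is a priori $O(dk)$, not $O(1)$, and the constant in your final bound would depend on $d,k$.
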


\subsection{Second Derivative Bound \label{sec:second}}
For reference, we recall the expression for the Hessian of the log pseudo-likelihood, 
\[
\phi_2(\beta; \bsigma) = 
\frac{\partial^2\phi(\beta ; \bsigma)}{\partial\beta^2} = \sum_{i = 1}^n \frac{m_i^2(\bsigma)}{\cosh^2(\beta m_i(\bsigma))}e_i(\bsigma).
\] 
The primary aim of this section is to demonstrate the following lower bound in probability. 
\begin{lemma} [Lower Bound on $\phi_2(\beta; \bsigma)$ in Probability]
    \label{lem:hess_mean}
 The log-pseudolikelihood $\phi(\beta; \bsigma)$ of a Ising model, truncated by a $k$-SAT formula with $k \ge  \frac{4\Delta^3(1 + \log(d^2k +1))}{\log(1 + \exp(-2B))} $, fulfilling Assumption \ref{asmp:model} satisfies the following lower bound in  for all $\beta \in (-B,B)$
        \[
     \pr_{\beta^*}\left[ \frac{\partial \phi^2(\beta; \bsigma)}{\partial^2 \beta}  \ge  \frac{n\exp(-B)}{\Delta^3(8kd)^2} \right] \ge 1 - \frac{(24 + 8B)} {n^{0.1}}.
        \]
    \end{lemma}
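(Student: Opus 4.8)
\textbf{Proof proposal for Lemma \ref{lem:hess_mean}.}

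The plan is to lower bound $\phi_2(\beta;\bsigma) = \sum_{i=1}^n \frac{m_i^2(\bsigma)}{\cosh^2(\beta m_i(\bsigma))} e_i(\bsigma)$ by controlling three separate quantities with high probability: (i) the number of flippable coordinates $|\mathcal{F}(\bsigma)| = \sum_i e_i(\bsigma)$ is $\Omega(n)$; (ii) for most coordinates the magnetization $|m_i(\bsigma)|$ is bounded away from zero, say $|m_i(\bsigma)| \ge c/\Delta$ for a constant fraction of $i$; and (iii) on those coordinates, since $\beta \in (-B,B)$ and $|m_i(\bsigma)| \le 1$ (because $A_{ij} \in \{\pm 1/\Delta\}$ and the degree is at most $\Delta$), the denominator $\cosh^2(\beta m_i(\bsigma)) \le \cosh^2(B) = O(1)$, so each such term contributes $\Omega(1/\Delta^2)$. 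Combining, $\phi_2 \ge \Omega(n/\Delta^2)$ on the good event, which is even stronger than the claimed $\Omega(n/\Delta^3)$ bound; the slack presumably gets eaten by the probability bookkeeping and the $(8kd)^2$ factor coming from the LLL argument.

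For step (ii), I would use the coupling/edge-structure argument the technical overview alludes to: condition on a carefully chosen independent set $I$ of $G$ (whose existence is guaranteed since $\Delta = o(n^{1/6})$, e.g. by greedy selection giving $|I| \ge n/(\Delta+1)$), and observe that conditioning on $\bsigma_{-I}$ decouples the spins in $I$ into a product measure — each $\sigma_i$, $i\in I$, is an independent (biased) coin whose bias is governed by $m_i(\bsigma)$, which itself depends only on $\bsigma_{-I}$. A short anti-concentration calculation then shows $|m_i(\bsigma)| \ge 1/\Delta$ holds for at least a constant fraction of $i \in I$ with high probability: either many vertices have a non-balanced signed neighborhood (forcing $|m_i|$ large deterministically), or one can appeal to the randomness of the conditioned-upon spins. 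For step (i), I would invoke the Symmetric Lovász Local Lemma (Lemma \ref{lem:sym_lll}) to show that, conditioned on the spins outside $I$, with positive — and in fact, after a union/averaging argument, high — probability a sample $\bsigma$ has many coordinates $i \in I$ such that flipping $\sigma_i$ still satisfies $\Phi$. The bad event for coordinate $i$ is "some clause containing $v_i$ becomes unsatisfied when $\sigma_i$ flips''; this has probability controlled by the marginal of the relevant literals, and each such event depends on at most $\sim kd$ others, which is exactly where the hypothesis $k \gtrsim \Delta^3 \log(d^2k)$ enters to make the LLL condition $e p (d'+1) \le 1$ hold. Turning the "positive probability'' from LLL into the quantitative "$\Omega(n)$ flippable coordinates with probability $1 - O(n^{-0.1})$'' requires a second-moment or concentration step over the independent spins in $I$, which accounts for the $(8kd)^2$ and the $n^{-0.1}$ in the statement.

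The main obstacle I expect is reconciling the Lovász Local Lemma — which is an existence statement — with a quantitative high-probability lower bound on the \emph{count} of flippable coordinates under the actual truncated Ising measure. The LLL only gives that the "all clauses containing flipped literals still satisfied'' event has positive probability under the conditional product measure; to get a constant fraction one needs a more refined argument, likely running the LLL locally around each vertex and then using the conditional independence structure on $I$ to concentrate the sum of indicators $\sum_{i \in I} e_i(\bsigma)$ via Chebyshev (hence the polynomial $n^{-0.1}$ rather than exponential tail). A secondary subtlety is that conditioning on $\bsigma_{-I}$ to get the product structure is fine for the spins \emph{and} for flippability of $i \in I$ (since flipping $\sigma_i$ only affects clauses containing $v_i$, and for $i$ in an independent set these are "local''), but one must verify the bias of each spin is never degenerate, i.e. $m_i(\bsigma)$ stays bounded so that $\pr_{\beta^*}[\sigma_i = \pm 1 \mid \bsigma_{-I}]$ is bounded away from $0$ and $1$ — this uses $|\beta^*| < B$ and is where the $\exp(-B)$ and $\log(1+\exp(-2B))$ factors originate. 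Once the good event (constant fraction of $I$ is flippable with non-negligible magnetization) is established, assembling the final bound is a routine calculation.
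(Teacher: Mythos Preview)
Your outline has the right ingredients (independent set $I$, LLL for flippability, product-measure reduction), but two steps do not go through as written.

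First, your magnetization step (ii) has a structural gap. For $i\in I$ every neighbor of $i$ lies in $V\setminus I$, so once you condition on $\bsigma_{V\setminus I}$ the magnetization $m_i(\bsigma)$ is \emph{deterministic}, and nothing prevents it from being exactly zero (e.g.\ an even-degree vertex with a balanced signed neighborhood). There is no randomness left to ``anti-concentrate'' over, and appealing to ``the randomness of the conditioned-upon spins'' is circular: those are precisely the spins you have fixed. The paper sidesteps this by \emph{not} conditioning on all of $V\setminus I$. Instead it builds a bijection $h$ that sends each $i$ in a thinned set $I'\subseteq I$ to a unique neighbor $h(i)\in V\setminus I$, and then lower-bounds the conditional expectation $\ex_{\beta^*}[m_i^2(\bsigma)\mid \bsigma_{-h(i)}]$: since flipping $\sigma_{h(i)}$ shifts $m_i$ by $\pm 2A_{ih(i)}=\pm 2/\Delta$, at least one of the two values of $\sigma_{h(i)}$ forces $m_i^2\ge 1/\Delta^2$, and this value is taken with probability $\ge \tfrac12\exp(-B)$ provided $h(i)$ is itself flippable. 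So the argument needs flippability both for $i\in I$ \emph{and} for the paired neighbor $h(i)\in V\setminus I$; the latter is what the indicator $s_j(\bsigma)$ and the first half of Lemma~\ref{lem:flip} are for.

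Second, your proposed concentration step --- Chebyshev on $\sum_{i\in I} e_i(\bsigma)$ using conditional independence of the spins in $I$ --- does not control the right object and does not have the required independence. Even after conditioning on $\bsigma_{V\setminus I}$, the spins in $I$ under the \emph{truncated} measure are a product distribution conditioned on satisfying $\Phi^{\btau}$, so the $e_i$'s are not independent; and in any case you need concentration of $\sum_i m_i^2(\bsigma)e_i(\bsigma)$, not of the count alone. The paper does not attempt standard concentration here at all: it re-uses the exchangeable-pairs machinery from the first-derivative bound to show directly that $\ex_{\beta^*}\bigl[(\sum_i m_i^2 e_i - \sum_i \ex_{\beta^*}[m_i^2 e_i\mid\bsigma_{-h(i)}])^2\bigr]\le (24+8B)n$, and then applies Chebyshev to this variance bound. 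That is where the exact constants $(24+8B)$ and $n^{-0.1}$ in the statement come from, and it is the step that makes the argument work despite the truncated Ising measure having neither a Dobrushin condition nor any product structure globally.
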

We prove this claim in two steps, by \textbf{firstly} guaranteeing there are a \emph{linear} number of flippable variables $v_i \in V$, which contribute to the value of the second derivative, and \textbf{secondly} ensuring the value of each term in the sum is bounded below by a constant. 

\subsubsection{Ensuring Flippability}
Given a sample $\bsigma$, the flippability of a variable $\sigma_i$ under the $k$-SAT formula $\Phi$ is characterized by the condition that every clause containing $\sigma_i$ is satisfied by at least one other variable in the clause. Consequently, $\sigma_i$ is \emph{not} flippable if there exists a clause $C$ such that all other variables $v_j \in C \setminus \{v_i\}$ are assigned values that fail to satisfy the clause—an \emph{antagonistic} configuration. Under our assumptions, the truncated Ising model may be defined at arbitrarily low temperatures, including values of \(\beta = \mathcal{O}(1)\) that exceed the critical threshold. In this regime, standard concentration-of-measure tools, such as log-Sobolev inequalities or Dobrushin-type conditions, are no longer valid and fail to yield meaningful bounds. This makes it significantly more difficult to lower bound the probability of antagonistic configurations, and, by extension, to bound the probability that a given variable is flippable.

 Towards providing such a bound, we construct an independent set $I$ within the graph $G$, such that the marginal distribution of the spins within $I$, conditioned on the variables outside of the independent set $V\setminus I$, collapses into a product distribution, circumventing the above difficulties. Indeed, the distribution of $\bsigma_I$ conditional on an assignment of the remaining nodes $\bsigma_{V\setminus I}$ is given as follows, with $m_i^{V \setminus I}(\bsigma)$ instead of being random variables, they are now fixed constants. 
\begin{align*}
  \pr_{\beta}(\bsigma_I | \bsigma_{V\setminus I}) \propto \exp\left(2\beta  \sum_{i \in I} m_i^{V\setminus I}(\bsigma) \sigma_i\right), \text{ where $m_i^{V\setminus I}(\bsigma) = \sum_{j \in V\setminus I} A_{ij}\sigma_j$}.
\end{align*}

One of the issues that arises from conditioning our graphical model on  $V\setminus I$ is the natural truncation of the $k$-CNF formula $\Phi$; erasing the variables outside of the independent set $I$ from $\Phi$ transforms it into a new formula $\Phi'$, which contains only variables from $I$. An inherent concern in the selection of the independent set is the presence of clauses in $\Phi'$ containing only a few variables, i.e., of size $o(k)$, which can significantly skew the marginal distributions away from uniformity. To address this, we show that there exists an independent set $I \subset V$ that intersects a linear fraction of the variables in \emph{every} clause, ensuring sufficient coverage and mitigating this issue.

\begin{lemma}
    \label{lem:sze}
        Let $G$ be a graph with maximum degree $\Delta$ of order $o(n^{1/6})$ and $\Phi$ be a $k$-SAT formula. If $k >  10\Delta^3(1 + \log(dk\Delta^2))$, then there exists an independent set $I \subset V$ such that $\Phi'$, $\Phi$ truncated on $V\setminus I$, is a $\lambda k$-SAT formula where $\lambda= 1/4\Delta^3$. 
    \end{lemma}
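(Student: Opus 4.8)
The plan is to construct $I$ by the probabilistic method, invoking the symmetric Lovász Local Lemma (Lemma~\ref{lem:sym_lll}). First, I would \emph{sparsify} $G$: let $R\subseteq V$ include each vertex independently with probability $p=1/(\Delta+1)$, and set $I:=\{v\in R: N_G(v)\cap R=\emptyset\}$, which is automatically an independent set of $G$. Writing $Y_v:=\mathbf{1}[v\in I]$, we have $\Pr[Y_v=1]=p(1-p)^{\deg_G(v)}\ge p(1-p)^{\Delta}\ge \tfrac{1}{e(\Delta+1)}=:q$, using $(1+1/\Delta)^{\Delta}\le e$. The truncated formula $\Phi'$ is obtained from $\Phi$ by erasing from every clause the variables lying in $V\setminus I$, so a clause $C$ of $\Phi$ survives with $X_C:=\sum_{v\in C}Y_v$ literals; it therefore suffices to show that, with positive probability over $R$, $X_C\ge \lambda k$ holds simultaneously for every clause $C$, with $\lambda=1/(4\Delta^{3})$. (Since the hypothesis forces $k>10\Delta^{3}$ and hence $\lambda k\ge 2$, no clause becomes empty; one may trim the longer surviving clauses to make $\Phi'$ exactly $\lceil\lambda k\rceil$-SAT, and its degree stays at most $d$ since $I\subseteq V$.)

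Fix a clause $C$ and let $B_C$ be the bad event $\{X_C<\lambda k\}$. The key difficulty is that the indicators $\{Y_v\}_{v\in C}$ are \emph{not} independent: $Y_v$ is a function of the independent bits $\{\mathbf{1}[u\in R]:u\in N_G[v]\}$, so $Y_v$ and $Y_w$ are independent only when $\mathrm{dist}_G(v,w)\ge 3$. To obtain an exponential tail I would pass to a decorrelated subset of $C$. The conflict graph on $C$ in which $v\sim w$ iff $\mathrm{dist}_G(v,w)\le 2$ is a subgraph of $G^{2}$, hence has maximum degree at most $\Delta^{2}$ and is greedily $(\Delta^{2}+1)$-colorable; the largest color class $J\subseteq C$ has $|J|\ge k/(\Delta^{2}+1)$, and its vertices are pairwise at distance $\ge 3$, so $\{Y_v\}_{v\in J}$ are mutually independent. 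Then $X_C\ge S_C:=\sum_{v\in J}Y_v$ is a sum of independent Bernoullis with mean $\mu:=\mathbb{E}[S_C]\ge q\,k/(\Delta^{2}+1)=\Theta(k/\Delta^{3})$, and $\lambda k$ sits a constant factor below $\mu$ (this is where $\lambda=1/(4\Delta^3)$ is calibrated); a multiplicative Chernoff lower-tail bound gives $\Pr[B_C]\le\Pr[S_C<\lambda k]\le \exp(-c\,k/\Delta^{3})$ for an absolute constant $c>0$.

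Next I would bound the dependency degree of the family $\{B_C\}$. Each $B_C$ is measurable with respect to $\{\mathbf{1}[u\in R]:u\in N_G[C]\}$, where $N_G[C]:=\bigcup_{v\in C}N_G[v]$ has size at most $k(\Delta+1)$, and $B_C,B_{C'}$ can be dependent only when $N_G[C]\cap N_G[C']\neq\emptyset$. For a fixed $u$, the clauses $C'$ with $u\in N_G[C']$ are those containing some $w\in N_G[u]$, of which there are at most $(\Delta+1)d$; summing over the at most $k(\Delta+1)$ choices of $u\in N_G[C]$ shows $B_C$ depends on at most $D\le k(\Delta+1)^{2}d$ of the other bad events. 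Plugging into the symmetric LLL, the requirement $e\cdot\Pr[B_C]\cdot(D+1)\le 1$ reduces to $c\,k/\Delta^{3}\ge 1+\ln\!\big(e\,k(\Delta+1)^{2}d\big)$, which is precisely what the hypothesis $k> 10\Delta^{3}\big(1+\log(dk\Delta^{2})\big)$ delivers (after absorbing $c$, the gap between $(\Delta+1)^2$ and $\Delta^2$, and the constant in front of the logarithm). Since $\Phi$ has finitely many clauses, Lemma~\ref{lem:sym_lll} yields $\Pr[\bigcap_C\overline{B_C}]>0$, so some outcome of $R$ makes every surviving clause of $\Phi'$ have at least $\lambda k$ literals; fixing $I$ to this outcome completes the proof.

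The main obstacle is the concentration step: the Ising weights and the truncation set play no role here, but the graph $G$ can be arbitrarily tangled, so the per-clause indicators are genuinely (positively) correlated, and one is forced to give up the $\Theta(k/\Delta)$-size $G$-independent subset of $C$ in favor of the weaker $\Theta(k/\Delta^{2})$-size distance-$2$ independent subset in order to decouple the randomness — this is exactly where the extra factor $\Delta^{2}$ enters (beyond the $\Delta$ already present in $\mathbb{E}[Y_v]\asymp 1/\Delta$), producing the $\Delta^{3}$ in the statement. The remaining work is bookkeeping: tuning $p$ and the Chernoff bound so the LLL inequality closes at the stated threshold, and checking the side conditions ($\lambda k\ge 1$, $\Phi'$ has no empty clauses, its degree is at most $d$, finitely many clauses).
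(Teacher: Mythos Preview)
Your proof is correct and follows the same overall strategy as the paper: construct a random independent set $I$, show for each clause $C$ that $\Pr[|C\cap I|<\lambda k]\le\exp(-\Omega(k/\Delta^3))$ by passing to a distance-$3$ packing $J\subseteq C$ and applying Chernoff, then invoke the symmetric LLL over clauses with dependency degree $O(kd\Delta^2)$. The only substantive difference is the random model for $I$. You take i.i.d.\ Bernoulli$(p)$ bits $R$ and set $I=\{v\in R:N_G(v)\cap R=\emptyset\}$; the paper instead draws a uniform random permutation $\rho$ and takes $I=\{v:\rho(v)>\max_{u\in N(v)}\rho(u)\}$. Your construction is cleaner on the independence side --- mutual independence of $\{Y_v\}_{v\in J}$ is immediate from disjoint supports in the i.i.d.\ bits, whereas the paper must argue independence of local orderings under a random permutation and then invoke a stochastic-domination lemma (from \cite{frieze2015introduction}) to reduce to i.i.d.\ Bernoullis before applying Chernoff. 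The price you pay is quantitative: the permutation model gives $\Pr[v\in I]=1/(\deg(v)+1)\ge 1/(\Delta+1)$ exactly, while your filtered-Bernoulli model only gives $\Pr[v\in I]\ge 1/(e(\Delta+1))$. That factor of $e$ pushes $\lambda k=k/(4\Delta^3)$ from sitting at roughly $\mu/2$ (as in the paper) to roughly $0.7\mu$, so the Chernoff exponent degrades and the constant $10$ in the hypothesis is no longer enough to close the LLL condition --- you would need $k\gtrsim c'\Delta^3(1+\log(dk\Delta^2))$ with $c'$ in the range $50$--$60$ rather than $10$. So the ``absorbing $c$'' step you flag does not quite go through at the stated threshold, though the argument is otherwise complete.
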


\begin{proofsketch}
To begin, we describe an algorithm that maps bijections of the vertex set to independent sets in the graph $G$. Formally, given a map $\rho: V \to [n]$, we construct an independent set by selecting all vertices $u \in V$ such that $\rho(u) > \rho(v)$ for all $v \in N(u)$. This selection criterion ensures that no two adjacent vertices are included in the set, as any edge $\{u, v\} \in E$ prevents both $u$ and $v$ from satisfying the condition simultaneously.

Under the uniform measure over all maps $\rho$, the event that a vertex $v$ is selected into the independent set depends only on the relative rankings under $\rho$ of $v$ and its neighbors. This locality implies that for any pair of vertices $u, v \in V$ with graph distance $d(u, v) \geq 3$, the corresponding selection events are independent. Leveraging this property, for each clause $C$, we can extract a subset $C' \subset C$ consisting of variables whose neighborhoods are pairwise disjoint, implying the event of selection for all elements $v \in C$ are mutually independent, yielding the selection events for all variables in $C'$ are mutually independent. This allows us to treat the number of selected variables in $C' \cap I$ as a sum of independent Bernoulli random variables, enabling the use of Chernoff bounds, and consequently, we obtain an exponential upper bound on the probability of the bad event that $|C \cap I| < \lambda k$.

To establish a bound on $k$ in terms of $d$ and $\Delta$ that ensures the existence of a marking with the desired properties, we invoke the symmetric version of the Lovász Local Lemma (Lemma~\ref{lem:sym_lll}). Each variable appears in at most $d$ other clauses, and the bad event corresponding to a variable's inclusion in the independent set depends only on the configuration of variables within its two-hop neighborhood. Since this neighborhood contains at most $\Delta^2 + 1$ variables, each bad event is dependent on at most $k d (\Delta^2 + 1)$ others. By the symmetric Lovász Local Lemma, if $k$ is sufficiently large so that the associated condition is met, then with positive probability, there exists an independent set satisfying the required condition.s
\[
  2e \exp\left( -\frac{k}{8(\Delta^2 + 1)(\Delta +1)}\right) (kd(\Delta^2 + 1)) < 1 .
\] When $\Delta \ge 5$, if $k \ge 10\Delta^3(1 + \log(dk\Delta^2))$, $k$ satisfies this requirement, completing the proof.
\end{proofsketch}

Armed with the guarantee that the truncated $k$-SAT formula $\Phi'$ contains a sufficient number of variables in each clause, we relate the flippability of a given variable $v_i$ to the satisfiability of select clauses solely through elements of the independent set $I$. Indeed, a sufficient condition for a variable $v_i$ to be flippable is that every clause containing $v_i$ is satisfied by at least one variable in the independent set $I$. We capture this requirement using the following indicator function:
\[
s_j(\bsigma) := \mathbf{1}\left\{ \text{every clause containing $j$ is satisfied by some } i \in I \right\}.
\]
This reformulation is particularly valuable because it translates the notion of flippability, which originally depends on the full joint distribution of the Ising model at arbitrary inverse temperature $\beta$, into a condition over the structure of the product distribution induced by the independent. As the selection of $I$ can be made independently of the spin configuration and is governed by local rules (e.g., via randomized greedy selection based on random bijections), the probability that $s_j(\bsigma) = 1$ can be effectively analyzed using standard concentration inequalities such as Chernoff bounds, enabling explicit probabilistic guarantees on the flippability of variables, despite the tendencies of the underlying Ising model to exhibiting long-range dependencies.

To this end, we now establish a sufficient condition on $k$ that ensures all variables are flippable with constant probability.

\begin{lemma}
\label{lem:flip}
    Given a sample $\bsigma \sim \pr_{\beta^*, S}$, such that the $k$-SAT formula $\Phi$ which induces the truncation set $S$, satisfies the following clause size bound\[  k \ge         \frac{4\Delta^3(1 + \log(d^2k +1))}{\log(1 + \exp(-2B))}.
    \]
    \footnote{ This term scales as $ \gtrsim e^{2B}\Delta^3 \log(d^2k)$.}Then for $\Delta \ge 5$, there exists an independent set $I$ following Lemma \ref{lem:sze} such that  \[\pr_{\beta^*, S}[s_j(\bsigma) = 1] \ge 1/2 \quad\forall j \in V\setminus I.
    \] Moreover, for any set $V' \subseteq I$ we can find a collection of $R \subseteq V'$ with $|R| \ge |V'|/(2kd)^2$ that are neighborhood disjoint in the interaction graph of $\Phi$ such that for all subcollections  $\{i_1, ..., i_t\} \subset R$, \[
        \pr_{\beta^*, S} \left[ e_{i_t}(\bsigma) = 1 | e_{i_1}(\bsigma) = 1, ... , e_{i_{t-1}}(\bsigma) = 1 \right] \ge 1/2.
        \] 
\end{lemma}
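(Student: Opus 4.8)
\textbf{Proof proposal for Lemma \ref{lem:flip}.}

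The plan is to establish the two statements separately, both resting on the independent set $I$ furnished by Lemma \ref{lem:sze}, which guarantees that the truncated formula $\Phi'$ (obtained from $\Phi$ by deleting variables in $V\setminus I$) is a $\lambda k$-SAT formula with $\lambda = 1/(4\Delta^3)$. The key conceptual move, emphasized in the technical overview, is that conditioning on $\bsigma_{V\setminus I}$ collapses $\bsigma_I$ into a product measure: $\pr_{\beta^*}(\bsigma_I \mid \bsigma_{V\setminus I}) \propto \exp\bigl(2\beta^* \sum_{i\in I} m_i^{V\setminus I}(\bsigma)\sigma_i\bigr)$, so each spin $\sigma_i$ with $i\in I$ is an independent (biased) coin. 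Since $|m_i^{V\setminus I}(\bsigma)| \le 1$ (at most $\Delta$ neighbors, each with weight $1/\Delta$), each such spin takes either value with probability at least $\frac{e^{-2|\beta^*|}}{e^{2|\beta^*|}+e^{-2|\beta^*|}} \ge \frac{1}{1+e^{4B}}$; a cleaner bookkeeping comparing against the ``bad'' assignment directly gives a per-spin lower bound of the form $\tfrac12(1+e^{-2B})^{-1}$ or similar, which is what feeds the clause-size condition.

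For the first statement, I would fix $j\in V\setminus I$ and bound $\pr_{\beta^*,S}[s_j(\bsigma)=0]$ by a union bound over the at most $d$ clauses containing $j$. For a fixed clause $C\ni j$, the event ``no variable of $I$ in $C$ satisfies $C$'' requires all of the $\ge \lambda k$ variables in $C\cap I$ to take their unique clause-violating value. Conditioning on $\bsigma_{V\setminus I}$ and using the product structure, and picking within $C\cap I$ a neighborhood-disjoint subfamily so the relevant spins are genuinely independent, this probability is at most $\bigl(1 - \tfrac{1}{1+e^{2B}}\bigr)^{\,|C\cap I|/(\text{local blow-up})}$ up to the combinatorial loss from extracting the disjoint subfamily (each neighborhood has $\le \Delta^2+1$ variables, each variable in $\le d$ clauses, giving a loss factor on the order of $d\Delta^2$, which is absorbed into the exponent). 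Taking expectations over $\bsigma_{V\setminus I}$ preserves the bound. Requiring $d$ times this quantity to be at most $1/2$ is exactly an inequality of the form $\log(2d) \le \frac{\lambda k}{d\Delta^2}\log(1+e^{-2B})$, i.e. $k \gtrsim \frac{\Delta^3 d \Delta^2 \log(2d)}{\log(1+e^{-2B})}$; reconciling this with the stated bound $k \ge \frac{4\Delta^3(1+\log(d^2k+1))}{\log(1+e^{-2B})}$ will require being careful about which combinatorial quantity actually governs the disjoint-subfamily extraction inside a single clause versus across clauses — I expect the clean version only needs the $\lambda k$ variables within one clause to be made independent, and within a clause the two-hop blow-up is what matters, yielding the $\log(d^2 k)$ form.

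For the second (conditional-flippability) statement, fix $V'\subseteq I$. Here the subtlety is that flippability of a variable $i\in I$ is itself an event depending on the spins of variables two hops away in the interaction graph of $\Phi$, so to get the stated martingale-type bound $\pr[e_{i_t}=1 \mid e_{i_1}=1,\dots,e_{i_{t-1}}=1]\ge 1/2$ I would first greedily extract $R\subseteq V'$ with $|R|\ge |V'|/(2kd)^2$ whose two-hop neighborhoods in $\Phi$'s interaction graph are pairwise disjoint (the $(2kd)^2$ factor is the standard degree bound: each variable sits in $\le d$ clauses of size $k$, each of those has a two-hop reach of $\le (kd)^2$-ish variables). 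On such a neighborhood-disjoint family the events $e_{i_1}(\bsigma),\dots,e_{i_t}(\bsigma)$ would be mutually independent \emph{if} the measure were a product over blocks — which, after conditioning on $\bsigma_{V\setminus I}$, it is, block by block. So I would condition on $\bsigma_{V\setminus I}$, observe the conditional events become independent across the disjoint blocks, apply the first part (or its within-block analogue) to get $\pr[e_{i_t}(\bsigma)=1\mid \bsigma_{V\setminus I}]\ge 1/2$ uniformly, and then integrate out; independence makes the conditioning on the other $e_{i_\ell}$'s vacuous at the level of the conditional measure, and the bound survives averaging.

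The main obstacle I anticipate is the bookkeeping that converts the union bound and the disjoint-subfamily extraction into precisely the stated threshold $k \ge \frac{4\Delta^3(1+\log(d^2k+1))}{\log(1+e^{-2B})}$ with the right constants: one must track (i) the per-spin anti-concentration constant $\log(1+e^{-2B})$ coming from $|m_i^{V\setminus I}|\le 1$ and $\beta^*\in(-B,B)$, (ii) the factor $\lambda = 1/(4\Delta^3)$ lost in passing to $\Phi'$, and (iii) the local dependency blow-up (two-hop neighborhoods of size $\le \Delta^2+1$ in $G$, combined with each variable appearing in $\le d$ clauses), making sure these multiply to give exactly $4\Delta^3$ and $d^2 k$ rather than something larger. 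The probabilistic content is routine once the product-measure reduction and the neighborhood-disjointness are in place; the care is entirely in the constants and in cleanly separating the within-clause independence argument (for $s_j$) from the across-variable independence argument (for the conditional $e_i$'s).
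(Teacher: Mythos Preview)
Your high-level structure matches the paper's --- reduce to an independent set $I$ via Lemma \ref{lem:sze}, condition on $\bsigma_{V\setminus I}$, exploit the resulting product form --- but there is a genuine gap at the central step. When you write ``conditioning on $\bsigma_{V\setminus I}$ and using the product structure \ldots\ taking expectations over $\bsigma_{V\setminus I}$ preserves the bound,'' you are implicitly working under the \emph{untruncated} conditional measure $\pr_{\beta^*}(\bsigma_I\mid\bsigma_{V\setminus I}=\btau)$. That measure is indeed product. But the statement concerns $\pr_{\beta^*,S}$, and after conditioning on $\bsigma_{V\setminus I}=\btau$ the truncated measure on $\bsigma_I$ is this product measure \emph{further conditioned on $\bsigma_I$ satisfying the residual formula $\Phi^{\btau}$}. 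That conditioning destroys the product structure: even variables in $I$ sharing no clause can become correlated through the global satisfiability constraint, so neither your per-clause bound nor your ``neighborhood-disjoint subfamily'' extraction gives the claimed inequality under $\pr_{\beta^*,S}$. Averaging over $\btau$ then only recovers a bound under $\pr_{\beta^*}$, not $\pr_{\beta^*,S}$.

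The paper closes exactly this gap with the conditional/asymmetric Lov\'asz Local Lemma (stated as Lemma~\ref{lem:lll}): for an event $E$ depending on few variables, $\pr_{\mu}(E\mid B(S)) \le \pr_{\mu}(E)\prod_{i\in\Gamma(E)\cap S}(1-x_i)^{-1}$, which transfers the easy product-measure bound on $\{s_j=0\}$ to the measure conditioned on all clauses of $\Phi^{\btau}$ being satisfied, at the cost of a factor $(1-1/(D+1))^{-D}\le e$ with $D=d^2k$. This is precisely where the $\log(d^2k+1)$ in the threshold originates --- it is the LLL dependency degree, not the cost of a union bound or of thinning a clause to a disjoint subfamily. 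The same issue recurs in your second part: disjointness of the two-hop neighborhoods of $R$ in the $\Phi$-interaction graph makes the events $\{e_{i_t}=1\}$ depend on disjoint variable sets, which yields independence under the product measure but \emph{not} under the truncated one; the paper handles this by invoking the analogous LLL-based argument from \cite{galanis2024learning}. Your plan becomes correct once you insert the conditional LLL at both places; without it, the argument only bounds the wrong (untruncated) probability.
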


\subsubsection{Bounding the Magnetizations}
It now remains to demonstrate that the squared magnetizations $m_i^2(\bsigma)$ are bounded below with constant probability over a draw of the truncated Ising model. We begin by providing a conditional lower bound to $m_i(\bsigma)$. 
\begin{lemma}
    \label{lem:magnet}
    The magnetizations, $m_i(\bsigma) = \sum_{j \in [n]}A_{ij}\sigma_j$, of the truncated Ising model satisfy the following relation. 
    \[
        \ex_{\beta^*}[m_i(\bsigma)^2|\bsigma_{-j}] \ge \frac{\exp(-B)}{\Delta^2}\pr_{\beta^*}[e_j(\bsigma) = 1]
        \]
\end{lemma}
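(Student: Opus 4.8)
\textbf{Proof proposal for Lemma \ref{lem:magnet}.}

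The plan is to work with the conditional distribution of the single spin $\sigma_j$ given everything else, and to exploit the fact that $m_i(\bsigma)$ depends on $\sigma_j$ only through the single term $A_{ij}\sigma_j$ whenever $j$ is a neighbor of $i$ (and is constant in $\sigma_j$ otherwise). Fix $i$ and $j$. Write $m_i(\bsigma) = c + A_{ij}\sigma_j$, where $c = \sum_{\ell \neq j} A_{i\ell}\sigma_\ell$ is measurable with respect to $\bsigma_{-j}$. Then conditionally on $\bsigma_{-j}$, the random variable $m_i(\bsigma)^2$ takes the value $(c + A_{ij})^2$ when $\sigma_j = +1$ and $(c - A_{ij})^2$ when $\sigma_j = -1$, so
\[
\ex_{\beta^*}[m_i(\bsigma)^2 \mid \bsigma_{-j}] = \pr_{\beta^*}[\sigma_j = 1 \mid \bsigma_{-j}](c+A_{ij})^2 + \pr_{\beta^*}[\sigma_j = -1 \mid \bsigma_{-j}](c-A_{ij})^2.
\]

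Next I would lower bound this convex combination. Since both weights are at least $p := \min_{\kappa \in \{-1,1\}}\pr_{\beta^*}[\sigma_j = \kappa \mid \bsigma_{-j}]$, and since $(c+A_{ij})^2 + (c-A_{ij})^2 = 2c^2 + 2A_{ij}^2 \ge 2 A_{ij}^2$, I get
\[
\ex_{\beta^*}[m_i(\bsigma)^2 \mid \bsigma_{-j}] \ge p\bigl((c+A_{ij})^2 + (c-A_{ij})^2\bigr) \ge 2p\, A_{ij}^2.
\]
When $j \sim i$ we have $A_{ij}^2 = 1/\Delta^2$, which is not literally the bound claimed; the statement as written appears to suppress the $1/\Delta^2$ factor (or, more plausibly, intends $\ex[m_i^2\mid \bsigma_{-j}] \ge \tfrac12 \min_\kappa \pr[\sigma_j=\kappa]$ after absorbing the edge-weight normalization, or is applied only in a regime where the relevant scaling is tracked separately). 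I would therefore state the clean inequality $\ex_{\beta^*}[m_i(\bsigma)^2\mid \bsigma_{-j}] \ge 2A_{ij}^2 \min_\kappa \pr_{\beta^*}[\sigma_j=\kappa\mid\bsigma_{-j}]$ and then specialize, matching whatever normalization the downstream application of Lemma \ref{lem:hess_mean} requires; if the paper's convention makes $A_{ij}^2 \ge 1/4$ along the chosen edges the displayed bound follows verbatim.

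The final point to address is that the statement is phrased with an \emph{unconditional} marginal $\pr_{\beta^*}[\sigma_j = \kappa]$ on the right-hand side, whereas the natural quantity appearing above is the \emph{conditional} marginal $\pr_{\beta^*}[\sigma_j = \kappa \mid \bsigma_{-j}]$. These are reconciled by noting that for the truncated Ising model the conditional law of $\sigma_j$ given $\bsigma_{-j}$ on a flippable coordinate is the logistic form $\pr_{\beta^*}(\sigma_j \mid \bsigma_{-j}) = \exp(\beta^* m_j \sigma_j)/(2\cosh(\beta^* m_j))$, whose minimum value $1/(1+\exp(2|\beta^* m_j|))$ is bounded below uniformly (since $|\beta^* m_j| \le B$, using $\|A\|_\infty \le 1$), and this uniform lower bound in turn controls the unconditional marginal by averaging. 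So the main (really the only) obstacle is bookkeeping: pinning down exactly which normalization of $A$ and which (conditional versus marginal) probability the downstream proof needs, and threading the $1/\Delta^2$ edge-weight factor consistently; the probabilistic content is the elementary one-line convexity bound above.
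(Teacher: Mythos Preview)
Your approach is essentially the same as the paper's: both write $m_i(\bsigma) = c + A_{ij}\sigma_j$ with $c$ measurable in $\bsigma_{-j}$ and then lower-bound the conditional expectation of the square. The paper argues by sign cases (at least one of $(c\pm A_{ij})^2$ is $\ge A_{ij}^2$, then multiply by the smaller conditional probability), whereas you use the identity $(c+A_{ij})^2+(c-A_{ij})^2=2c^2+2A_{ij}^2$ to pick up both terms at once; your route is slightly cleaner and in fact yields a constant twice as good.

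Your reservations about the statement are well founded and match what the paper actually does. The paper's own proof concludes with
\[
\ex_{\beta^*}[m_i^2(\bsigma)\mid \bsigma_{-j}] \;\ge\; \tfrac{A_{ij}^2}{2}\,\exp(-B)\,\pr_{\beta^*}[e_j(\bsigma)=1],
\]
i.e.\ it keeps the $A_{ij}^2$ factor explicitly and replaces $\min_\kappa \pr[\sigma_j=\kappa\mid\bsigma_{-j}]$ by the uniform lower bound $\tfrac12 e^{-B}$ times the flippability indicator, exactly along the lines you sketch. That is the form used downstream in Lemma~\ref{lem:cond_hess}, where the extra $1/\Delta^2$ from $A_{ij}^2$ is absorbed into the final $1/\Delta^3$ rate. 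So your ``clean inequality'' $\ex_{\beta^*}[m_i^2\mid\bsigma_{-j}]\ge 2A_{ij}^2\min_\kappa\pr_{\beta^*}[\sigma_j=\kappa\mid\bsigma_{-j}]$ is the right thing to prove, and the displayed lemma should be read with that normalization in mind.
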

This lower bound is only non-trivial when \emph{both} realizations $(\sigma_i, \bsigma_j)$ and $(\sigma_i, -\bsigma_j)$ are feasible under the truncation set, i.e $e_j(\bsigma) = 1$; likewise, this term only contributes to the $\phi_2(\beta; \bsigma)$ if $e_i(\bsigma) = 1$.  Towards maximizing the second derivative, we wish to select a sequence of edges $(i,j)$ such that both $e_{i}(\bsigma) = 1$ and $e_j(\bsigma) = 1$, that is, both endpoints are flippable. As each element $v_i \in I$ has at least one neighbor in $V \setminus I$, and the graph has maximum degree $\Delta$, we can construct a subset $I' \subseteq I$ of size $|I'| > \frac{n}{\Delta^2}$ such that no two elements in $I'$ share any common neighbors. With this independent set $I'$, we define a vertex bijection $h: V \rightarrow V$ as follows. For each $v \in I'$, we assign $h(v)$ to be a \emph{unique} neighbor of $v$ in $V \setminus I$. For vertices outside $I'$, we assign the remaining mappings arbitrarily, while maintaining the constraint that $h$ remains a bijection on $V$. Using Lemma \ref{lem:magnet} and the above bijection $h$, we can find a lower bound on the entire conditional second derivative.

\begin{lemma}
    Over the truncated Ising model, given a bijection $h:V\to V$ defined by the above procedure, the conditional second derivative satisfies the following first moment bound.
    \label{lem:cond_hess}
    \[
    \sum_{i=1}^n \ex_{\beta^*}[m_i(\bsigma)^2 e_{i}(\bsigma)|\bsigma_{-h(i)}] \ge \frac{n\exp(-B)}{2\Delta^3(4kd)^2}. 
    \] 
\end{lemma}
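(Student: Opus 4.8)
\textbf{Proof plan for Lemma \ref{lem:cond_hess}.}
The plan is to decompose the sum over all $n$ vertices by restricting attention only to the vertices $v \in I'$, since each such vertex is flippable with good probability and its magnetization is controlled through its partner $h(v)$. For $v \in I'$, write $j = h(v)$, a unique neighbor of $v$ in $V \setminus I$. The key observation is that the term $m_v(\bsigma)^2 e_v(\bsigma)$ can be lower bounded by using (i) Lemma \ref{lem:magnet}, which gives $\ex_{\beta^*}[m_v(\bsigma)^2 \mid \bsigma_{-j}] \ge \tfrac{1}{2}\min_{\kappa}\pr_{\beta^*}[\sigma_j = \kappa \mid \bsigma_{-j}]$, combined with the fact that $e_v(\bsigma)$ is determined once we know the flippability of $v$, which by Lemma \ref{lem:flip} (the $s_v$ reformulation, valid since $v \in I' \subseteq I \subseteq V\setminus I$ — wait, $v \in I$, so we use the second half of Lemma \ref{lem:flip} or the $s_j$ bound for neighbors) holds with probability at least $1/2$.

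First I would handle the conditional magnetization bound: for each $v \in I'$ with partner $j=h(v)$, Lemma \ref{lem:magnet} gives $\ex_{\beta^*}[m_v(\bsigma)^2 \mid \bsigma_{-j}] \ge \tfrac12 \min_\kappa \pr_{\beta^*}[\sigma_j = \kappa \mid \bsigma_{-j}]$. Next I would lower bound this marginal: conditioned on $\bsigma_{-j}$, the spin $\sigma_j$ has a logistic conditional law $\pr_{\beta^*}(\sigma_j \mid \bsigma_{-j}) \propto \exp(\beta^* m_j(\bsigma)\sigma_j)$ when $j$ is flippable, and since $|m_j(\bsigma)| \le 1$ (as $A$ has rows bounded by $1$ in absolute value given $\Delta$ entries of size $1/\Delta$) and $|\beta^*| < B$, we get $\min_\kappa \pr_{\beta^*}[\sigma_j = \kappa \mid \bsigma_{-j}] \ge \tfrac{1}{1+e^{2B}} \ge \tfrac12 e^{-2B}$ up to constants; combined with the flippability of $j$ (which happens with probability $\ge 1/2$ by the $s_j$ guarantee of Lemma \ref{lem:flip}, since $j \in V \setminus I$) this yields $\ex_{\beta^*}[m_v(\bsigma)^2 e_v(\bsigma) \mid \bsigma_{-j}] \gtrsim e^{-B}$ on the event that $v$ is flippable. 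Then I would sum over $v \in I'$: since $|I'| > n/\Delta^2$ and the neighborhoods $\{N(v) : v \in I'\}$ are disjoint (so the partners $h(v)$ are distinct and the conditioning events do not conflict), each contributing term is $\Omega(e^{-B})$ times the flippability probability $\Omega(1)$ of $v$, but we must account for the $(4kd)^2$ and $\Delta^3$ factors.

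The extra factors come from two places. The $\Delta^3$ factor: we have $|I'| \ge n/\Delta^2$ vertices, and each contributes a magnetization term of size $\Omega(1/\Delta^{?})$ — actually $m_v^2$ is $\Omega(1)$ in expectation, not $\Omega(1/\Delta^2)$, so the $n/\Delta^2$ count alone gives $n/\Delta^2$; the additional $\Delta$ factor and the $(4kd)^2$ factor must arise from the conditional flippability guarantee: by the second half of Lemma \ref{lem:flip}, only a subcollection $R \subseteq I'$ of size $|R| \ge |I'|/(2kd)^2 \ge n/(\Delta^2 (2kd)^2)$ has the property that flippability events hold jointly with probability $\ge 1/2$ along the sequence, and the factor $(4kd)^2 = 4(2kd)^2$ absorbs the $(2kd)^2$ together with a constant. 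Combining: a sum of $|R| \ge n/(\Delta^2(2kd)^2)$ terms each of conditional expectation $\gtrsim e^{-B}$, giving $\ex_{\beta^*}\big[\sum_i m_i(\bsigma)^2 e_i(\bsigma) \mid \bsigma_{-h(i)}\big] \ge \tfrac{n e^{-B}}{2\Delta^3 (4kd)^2}$ after tracking constants (the remaining $\Delta$ versus $\Delta^2$ discrepancy and the constant $1/8$ or so get folded into the stated bound). The main obstacle I expect is making the conditioning rigorous: the statement conditions each term $m_i(\bsigma)^2 e_i(\bsigma)$ on a \emph{different} sigma-algebra $\bsigma_{-h(i)}$, so one cannot simply add conditional expectations under a common conditioning; care is needed to argue that the bijection structure (disjoint neighborhoods for $I'$, unique partners $h(v)$) makes the per-term bounds simultaneously valid, and that the $s_j$/flippability events for distinct $v \in R$ compose multiplicatively as in the chained conditional bound of Lemma \ref{lem:flip} — this is precisely why the neighborhood-disjoint subcollection $R$ and the $(2kd)^2$ loss appear.
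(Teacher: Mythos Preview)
Your plan is essentially the same as the paper's: restrict to a neighborhood-disjoint subcollection $R$ inside the independent set, apply Lemma~\ref{lem:magnet} for the conditional magnetization lower bound on each term, invoke the second half of Lemma~\ref{lem:flip} for the chained flippability of $i\in R$, and use the first half of Lemma~\ref{lem:flip} (the $s_{h(i)}$ bound, since $h(i)\in V\setminus I$) for the partner's flippability. You have correctly identified the main technical obstacle (different conditioning $\bsigma_{-h(i)}$ for each term) and the reason the neighborhood-disjoint set $R$ with the $(2kd)^2$ loss is needed.

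One accounting point to correct: your assertion that ``$m_v^2$ is $\Omega(1)$ in expectation'' is wrong. Lemma~\ref{lem:magnet} yields $\ex_{\beta^*}[m_v^2\mid\bsigma_{-j}]\ge \tfrac{A_{vj}^2}{2}\exp(-B)\pr_{\beta^*}[e_j=1]$, and under Assumption~\ref{asmp:model} the nonzero entries satisfy $A_{vj}^2=1/\Delta^2$, so the per-term contribution is $\Omega(e^{-B}/\Delta^2)$, not $\Omega(e^{-B})$. This is exactly the missing $\Delta$ factor you flagged as a ``$\Delta$ versus $\Delta^2$ discrepancy''; once you track it, the bookkeeping closes to $\Delta^3(4kd)^2$ as stated. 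The paper's proof builds $R\subset I$ directly (disjoint in both the SAT dependency graph and $G$) rather than passing through $I'$ first, but this is cosmetic.
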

\paragraph{Establishing Lemma \ref{lem:hess_mean}}
  Armed with the lower bound on the conditional expectation of $\phi_2(\beta; \bsigma)$, to obtain our final lower bound we control the variance of $\sum_{i=1}^n \ex_{\beta^*}[m_i^2(\bsigma) e_{i}(\bsigma)|\bsigma_{-h(i)}]$ with the method of exchangeable pairs, in a similar fashion to Lemma \ref{lem:beta_bound}. We then apply Chebyshev's inequality to the conditional variance to obtain our bound in probability.

\section*{Conclusion and Future Work}
In this paper, we present an affirmative answer to the challenge of single sample learning in the truncated Ising model, at all temperatures $\beta \in \mathcal{O}(1)$, giving a sufficient condition for the truncation set $S$ to ensure consistent inference, and extending the existing framework \emph{beyond} boolean product distributions. Towards this goal, we craft concentration inequalities for the first and second derivatives of the log-pseudolikelihood via local connectivity arguments over the domain. 

The present work opens the door to important future questions : (i) Given the above framework, does analyzing measures with random Hamiltonians, like those of the Sherrington-Kirkpatrick model, alleviate the dependence on $\Delta$?
(ii) Do logistic regression techniques used to estimate untruncated graphical models apply to the constrained setting?
(iii) Is there a way to simultaneously remove the $o(n^{1/6})$
assumption on the maximum degree of the graph while improving the rate of consistency?

\newpage

\bibliography{./bibtex/new-bibs, bibtex/hard_constrained}

@inproceedings{daskalakis2019regression,
  title={Regression from dependent observations},
  author={Daskalakis, Constantinos and Dikkala, Nishanth and Panageas, Ioannis},
  booktitle={Proceedings of the 51st Annual ACM SIGACT Symposium on Theory of Computing},
  pages={881--889},
  year={2019}
}

@article{chatterjee2007estimation,
  title={Estimation in spin glasses: A first step},
  author={Chatterjee, Sourav},
  year={2007}
}

@inproceedings{galanis2024learning,
  title={Learning Hard-Constrained Models with One Sample},
  author={Galanis, Andreas and Kalavasis, Alkis and Kandiros, Anthimos Vardis},
  booktitle={Proceedings of the 2024 Annual ACM-SIAM Symposium on Discrete Algorithms (SODA)},
  pages={3184--3196},
  year={2024},
  organization={SIAM}
}

@book{frieze2015introduction,
  title={Introduction to random graphs},
  author={Frieze, Alan and Karo{\'n}ski, Micha{\l}},
  year={2015},
  publisher={Cambridge University Press}
}

@article{GALANIS_2016,
   title={Inapproximability of the Partition Function for the Antiferromagnetic Ising and Hard-Core Models},
   volume={25},
   ISSN={1469-2163},
   url={http://dx.doi.org/10.1017/S0963548315000401},
   DOI={10.1017/s0963548315000401},
   number={4},
   journal={Combinatorics, Probability and Computing},
   publisher={Cambridge University Press (CUP)},
   author={Galanis, Andreas and Stefankovic, Daniel and Vigoda, Eric},
   year={2016},
   month=feb, pages={500–559} }

@article{ising1925beitrag,
  title={Beitrag zur theorie des ferromagnetismus},
  author={Ising, Ernst},
  journal={Zeitschrift f{\"u}r Physik},
  volume={31},
  number={1},
  pages={253--258},
  year={1925},
  publisher={Springer}
}

@inproceedings{qin2011polynomial,
  title={A polynomial-time algorithm for image segmentation using Ising models},
  author={Qin, Peiwei and Zhao, Jieyu},
  booktitle={2011 Seventh International Conference on Natural Computation},
  volume={2},
  pages={932--935},
  year={2011},
  organization={IEEE}
}

@article{lubetzky2013cutoff,
  title={Cutoff for the Ising model on the lattice},
  author={Lubetzky, Eyal and Sly, Allan},
  journal={Inventiones mathematicae},
  volume={191},
  pages={719--755},
  year={2013},
  publisher={Springer}
}

@inproceedings{sly2012computational,
  title={The computational hardness of counting in two-spin models on d-regular graphs},
  author={Sly, Allan and Sun, Nike},
  booktitle={2012 IEEE 53rd Annual Symposium on Foundations of Computer Science},
  pages={361--369},
  year={2012},
  organization={IEEE}
}

@inproceedings{blanca2018structure,
  title={Structure Learning of $H$-colorings},
  author={Blanca, Antonio and Chen, Zongchen and {\v{S}}tefankovi{\v{c}}, Daniel and Vigoda, Eric},
  booktitle={Algorithmic Learning Theory},
  pages={152--185},
  year={2018},
  organization={PMLR}
}

@inproceedings{bhattacharyya2021efficient,
  title={Efficient statistics for sparse graphical models from truncated samples},
  author={Bhattacharyya, Arnab and Desai, Rathin and Nagarajan, Sai Ganesh and Panageas, Ioannis},
  booktitle={International Conference on Artificial Intelligence and Statistics},
  pages={1450--1458},
  year={2021},
  organization={PMLR}
}

@misc{bhattacharya2021parameterestimationundirectedgraphical,
      title={Parameter Estimation for Undirected Graphical Models with Hard Constraints}, 
      author={Bhaswar B. Bhattacharya and Kavita Ramanan},
      year={2021},
      eprint={2008.09925},
      archivePrefix={arXiv},
      primaryClass={math.PR},
      url={https://arxiv.org/abs/2008.09925}, 
}

@book{BV04,
	address = {Cambridge, UK},
	author = {Stephen P. Boyd and Lieven Vandenberghe},
	date-added = {2011-12-20 15:21:09 +0000},
	date-modified = {2011-12-20 15:24:02 +0000},
	publisher = {Cambridge University Press},
	title = {Convex Optimization},
	year = {2004}}

@article{bubeck2015convex,
  title={Convex optimization: Algorithms and complexity},
  author={Bubeck, S{\'e}bastien and others},
  journal={Foundations and Trends{\textregistered} in Machine Learning},
  volume={8},
  number={3-4},
  pages={231--357},
  year={2015},
  publisher={Now Publishers, Inc.}
}

@article{onsager1944crystal,
  title={Crystal statistics. I. A two-dimensional model with an order-disorder transition},
  author={Onsager, Lars},
  journal={Physical review},
  volume={65},
  number={3-4},
  pages={117},
  year={1944},
  publisher={APS}
}

@inproceedings{WangCMN15,
  author       = {Yu Wang and
                  Guanqun Cao and
                  Shiwen Mao and
                  R. Mark Nelms},
  title        = {Analysis of solar generation and weather data in smart grid with simultaneous
                  inference of nonlinear time series},
  booktitle    = {2015 {IEEE} Conference on Computer Communications Workshops, {INFOCOM}
                  Workshops, Hong Kong, China, April 26 - May 1, 2015},
  pages        = {600--605},
  publisher    = {{IEEE}},
  year         = {2015}
}

@article{guo2019uniform,
  title={Uniform sampling through the Lov{\'a}sz local lemma},
  author={Guo, Heng and Jerrum, Mark and Liu, Jingcheng},
  journal={Journal of the ACM (JACM)},
  volume={66},
  number={3},
  pages={1--31},
  year={2019},
  publisher={ACM New York, NY, USA}
}

@article{ghosal2020joint,
  title={Joint estimation of parameters in Ising model},
  author={Ghosal, Promit and Mukherjee, Sumit},
  year={2020}
}

@article{bhattacharya2018inference,
  title={Inference in Ising models},
  author={Bhattacharya, Bhaswar B and Mukherjee, Sumit},
  year={2018}
}

@article{besag1975statistical,
  title={Statistical analysis of non-lattice data},
  author={Besag, Julian},
  journal={Journal of the Royal Statistical Society Series D: The Statistician},
  volume={24},
  number={3},
  pages={179--195},
  year={1975},
  publisher={Oxford University Press}
}

@misc{mukherjee2022highdimensionallogisticregression,
      title={High Dimensional Logistic Regression Under Network Dependence}, 
      author={Somabha Mukherjee and Ziang Niu and Sagnik Halder and Bhaswar B. Bhattacharya and George Michailidis},
      year={2022},
      eprint={2110.03200},
      archivePrefix={arXiv},
      primaryClass={math.ST},
      url={https://arxiv.org/abs/2110.03200}, 
}

@misc{fotakis2022efficientparameterestimationtruncated,
      title={Efficient Parameter Estimation of Truncated Boolean Product Distributions}, 
      author={Dimitris Fotakis and Alkis Kalavasis and Christos Tzamos},
      year={2022},
      eprint={2007.02392},
      archivePrefix={arXiv},
      primaryClass={cs.LG},
      url={https://arxiv.org/abs/2007.02392}, 
}

@article{harris2013add,
  title={The add health study: Design and accomplishments},
  author={Harris, Kathleen Mullan},
  year={2013}
}

@article{eng2019transcriptome,
  title={Transcriptome-scale super-resolved imaging in tissues by RNA seqFISH+},
  author={Eng, Chee-Huat Linus and Lawson, Michael and Zhu, Qian and Dries, Ruben and Koulena, Noushin and Takei, Yodai and Yun, Jina and Cronin, Christopher and Karp, Christoph and Yuan, Guo-Cheng and others},
  journal={Nature},
  volume={568},
  number={7751},
  pages={235--239},
  year={2019},
  publisher={Nature Publishing Group UK London}
}

@inproceedings{ghosh2001lateral,
  title={Lateral inhibition through delta-notch signaling: A piecewise affine hybrid model},
  author={Ghosh, Ronojoy and Tomlin, Claire J},
  booktitle={International Workshop on Hybrid Systems: Computation and Control},
  pages={232--246},
  year={2001},
  organization={Springer}
}

@inproceedings{durvy2006packing,
  title={A packing approach to compare slotted and non-slotted medium access control},
  author={Durvy, Mathilde and Thiran, Patrick},
  booktitle={Proceedings of IEEE INFOCOM 2006. 25TH IEEE International Conference on Computer Communications},
  year={2006}
}

@article{durvy2009fairness,
  title={On the fairness of large CSMA networks},
  author={Durvy, Mathilde and Dousse, Olivier and Thiran, Patrick},
  journal={IEEE Journal on Selected Areas in Communications},
  volume={27},
  number={7},
  pages={1093--1104},
  year={2009},
  publisher={IEEE}
}

@inproceedings{karvo2002blocking,
  title={Blocking probabilities of multi-layer multicast streams},
  author={Karvo, Jouni and Aalto, Samuli and Virtamo, Jorma},
  booktitle={Workshop on High Performance Switching and Routing, Merging Optical and IP Technologie},
  pages={268--277},
  year={2002},
  organization={IEEE}
}

@article{luen2006nonmonotonicity,
  title={Nonmonotonicity of phase transitions in a loss network with controls},
  author={Luen, Brad and Ramanan, Kavita and Ziedins, Ilze},
  year={2006}
}

@article{zafer2006blocking,
  title={Blocking probability and channel assignment in wireless networks},
  author={Zafer, Murtaza and Modiano, Eytan},
  journal={ieee transactions on Wireless communications},
  volume={5},
  number={4},
  pages={869--879},
  year={2006},
  publisher={IEEE}
}

@book{talagrand2010mean,
  title={Mean field models for spin glasses: Volume I: Basic examples},
  author={Talagrand, Michel},
  volume={54},
  year={2010},
  publisher={Springer Science \& Business Media}
}

@book{talagrand2003spin,
  title={Spin glasses: a challenge for mathematicians: cavity and mean field models},
  author={Talagrand, Michel},
  volume={46},
  year={2003},
  publisher={Springer Science \& Business Media}
}

@inproceedings{bresler2015efficiently,
  title={Efficiently learning Ising models on arbitrary graphs},
  author={Bresler, Guy},
  booktitle={Proceedings of the forty-seventh annual ACM symposium on Theory of computing},
  pages={771--782},
  year={2015}
}

@article{hamilton2017information,
  title={Information theoretic properties of Markov random fields, and their algorithmic applications},
  author={Hamilton, Linus and Koehler, Frederic and Moitra, Ankur},
  journal={Advances in Neural Information Processing Systems},
  volume={30},
  year={2017}
}

@misc{dagan2020learningisingmodelsmultiple,
      title={Learning Ising models from one or multiple samples}, 
      author={Yuval Dagan and Constantinos Daskalakis and Nishanth Dikkala and Anthimos Vardis Kandiros},
      year={2020},
      eprint={2004.09370},
      archivePrefix={arXiv},
      primaryClass={math.ST},
      url={https://arxiv.org/abs/2004.09370}, 
}

@misc{dagan2021statisticalestimationdependentdata,
      title={Statistical Estimation from Dependent Data}, 
      author={Yuval Dagan and Constantinos Daskalakis and Nishanth Dikkala and Surbhi Goel and Anthimos Vardis Kandiros},
      year={2021},
      eprint={2107.09773},
      archivePrefix={arXiv},
      primaryClass={cs.LG},
      url={https://arxiv.org/abs/2107.09773}, 
}

@misc{daskalakis2019testingisingmodels,
      title={Testing Ising Models}, 
      author={Constantinos Daskalakis and Nishanth Dikkala and Gautam Kamath},
      year={2019},
      eprint={1612.03147},
      archivePrefix={arXiv},
      primaryClass={cs.DS},
      url={https://arxiv.org/abs/1612.03147}, 
}

@misc{galanis2025oneshotlearningksat,
      title={One-Shot Learning for k-SAT}, 
      author={Andreas Galanis and Leslie Ann Goldberg and Xusheng Zhang},
      year={2025},
      eprint={2502.07135},
      archivePrefix={arXiv},
      primaryClass={cs.DS},
      url={https://arxiv.org/abs/2502.07135}, 
}

@misc{daskalakis2020logisticregressionpeergroupeffectsinference,
      title={Logistic-Regression with peer-group effects via inference in higher order Ising models}, 
      author={Constantinos Daskalakis and Nishanth Dikkala and Ioannis Panageas},
      year={2020},
      eprint={2003.08259},
      archivePrefix={arXiv},
      primaryClass={stat.ML},
      url={https://arxiv.org/abs/2003.08259}, 
}

@inproceedings{diakonikolas2021outlier,
  title={Outlier-robust learning of ising models under dobrushin’s condition},
  author={Diakonikolas, Ilias and Kane, Daniel M and Stewart, Alistair and Sun, Yuxin},
  booktitle={Conference on Learning Theory},
  pages={1645--1682},
  year={2021},
  organization={PMLR}
}

@article{mukherjee2022estimation,
  title={Estimation in tensor Ising models},
  author={Mukherjee, Somabha and Son, Jaesung and Bhattacharya, Bhaswar B},
  journal={Information and Inference: A Journal of the IMA},
  volume={11},
  number={4},
  pages={1457--1500},
  year={2022},
  publisher={Oxford University Press}
}

@inproceedings{daskalakis2018efficient,
  title={Efficient statistics, in high dimensions, from truncated samples},
  author={Daskalakis, Constantinos and Gouleakis, Themis and Tzamos, Chistos and Zampetakis, Manolis},
  booktitle={2018 IEEE 59th Annual Symposium on Foundations of Computer Science (FOCS)},
  pages={639--649},
  year={2018},
  organization={IEEE}
}

@inproceedings{daskalakis2019computationally,
  title={Computationally and statistically efficient truncated regression},
  author={Daskalakis, Constantinos and Gouleakis, Themis and Tzamos, Christos and Zampetakis, Manolis},
  booktitle={Conference on learning theory},
  pages={955--960},
  year={2019},
  organization={PMLR}
}

@inproceedings{de2023testing,
  title={Testing convex truncation},
  author={De, Anindya and Nadimpalli, Shivam and Servedio, Rocco A},
  booktitle={Proceedings of the 2023 Annual ACM-SIAM Symposium on Discrete Algorithms (SODA)},
  pages={4050--4082},
  year={2023},
  organization={SIAM}
}

@inproceedings{nagarajan2020analysis,
  title={On the Analysis of EM for truncated mixtures of two Gaussians},
  author={Nagarajan, Sai Ganesh and Panageas, Ioannis},
  booktitle={Algorithmic Learning Theory},
  pages={634--659},
  year={2020},
  organization={PMLR}
}

@inproceedings{plevrakis2021learning,
  title={Learning from censored and dependent data: The case of linear dynamics},
  author={Plevrakis, Orestis},
  booktitle={Conference on Learning Theory},
  pages={3771--3787},
  year={2021},
  organization={PMLR}
}

@inproceedings{fotakis2021efficient,
  title={Efficient algorithms for learning from coarse labels},
  author={Fotakis, Dimitris and Kalavasis, Alkis and Kontonis, Vasilis and Tzamos, Christos},
  booktitle={Conference on Learning Theory},
  pages={2060--2079},
  year={2021},
  organization={PMLR}
}

@article{lugosi2024hardness,
  title={On the hardness of learning from censored and nonstationary demand},
  author={Lugosi, G{\'a}bor and Markakis, Mihalis G and Neu, Gergely},
  journal={INFORMS Journal on Optimization},
  volume={6},
  number={2},
  pages={63--83},
  year={2024},
  publisher={INFORMS}
}

@article{gebauer2016local,
  title={The local lemma is asymptotically tight for SAT},
  author={Gebauer, Heidi and Szab{\'o}, Tibor and Tardos, G{\'a}bor},
  journal={Journal of the ACM (JACM)},
  volume={63},
  number={5},
  pages={1--32},
  year={2016},
  publisher={ACM New York, NY, USA}
}

@article{bertrand2000network,
  title={Network effects and welfare cultures},
  author={Bertrand, Marianne and Luttmer, Erzo FP and Mullainathan, Sendhil},
  journal={The quarterly journal of economics},
  volume={115},
  number={3},
  pages={1019--1055},
  year={2000},
  publisher={MIT Press}
}

@article{duflo2003role,
  title={The role of information and social interactions in retirement plan decisions: Evidence from a randomized experiment},
  author={Duflo, Esther and Saez, Emmanuel},
  journal={The Quarterly journal of economics},
  volume={118},
  number={3},
  pages={815--842},
  year={2003},
  publisher={MIT Press}
}

@article{sacerdote2001peer,
  title={Peer effects with random assignment: Results for Dartmouth roommates},
  author={Sacerdote, Bruce},
  journal={The Quarterly journal of economics},
  volume={116},
  number={2},
  pages={681--704},
  year={2001},
  publisher={MIT Press}
}
\bibliographystyle{alpha}

\newpage
\appendix

\section{Related Work and Additional Background}
The Ising model originated as a mathematical model of ferromagnetism on subgraphs of the lattice \(\mathbb{Z}^d\), capturing local interactions in physical systems. Ising solved the one-dimensional case in his thesis \cite{ising1925beitrag}, while Onsager later resolved the two-dimensional case \cite{onsager1944crystal}, revealing a continuous phase transition between ferromagnetic and paramagnetic states. Beyond low dimensions, the Ising model also serves as a foundational example of spin glasses, aiding both condensed matter physics and probability theory in understanding complex magnetic materials and high-dimensional correlated loss landscapes \cite{talagrand2003spin, talagrand2010mean}.

Our inquiry into the Ising model will be statistical in nature, concerning the consistent estimation of the inverse temperature parameter under the presence of truncation using a single sample. Despite the seeming simplicity of this task, the presence of phase transitions yields it to be theoretically impossible in certain regimes; our results stand in light of these challenges. One of the primary difficulties in our task is our graph $G$, by extension interaction tensor $A$, is arbitrary (although of a somewhat bounded degree), and thus our results hold in a regime that is neither fully locally connected or mean-field.   The first work to prove such a result was \cite{chatterjee2007estimation}, who via use of the technique of exchangeable pairs, a variation on Stein's inequality to prove variance bounds, was able to demonstrate the consistency of the maximum pseudolikelihood estimate derived from a single sample(an objective that will expounded on in the sequel) given the log partition function $F_{G, \beta, n} = \log(Z_{G,\beta, n})$ is diverges with $n$ in the large data limit. As an example, when this seemingly innoucous assumption is not upheld, under the mean field Curie-Weiss model, i.e. \[
\pr_{CW}(\bsigma)  = \frac{1}{Z_\beta} \exp\left(\beta \sum_{i,j \in [n]} \frac{1}{n}\sigma_i \sigma_j  \right), \tag{Curie-Weiss}
\] consistent estimation is \emph{impossible}, as simple calculus yields that $\lim_{n \to \infty} F_{G, \beta, n} = \mathcal{O}(1)$.  Moreover, if $\beta$ diverges to infinity with $n$, the psuedolikelihood objective ceases to be strongly concave, collapsing the Fisher information, and rendering estimation impossible. Beyond the estimation of the inverse temperature, \cite{mukherjee2022estimation} was able to extend the regime of Chatterjee \cite{chatterjee2007estimation},  demonstrating results for the \emph{joint} estimation of the inverse temperature $\beta$ and the external field $h$. Viewing the task of estimating the inverse temperature as structure estimation over a parametric class of interaction matrices, i.e parameterized by $\beta$, \cite{dagan2020learningisingmodelsmultiple} generalized this setting to provide learning guarantees for large classes of parametric spaces, relying on a clever use of conditioning to use Dobrushin's condition at all constant temperatures. 

Beyond single sample learning, viewing the Ising model as a Markov random field, there is a larger body of work devoted to \emph{structure} learning of the graph underlying the model using \emph{multiple} samples. In a breakthrough work Bresler \cite{bresler2015efficiently}, demonstrated how to effciently estimate the strength of links in the graph underlying the Ising model by way of bounding the marginal influence each node recieves. Building on this, \cite{hamilton2017information} generalized this work to subsume models with higher order interaction terms and multiple possible spin states.

A parallel line of work has also investigated the feasibility of learning Markov random fields under \emph{hard}-constrained distributions with a \emph{single} sample. This line of work commenced with \cite{bhattacharya2021parameterestimationundirectedgraphical} studying the fugacity parameter of the hard core model, i.e. a probability distribution over independent sets over a graph $G = (V,E)$ represented by binary vectors $\bsigma \in \{0,1\}^n$, where $\sigma_i = 1$ indicates the node is included in the independent set \[
\pr_{\lambda}^N (\bsigma) = \frac{1}{Z_{G, \lambda}} \lambda^{\sum_{u=1}^N \sigma_u} \prod_{(u,v) \in E}\mathbf{1}\{\sigma_u + \sigma_v \le 1\}. \tag{Hard Core Model}
\]
Following up on this (and closer to our setting), \cite{galanis2024learning} considered the feasibility of learning boolean product distributions over truncated portions of the boolean hypercube, making use of the \emph{tilted-$k$-SAT} model over $S \subset \{0,1\}^n$, where $S$ is defined to be the set of solutions of a fixed bounded degree $k$-CNF formula $\Phi$. 
\[
\pr_\beta(\bsigma)  = \frac{1}{Z_{\beta}} \exp\left( \beta \sum_{i \in [n]} \sigma_i \right)\mathbf{1}\{\bsigma \in S\}  \tag{Tilted K-SAT}
\]

Lastly, there has been a substantial amount of interest in constrained normal form formulae. The literature is multi-faceted, and we only recount the literature, pertinent to our setting. In the bounded-degree setting, it is well known from \cite{gebauer2016local} that the satisfiability threshold ( $d \lesssim 2^{k/2})$, that is the regime of the degree parameter with respect to the clause size is guaranteed to have a solution, coincides with the ability to apply the Lovascz Local Lemma \cite{guo2019uniform}, a powerful application of the probabilistic method. Moreover, there has been substantial inquiry into random $k$-CNF formulae and their solutions, as a function of the clause density $\alpha = m/n$, where $m$ is the number of clauses in the formula and the multitude of other phase transitions governing the intrinsic geometry of the solution space, i.e. how do solutions cluster together and what can be said about local connections between them. Our results in both settings, take hold in the Lovascz Local Lemma regime where a solution under an average draw has many neighbors Hamming-distance one away in $S \subset \{-1,1\}^n$. 

As a first step towards estimating the inverse temperature, our work lays the statistical groundwork to guarantee there exists an objective whose objective yields a consistent estimator for $\beta$ and an algorithm to efficiently find it. This can be seen as a generalization of both regimes, as the Ising model with external field generalizing the tilted-$k$-SAT model by introducing a quadratic interaction term and extending the external field $h$ to be an arbitrary vector in the $n-1$-sphere, i.e $\|h\|_2 \le 1$, rather than the all ones vector. Beyond deterministic truncation, our results are the first to hold the broader class of solution sets under random truncation. 

\subsection{Conjunctive Norm Formulae}
Conjunctive normal form (CNF) is a canonical way of expressing Boolean formulas as a conjunction of disjunctions, or equivalently, as an "AND" of "OR" clauses. Each clause is a disjunction of literals, where a literal is either a Boolean variable or its negation. A CNF formula is said to be a \( k \)-CNF formula (or a \( k \)-SAT instance) if every clause contains exactly \( k \) literals. For example, \((x_1 \vee \neg x_2 \vee x_3) \wedge (\neg x_1 \vee x_4 \vee x_5)\) is a 3-CNF formula with two clauses. 

\subsection{Exchangeable Pairs}
In the context of the Ising model, the method of exchangeable pairs provides a powerful technique for obtaining nonasymptotic variance bounds for functions of the boolean hypercube. In the context of the Ising model $\mu_{G, \beta}$, given a function \(f : \{-1,1\}^n \to \mathbb{R}\), to bound \(\operatorname{Var}_\mu(f)\), the exchangeable pairs method constructs a pair \((\bsigma, \bsigma')\) such that \((\bsigma, \bsigma') \overset{d}{=} (\bsigma', \bsigma)\) and the transition from \(\bsigma\) to \(\bsigma'\) is obtained via a single-site Glauber dynamics step (in our case truncated Glauber dynamics expanded on in the sequel).

Concretely, let \(\bsigma'\) be obtained by resampling the spin at a uniformly chosen site \(i \in V\) according to the conditional distribution \(\pr_{\Phi, \beta}(\cdot \mid \bsigma_{V \setminus \{i\}})\). Then \((\bsigma, \bsigma')\) is an exchangeable pair. Let \(F(\bsigma, \bsigma') = \ex[f(\bsigma) - f(\bsigma') | \bsigma_{-i}]\). The variance of \(f\) can be bounded via
\[
\operatorname{Var}_\mu(f) \le \frac{1}{2} \mathbb{E}_\mu\left[ (f(\bsigma) - f(\bsigma')) F(\bsigma, \bsigma') \right].
\]
Under Lipschitz continuity with respect to the Hamming distance, this expression can be further bounded by quantities involving local influences and conditional variances, allowing for the control \(\operatorname{Var}_\mu(f)\) in terms of the geometry of \(G\) and the interaction strengths \(A_{ij}\).

\section{Algorithm For Maximizing the Pseudo-Likelihood}
In this section, we present a polynomial-time algorithm for optimizing the pseudo-likelihood objective using projected gradient descent. To guarantee convergence to the optimum, we rely on the following lemma from \cite{bubeck2015convex}.

\begin{lemma}[\cite{bubeck2015convex} Theorem 3.10]
    \label{lem:pgd}
    Let $f$ be $\alpha-$strongly convex and $\lambda-$smooth on the convex set $\mathcal{X}$. Then projected gradient descent with step-size $\eta = 1/\lambda$, satisfies for $t \ge 0$, \[
    \|\bx_{t+1} - \bx^*\|_2^2 \le \exp(\alpha t/\lambda)\|\bx_1 - \bx^*\|_2^2.
    \]Therefore, setting $R = \|\bx_1 - \bx^*\|_2$ and $t = 2(\lambda/\alpha)(\log(R) - \log(\epsilon))$ guarantees that $\|\bx_t - \bx^*\|_2 \le \epsilon$.
\end{lemma}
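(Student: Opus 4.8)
The statement is the textbook linear-convergence bound for projected gradient descent on a strongly convex, smooth objective (it is quoted from \cite{bubeck2015convex}), so the plan is simply to reproduce that argument. Write $T(\bx) := \Pi_{\mathcal{X}}\!\big(\bx - \tfrac1\lambda\nabla f(\bx)\big)$ for the projected-gradient map, where $\Pi_{\mathcal{X}}$ is the Euclidean projection onto $\mathcal{X}$, so that the iterates are $\bx_{t+1}=T(\bx_t)$. The two facts I would extract first are: (i) the constrained minimizer $\bx^{*}$ is a fixed point of $T$, i.e. $T(\bx^{*})=\bx^{*}$ --- this is the first-order optimality condition $\langle\nabla f(\bx^{*}),\by-\bx^{*}\rangle\ge 0$ for all $\by\in\mathcal{X}$, rephrased through the variational characterization of the projection; and (ii) $\Pi_{\mathcal{X}}$ is $1$-Lipschitz (nonexpansive), a standard consequence of $\mathcal{X}$ being convex.

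The heart of the proof is then to show that the gradient step $\bx\mapsto \bx-\tfrac1\lambda\nabla f(\bx)$ is a contraction with modulus $1-\alpha/\lambda$. I would establish, for all $\bx,\by\in\mathcal{X}$,
\[
\Big\|\big(\bx-\tfrac1\lambda\nabla f(\bx)\big)-\big(\by-\tfrac1\lambda\nabla f(\by)\big)\Big\|_2^2 \;\le\; \Big(1-\tfrac{\alpha}{\lambda}\Big)\|\bx-\by\|_2^2 ,
\]
by expanding the square and bounding the cross term via the standard ``strong convexity plus smoothness'' inequality $\langle\nabla f(\bx)-\nabla f(\by),\bx-\by\rangle \ge \tfrac{\alpha\lambda}{\alpha+\lambda}\|\bx-\by\|_2^2 + \tfrac{1}{\alpha+\lambda}\|\nabla f(\bx)-\nabla f(\by)\|_2^2$; the $\tfrac1{\lambda^2}\|\nabla f(\bx)-\nabla f(\by)\|_2^2$ term from the expansion is then absorbed, leaving the factor $\tfrac{\lambda-\alpha}{\lambda+\alpha}\le 1-\tfrac{\alpha}{\lambda}$ (when $f$ is twice differentiable this is just the spectral bound $0\preceq I-\tfrac1\lambda\nabla^2 f(\cdot)\preceq (1-\tfrac{\alpha}{\lambda})I$, which is the relevant case in the application, where $f=n^{-1}\phi(\cdot;\bsigma)$ on $\mathcal{X}=(-B,B)$ is $O(1)$-smooth and strongly convex by Lemma~\ref{lem:hess_mean}). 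Composing (ii) with this contraction gives $\|T(\bx)-T(\by)\|_2\le (1-\alpha/\lambda)^{1/2}\|\bx-\by\|_2$.

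Finally I would iterate: since $\bx^{*}=T(\bx^{*})$,
\[
\|\bx_{t+1}-\bx^{*}\|_2^2 = \|T(\bx_t)-T(\bx^{*})\|_2^2 \le \Big(1-\tfrac{\alpha}{\lambda}\Big)\|\bx_t-\bx^{*}\|_2^2 \le \Big(1-\tfrac{\alpha}{\lambda}\Big)^{t}\|\bx_1-\bx^{*}\|_2^2 \le \exp\!\Big(-\tfrac{\alpha t}{\lambda}\Big)\|\bx_1-\bx^{*}\|_2^2 ,
\]
using $1-u\le e^{-u}$ in the last step; this is the displayed bound (the exponent in the lemma should carry a minus sign). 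For the concluding sentence, setting $R=\|\bx_1-\bx^{*}\|_2$ and solving $\exp(-\alpha t/\lambda)R^2\le\epsilon^2$ yields $t\ge \tfrac{2\lambda}{\alpha}\log(R/\epsilon)=\tfrac{2\lambda}{\alpha}(\log R-\log\epsilon)$, which guarantees $\|\bx_t-\bx^{*}\|_2\le\epsilon$ (up to the harmless off-by-one between the indices $\bx_t$ and $\bx_{t+1}$). The only step that is not entirely mechanical is the contraction bound of the middle paragraph --- in particular invoking the co-coercivity-type inequality so that the squared-gradient term cancels, which is exactly what pins down the step size $\eta=1/\lambda$; everything else (nonexpansiveness of the projection, the fixed-point identity, telescoping) is routine.
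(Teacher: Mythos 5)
This lemma is imported verbatim from \cite{bubeck2015convex} (Theorem 3.10) and the paper supplies no proof of its own, so there is nothing internal to compare against; your job was essentially to reconstruct the textbook argument, and you have done so correctly. Your route (nonexpansiveness of the Euclidean projection, the fixed-point identity for the constrained minimizer, and the contraction property of the map $\bx \mapsto \bx - \tfrac{1}{\lambda}\nabla f(\bx)$ via the co-coercivity inequality for $\alpha$-strongly convex, $\lambda$-smooth functions) differs slightly in bookkeeping from Bubeck's own proof, which instead works with the projected gradient mapping $g_{\mathcal{X}}$ and a one-step descent lemma (his Lemma 3.11) to get $\|\bx_{t+1}-\bx^*\|_2^2 \le (1-\alpha/\lambda)\|\bx_t-\bx^*\|_2^2$ directly; both arguments yield the same per-step factor $1-\alpha/\lambda$ and then conclude with $1-u\le e^{-u}$, so the difference is cosmetic. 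You are also right that the exponent in the displayed bound should be $\exp(-\alpha t/\lambda)$; as printed in the paper the sign is a typo (the bound would otherwise grow in $t$), and the concluding choice of $t = 2(\lambda/\alpha)(\log R - \log\epsilon)$ only makes sense with the negative exponent, exactly as your final paragraph computes.
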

Given the $\Omega(n/\Delta^3)$-strong convexity of the pseudolikelihood function (Lemma~\ref{lem:hess_mean}) and the $1$-Lipschitz continuity of its gradient, we apply projected gradient descent (PGD) with step size $\eta = 1$ to obtain a $1/\sqrt{n}$-accurate estimate of the MPLE. The algorithm is presented in Algorithm~\ref{alg:pgd-logistic}.

\begin{algorithm}
\caption{Projected Gradient Descent}\label{alg:pgd-logistic}
\begin{algorithmic}[1]

  \State \textbf{Input:} Vector sample $\bsigma$, Magnetizations $m_i(\bsigma) = \sum_j A_{ij} \bsigma_j$, $k$-SAT Formula $\Phi = \Phi_{n,k,d}$
  \State \textbf{Output:} Maximum Pseudolikelihood Estimate $\hat{\beta}$
  \State Initialize: $\beta^0 = 0$, $\textrm{grad} = +\infty$, $\eta = 1$, flippable indices $\filter(\bsigma) = \emptyset$
  \For{ $i$ in $\{1, ..., n\}$}
    \If{ $(-\sigma_i, \bsigma_{-i})$ is a satisfying assignment of $\Phi$}
    \State $\filter(\bsigma) \gets \filter(\bsigma) \cup \{i\}$
    \EndIf
  \EndFor
  \State $t \gets 0$
  \While{$|\textrm{grad}| > \frac{1}{\sqrt{n}}$}
    \State $\textrm{grad} \gets -\frac{1}{n} \sum_{i \in \filter(\bsigma)} \left[ m_i(\bsigma)( \sigma_i - \tanh(\beta^t m_i(\bsigma))\right]$
    \State $\beta^{t+1} \gets \beta^t - \eta \textrm{grad}$ 
    \State $t \gets t + 1$

    \If{$\beta^{t+1} < -B$}
      \State $\beta^{t+1} \gets -B$
    \EndIf
    \If{$\beta^{t+1} > B$}
      \State $\beta^{t+1} \gets B$
    \EndIf
  \EndWhile
  \State \Return $\beta_t$
\end{algorithmic}
\end{algorithm}

\section{Omitted Proofs of Section \ref{sec:roadmap} (Proof of Theorem~\ref{thm:main})}
In this section, we give a proof of Theorem \ref{thm:main}, in the process demonstrating Lemma \ref{lem:sketch}.
\begin{proofthreeone}
Recall the first and second derivative bounds proved in Lemma \ref{lem:beta_bound} and Lemma \ref{lem:hess_mean}, respectively   \[
 \pr_{\beta^*}\left[ \left|\phi_1(\beta^*; \bsigma)\right|\le  \sqrt{\frac{(12 + 4B)n}{\delta}}\right] \ge 1 - \delta, \quad \text{and }      \pr_{\beta^*}\left[ \frac{\partial \phi^2(\beta; \bsigma)}{\partial^2 \beta}  \ge  \frac{n\exp(-B)}{\Delta^3(8kd)^2} \right] \ge 1 - \frac{(24 + 8B)} {n^{0.1}}
    \]
    The union bound implies that the event $\mathcal{A} = \{ \bsigma \in \Omega(\Phi): |\phi_1(\bsigma ; \beta^*)| \le c\sqrt{n}, \min_{\beta \in (-B,B)} \phi_2(\bsigma; \beta)\ge \Omega(n/\Delta^3)\} $ occurs with probability $1 - o(1)$. 
        
    To conclude the claim, we relate $\hat{\beta}$ with $\beta^*$, through smooth interpolation of both the parameter values themselves $\beta(t) = t\hat{\beta} + (1-t)\beta^*$, and the gradient $s(t) = (\hat{\beta} - \beta^*)\phi_1(\beta(t); \bsigma)$. Via the chain rule, we notice that $s'(t) = (\hat{\beta} - \beta^*)^2 \phi_2(\beta(t); \bsigma)$. The fundamental theorem of calculus implies  \[
-(\hat{\beta} - \beta^*)\phi_1(\beta; \bsigma) = s(1) - s(0) = \int_0^1 s'(t) dt = (\hat{\beta} - \beta^*)^2 \int_0^1 \phi_2(\beta(t) ;\bsigma) dt 
\]  
The log-pseudolikelihood is a convex objective, $\phi_2(\tilde{\beta}; \bsigma) \ge 0$, for all $\tilde{\beta} \in (-B,B)$ and $\bsigma \in \mathcal{A}$ yielding, 
\[
|\hat{\beta} - \beta^*| |\phi_1(\beta^*, \bsigma)| \ge (\hat{\beta} - \beta^*)^2 \left| \int_0^1 \phi_2(\beta(t); \bsigma) dt\right| \ge (\hat{\beta} - \beta^*)^2 \min_{\tilde{\beta} \in (-B,B)} \phi_2(\tilde{\beta}; \bsigma).
\] Rearranging this expression and using the fact that $\bsigma \in \mathcal{A}$, \[
|\hat{\beta} - \beta^*| \le \frac{|\phi_1(\beta^*; \bsigma)|}{\min_{\tilde{\beta} \in (-B,B)} \phi_2(\tilde{\beta}; \bsigma)} \le \mathcal{O}\left( \frac{\Delta^3}{\sqrt{n}}\right), \text{ for all } \bsigma \in \mathcal{A}.
\]
Recalling that $\bsigma \in \mathcal{A}$ with probability $1 - o(1)$ proves the desired claim.
\end{proofthreeone}

\section{Omitted Proofs in Section \ref{sec:first} (Proof of Lemma \ref{lem:beta_bound})}
In this section, we provide a proof of the upper bound in probability for the first derivative of the log-pseudolikelihood objective, restated below for reference. 
\begin{lemmabetabound}
Fix a constant $\delta > 0$.  The log-pseudolikelihood $\phi(\beta; \bsigma)$ of a truncated Ising model fulfilling Assumption \ref{asmp:model} satisfies the following upper bound in probability, for all $\beta \in \R$ . 
      \[
 \pr_{\beta}\left[ \left|\phi_1(\beta; \bsigma)\right|\le  \sqrt{\frac{(12 + 4B)n}{\delta}}\right] \ge 1 - \delta.
    \]
\end{lemmabetabound}   
\begin{proof}
To begin, we demonstrate our upper bound in probability over the first derivative of the log-pseudolikelihood $\phi(\beta; \bsigma)$, showing this concentration inequality via the technique of exchangeable pairs introduced by \cite{chatterjee2007estimation}.  Define the \emph{anti-symmetric} function, $F: S \times S \to \R$, \[
F(\bm{\tau}, \bm{\tau}')  = \frac{1}{2}\sum_{i\in [n]} (m_i(\btau) + m_i(\btau'))(\tau_i - \tau'_i)
\] Let $\bsigma$ drawn from the Ising model truncated by $\Phi$. We construct a new assignment $\bsigma'$, via taking one-step of the Glauber dynamics over the Markov random field induced by the Ising model; in other words, we select a coordinate $J \in [n]$ at random and fix $\bsigma_{-J}' := \bsigma_{-J}$ and redraw the remaining coordinate $\sigma_J'$ from the conditional distribution $\pr_{\beta}(\cdot | \bsigma_{-J})$.  The value of $F$ on $(\bsigma, \bsigma')$ simplifies as, \[
F(\bsigma, \bsigma')  = m_J(\bsigma) (\sigma_J - \sigma'_J).
\] Define the function $f(\bsigma)$ as the \emph{conditional} expectation of $F(\bsigma, \bsigma')$ with respect to $\bsigma$, that is \begin{align*}
   f(\bsigma) = \ex_J\left(F(\bsigma, \bsigma') \big| \bsigma \right) &= \frac{1}{n} \sum_{i \in [n]} m_i(\bsigma)(\sigma_i - \ex(\sigma_i | \bsigma_{-i}))\\
   &= \frac{1}{n} \sum_{i \in \filter(\bsigma)} m_i(\bsigma)(\sigma_i - \tanh(\beta m_i(\bsigma))\\
   &= -\frac{1}{n}\frac{\partial}{\partial \beta} \phi(\beta; \bsigma)
\end{align*}
To show prove the desired result, it suffices to show a bound on the second moment of $f(\bsigma)$. Observe that $(\bsigma, \bsigma')$ is indeed an exchangeable pair as \[
\ex_{\beta}[f(\bsigma)^2] = \ex_{\beta, J}[f(\bsigma)F(\bsigma, \bsigma')] = \ex_{\beta, J}[f(\bsigma')F(\bsigma', \bsigma)].
\] Moreover, the anti-symmetric nature of $F(\bsigma, \bsigma')$ implies $\ex_{\beta,J}[f(\bsigma')F(\bsigma', \bsigma)] = -\ex_{\beta, J}[f(\bsigma')F(\bsigma, \bsigma')]$. These facts combine to recast $\ex_{\beta}[f(\bsigma)]$ as follows, 
\begin{align*}
    \ex_{\beta}\left[ f(\bsigma)^2 \right] &= \ex_{\beta, J}[f(\bsigma)F(\bsigma, \bsigma')] = -\ex_{\beta, J}[f(\bsigma')F(\bsigma, \bsigma')]\\
    &= \frac{1}{2} \ex_{\beta,J}\left[ (f(\bsigma) - f(\bsigma'))F(\bsigma, \bsigma')\right]
\end{align*} 
If $\bsigma = \bsigma'$ then this expression is rendered trivially zero, and hence we need only analyse the case when $\sigma_I'= -\sigma_I$. If the redrawn coordinate $I$ is selected from the set of flippable indices, this probability is, \[
p_i(\bsigma) := \frac{ \exp(- \sigma_i \beta m_i(\bsigma))}{\exp(- \beta m_i(\bsigma)) + \exp(\beta m_i(\bsigma))} = \pr(\sigma_i' = -\sigma_i | \bsigma, I = i, i \in \filter(\bsigma))
\] and when $I \not \in \filter(\bsigma)$ this probability is zero.
Using the definitions of $f(\bsigma)$ and $F(\bm{\tau}, \bm{\tau'})$ above, this expression is simplified as follows, where $\bsigma^{(i)} = (-\sigma_i, \bsigma_{-i})$. \begin{align*}
  \frac{1}{2} \ex_{J}\left[ (f(\bsigma) - f(\bsigma'))F(\bsigma, \bsigma')\big| \bsigma\right] &= \frac{1}{n} \sum_{i \in \filter(\bsigma)}   (f(\bsigma) - f(\bsigma^{(i)}))F(\bsigma, \bsigma^{(i)})p_i(\bsigma) \\
  &= \frac{1}{n}\sum_{i \in \filter(\bsigma)}(f(\bsigma) - f(\bsigma^{(i)}))m_i(\bsigma)(\sigma_i - \tanh(\beta m_i(\bsigma))p_i(\bsigma)\\
  &:= \frac{1}{n}\sum_{i \in \filter(\bsigma)}T_{1i}T_{2i}
\end{align*}
\emph{Bound on $T_{1i}$:}
We now bound each of term in the above expression, beginning with $T_{1i}$ where $i$ is flippable. The Taylor expansion of $f(\bsigma^{(i)})$ centered at $f(\bsigma)$ yields, \[
|f(\bsigma^{(i)}) -f(\bsigma)| \le |\sigma_i - \sigma^{(i)}_i|\max_{w \in [-1, 1]}\frac{\partial f}{\partial \sigma_i}((w, \bsigma_{-i})) = \max_{w \in [-1,1]}2\cdot \frac{\partial f}{\partial \sigma_i}((w, \bsigma_{-i})),
\] where $w$ is point along the line with endpoints $\bsigma$ and $\bsigma^{(j)}$. 

The partial derivative of $f(\bsigma)$ with respect to $\sigma_i$ evaluated at a spin configuration $\bm{\tau} \in S$ is \[
\frac{\partial f}{\partial \sigma_i}(\btau) = \frac{1}{n}\sum_{j \in \mathcal{F}(\btau)}\left( \left(\mathbf{1}_{i = j} - \frac{\beta A_{ji} }{\cosh^2(\beta m_i(\btau))}\right) m_j(\btau) + \left( \tau_j - \tanh(\beta m_j(\btau))\right)\frac{\partial m_j(\btau)}{\partial \sigma_i}\right) 
\]
The assumptions on $G$ implies $|m_i(\btau)| \le 1$ for all values of $i \in I$. Furthermore, $|\cosh(\cdot)| \ge 1$, yielding the following bound on the rescaled first term. 
\[
    \left| \sum_{j \in \mathcal{F}(\btau)}  \left(\mathbf{1}_{i = j} - \frac{\beta A_{ji} }{\cosh^2(\beta m_i(\btau))}\right) m_j(\btau) \ \right| \le \left(|m_i(\btau)| + \sum_{\{j \not = i| j \in \mathcal{F}(\btau)\}} |\beta A_{ji}m_j(\btau)|\right)
    \]
Likewise, $\frac{\partial m_j(\bsigma)}{\partial \sigma_i}  = A_{ji}$ implies a bound on the second term. 
    \[
    \left|\left( \tau_j - \tanh(\beta m_j(\btau))\right)\frac{\partial m_j(\btau)}{\partial \sigma_i}\right| \le \left|\left( \tau_j - \tanh(\beta m_j(\btau))\right)\right| \left|\frac{\partial m_j(\btau)}{\partial \sigma_i}\right| \le 2|A_{ji}|
    \]
Combining these two bounds yields 
\begin{align*}
|T_{i1}| &\le \max_{w \in [-1,1]}\left| \frac{\partial f}{\partial \sigma_i}((w, \bsigma_{-i}))\right|\\
& \le \max_{w \in [-1,1]} \frac{1}{n}\left(|m_i((w, \bsigma_{-i}))| + \sum_{\{j \not = i| j \in \mathcal{F}(\bsigma)\}} |\beta A_{ji}m_j((w, \bsigma_{-i}))| + 2|A_{ji}|\right)\\
&\le \frac{1}{n}\left(1+ \sum_{\{j \not = i| j \in \mathcal{F}(\bsigma)\}} |A_{ji}|(2 + B)\right)\\
&\le \frac{(6 + 2B)}{n}
\end{align*}
\emph{Bound on $T_{2i}$:}
Recall $|m_i(\bsigma)| \le 1$ for all $i \in I$ and $\bsigma \in \{-1,1\}^{|V/I|}$ and $|\tanh(x)| \le 1, \forall x \in \R$. Then \[
|T_{2i}| = |m_i(\bsigma)(\sigma_i - \tanh(\beta m_i(\bsigma))p_i(\bsigma)| \le 2
\]\emph{Putting together the pieces:} We are now ready to construct our final bound on $\ex_{\beta}(f(\bsigma)^2)$.
\begin{align*}
    \ex_{\beta}(f(\bsigma)^2) & = \frac{1}{2} \ex_{\beta,J}\left( (f(\bsigma) - f(\bsigma'))F(\bsigma, \bsigma')\right)\\
    &= \frac{1}{2n} \ex_{\beta}\left(\sum_{i \in I }T_{1i}T_{2i}e_i(\bsigma)\right)\\
    &\le \frac{1}{2n}\left( \sum_{i=1}^n \frac{(12 + 4B)}{n}\right)\\
    &=\frac{(6 + 2B)}{n}
\end{align*}
Recalling the relationship between $f(\bsigma)$ and $\frac{\partial \phi}{\partial \beta} $, the claim follows directly.
\end{proof}

\section{Omitted Proofs of Section \ref{sec:second} (Strong Convexity of the Pseudo-Likelihood)}
The primary aim of this section is proving Lemma \ref{lem:hess_mean}, recounted here for convenience.
\begin{lemmasecondbound}
The log-pseudolikelihood $\phi(\beta; \bsigma)$ of a Ising model, truncated by a $k$-SAT formula with $k \ge  \frac{4\Delta^3(1 + \log(d^2k +1))}{\log(1 + \exp(-2B))} $, fulfilling Assumption \ref{asmp:model} satisfies the following lower bound in probability for all $\beta \in (-B, B)$,
        \[
     \pr_{\beta^*}\left[ \frac{\partial \phi^2(\beta; \bsigma)}{\partial^2 \beta}  \ge  \frac{n\exp(-B)}{\Delta^3(8kd)^2} \right] \ge 1 - \frac{(24 + 8B)} {n^{0.1}}.
        \]

\end{lemmasecondbound}
Towards this goal, we provide a proof of Lemma \ref{lem:sze} in Section \ref{sec:hess_first}, Lemma \ref{lem:flip} in Section \ref{sec:flip}, and Lemmas \ref{lem:magnet} \& \ref{lem:cond_hess} in Section \ref{sec:magnet} before concluding Lemma \ref{lem:hess_mean} in Section \ref{sec:cond_hess}.
\subsection{Proof of Lemma \ref{lem:sze} \label{sec:hess_first}}

The proof of this lemma proceeds by defining an explicit, algorithmic correspondence between bijections $\rho: V \to [n]$ and independent sets $I$. This mapping induces a measure $\mu_{\rho}$ over bijections, which in turn defines a distribution over the resulting independent sets. We analyze this distribution to bound the probability that a randomly generated independent set contains fewer than $\lambda k$ elements in some clause—a "bad event." Applying the Lovász Local Lemma, we show that with positive probability, none of these bad events occur, implying the existence of an independent set that satisfies the desired clause-wise coverage property.

\subsubsection{The Algorithm}
Given a bijection $\rho: V \to [n]$, we provide a simple algorithm for finding an independent set detailed formally below. To bound the probability of bad events, that fewer than $\lambda k$ elements from a clause are included in the independent set under the uniform measure over bijections $\mu_\rho$, we must ensure that the inclusion of distant vertices into the independent set occurs in a \emph{independent} manner. This form of spatial independence is crucial for applying the Lovász Local Lemma, and it is established in the following lemma.

\begin{algorithm}
\label{alg:ind-set}
\caption{Independent Set in Graph Based on Random Ordering}
\begin{algorithmic}[1]

\Require Graph $G = (V, E)$ with $V = [n]$
\Ensure Set $S$ of selected vertices
\State Sample a random permutation $\rho: V \to [n]$
\Function{\textsf{IndEdgeSet}}{$\rho$}
\State Initialize $S \gets \emptyset$
\ForAll{$e \in V$}
    \If{$\rho(u) > \max_{v \in N(u)} \rho(v)$}
        \State $S \gets S \cup \{u\}$
    \EndIf
\EndFor
\State \Return $S$
\EndFunction
\State \Return \textsf{IndEdgeSet}($\rho$)
\end{algorithmic}
\end{algorithm}

\begin{lemma}
\label{lem:ind}
    Fix two vertices $u,v \in V$ such $d(u,v) \ge 3$, over the graph $G$ induced by $A$. Over the uniform measure of orderings $\rho:V \to [n]$, $\mu_{\rho}$, the indicator random variables $\mathbf{1}\{ u \text{ belongs to } \textsf{IndEdgeSet}(\rho)\}$ and $\mathbf{1}\{ v \text{ belongs to } \textsf{IndEdgeSet}(\rho)\}$ are independent, that is \[ \pr_{\mu_\rho}[\{ u, v\in \textsf{IndEdgeSet}(\rho)\}  ] = \pr_{\mu_\rho}[\{ u\in\textsf{IndEdgeSet}(\rho)\}] \cdot \pr_{\mu_\rho} [\{ v \in\textsf{IndEdgeSet}(\rho)\}]  \] Moreover, the probability a given node $v$ lies in $I$ is \[
    \pr_{\mu_\rho}[\{v \in \textsf{IndEdgeSet}(\rho)\}] \le \frac{1}{\Delta + 1}
    \]
    
\end{lemma}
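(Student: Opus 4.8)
\textbf{Proof proposal for Lemma \ref{lem:ind}.}

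The plan is to exploit the locality of the selection rule in Algorithm \ref{alg:ind-set}: whether a vertex $u$ is selected depends only on the relative order, under $\rho$, of $u$ and its neighbors $N(u)$, i.e. only on the restriction of $\rho$ to the closed neighborhood $N[u] = \{u\} \cup N(u)$. First I would make this precise: the event $E_u := \{u \in \textsf{IndEdgeSet}(\rho)\}$ is exactly the event that $u$ attains the maximum rank among the $|N[u]|$ vertices of $N[u]$; since the uniform measure $\mu_\rho$ on bijections induces the uniform distribution on the relative ordering of any fixed subset of vertices, we immediately get $\pr_{\mu_\rho}[E_u] = 1/|N[u]| = 1/(\deg(u)+1) \le 1/(\Delta+1)$, which is the second assertion.

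For independence, the key observation is that when $d(u,v)\ge 3$ the closed neighborhoods $N[u]$ and $N[v]$ are disjoint (if they shared a vertex $w$, then $w\in N[u]$ and $w\in N[v]$ would give $d(u,v)\le d(u,w)+d(w,v)\le 2$, a contradiction). I would then argue that $E_u$ is determined by the relative ranking of $N[u]$ and $E_v$ by the relative ranking of $N[v]$, and these two relative rankings are independent under $\mu_\rho$. Concretely: condition on the unordered set of rank-values assigned to $N[u]\cup N[v]$ and on the partition of those values between the two blocks; given this, the induced ordering within $N[u]$ is uniform and independent of the induced ordering within $N[v]$. A clean way to phrase it is to note that a uniform random bijection $\rho:V\to[n]$ can be generated by assigning i.i.d.\ $\mathrm{Unif}[0,1]$ labels $X_w$ to the vertices $w\in V$ (ties have probability zero) and letting $\rho$ be the induced ordering; then $E_u = \{X_u > X_w \ \forall w\in N(u)\}$ depends only on $(X_w)_{w\in N[u]}$ and $E_v$ only on $(X_w)_{w\in N[v]}$, and since $N[u]\cap N[v]=\emptyset$ these two families of i.i.d.\ variables are independent, so $E_u$ and $E_v$ are independent. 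This gives $\pr_{\mu_\rho}[E_u\cap E_v] = \pr_{\mu_\rho}[E_u]\,\pr_{\mu_\rho}[E_v]$.

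I do not anticipate a serious obstacle here; the only point requiring a little care is justifying that the conditional-on-the-split argument (or, equivalently, the i.i.d.-labels coupling) really does decouple the two rankings — that is, that $\mu_\rho$ restricted to a disjoint pair of vertex sets factorizes into a product of the two marginal ordering distributions. The i.i.d.\ uniform-labels representation makes this transparent and avoids any messy counting of permutations, so I would write the proof in that language.
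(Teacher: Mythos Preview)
Your argument is correct and rests on the same two observations as the paper's: the event $E_u=\{u\in\textsf{IndEdgeSet}(\rho)\}$ depends only on the relative ranking of $\rho$ on $N[u]$, and $d(u,v)\ge 3$ forces $N[u]\cap N[v]=\emptyset$. The route to independence differs, though. The paper carries out an explicit permutation count on $N[u]\cup N[v]$, enumerating the orderings in which $u$ leads $N[u]$ and $v$ leads $N[v]$ as $\binom{|N[u]|+|N[v]|}{|N[u]|}\,|N(u)|!\,|N(v)|!$ and dividing by $(|N[u]|+|N[v]|)!$ to obtain the product $\frac{1}{|N[u]|}\cdot\frac{1}{|N[v]|}$. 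You instead couple $\mu_\rho$ to i.i.d.\ $\mathrm{Unif}[0,1]$ labels $(X_w)_{w\in V}$ and read independence directly off the disjointness of the supporting variable sets. Your version is shorter and has the added benefit of yielding \emph{mutual} independence of $\{E_w\}$ over any family of vertices with pairwise-disjoint closed neighborhoods at no extra cost, which is exactly what the Chernoff step in the proof of Lemma~\ref{lem:sze} needs; the paper's counting argument, as written, establishes only the pairwise case.

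One slip to correct: you write $1/(\deg(u)+1)\le 1/(\Delta+1)$, but since $\deg(u)\le\Delta$ the inequality goes the other way. (The lemma statement itself carries the same typo; what is actually used downstream, in the expectation lower bound of Lemma~\ref{lem:sze}, is $\pr_{\mu_\rho}[E_u]=1/(\deg(u)+1)\ge 1/(\Delta+1)$.)
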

\begin{proof}
The event that a vertex $w \in V$ lies in $\textsf{IndEdgeSet}(\rho)$ depends fundamentally on the structure of the bijection $\rho$. Specifically,
\[
\{w \in \textsf{IndEdgeSet}(\rho)\} = \left\{ \rho(w) > \max_{i \in N(w)} \rho(i) \right\},
\]
where $N(w)$ denotes the neighbors of $w$ in the graph.

The joint probability, under the uniform measure $\mu_\rho$ over all bijections $\rho$, that two distinct vertices $u$ and $v$ both belong to the set $I = \textsf{IndEdgeSet}(\rho)$,
\[
\pr_{\mu_\rho}[u \in I \text{ and } v \in I] = \pr_{\mu_\rho} \left[ \rho(u) > \max_{w \in N(u)} \rho(w),\; \rho(v) > \max_{w' \in N(v)} \rho(w') \right],
\]
depends only on the relative ordering of the values of $\rho$ on the set $\{u, v\} \cup N(u) \cup N(v)$, which has size at most $2\Delta + 2$. Moreover, since $d(u,v) \geq 3$, the neighborhoods $N(u)$ and $N(v)$ are disjoint.

Towards computing this probability, observe that, there are $|N(u)|!$ permutations of $\{u\} \cup N(u)$ in which $u$ appears first, and similarly there are $|N(v)|!$ permutations of $\{v\} \cup N(v)$ in which $v$ appears first. All orderings over $\{u \cup v \cup N(u) \cup N(v)\}$, which place $u, v$ first among their respective neighbors are shuffles of existing orderings of $\{u \cup N(u)\}$ and $\{v \cup N(v)\}$. Counting combinations, there are $\binom{|N(u)| + |N(v)| + 2}{|N(u)| + 1}$ ways to interleave the two sets $\{u\} \cup N(u)$ and $\{v\} \cup N(v)$ while preserving their internal orderings, implying the number of permuations satisfying the above condition is:
\[
\binom{|N(u)| + |N(v)| + 2}{|N(u)| + 1} \cdot |N(u)|! \cdot |N(v)|! = \frac{(|N(u)| + |N(v)| + 2)!}{(|N(u)| + 1)(|N(v)| + 1)}.
\]
As the total number of permutations of the relevant elements is $(|N(u)| + |N(v)| + 2)!$, the joint probability is:
\[
\pr_{\mu_\rho}[u, v \in \textsf{IndEdgeSet}(\rho)] = \frac{1}{(|N(u)| + 1)(|N(v)| + 1)}.
\]
A similar argument yields that for a single vertex $v$ there are $N(v)!/(N(v)+1)! = 1/(N(v)+1$ permutations placing it first in relative order among its neighbors. This implies the probability over the uniform measure over bijections that $v$ belongs to the induced independent set is:
\[
\pr_{\mu_\rho}[v \in \textsf{IndEdgeSet}] = \pr_{\mu_\rho}\left[\rho(v) > \max_{j \in N(v)} \rho(j)\right] = \frac{1}{|N(v)| + 1}.
\]

The desired conclusion follows by combining the expressions for the single and joint probabilities.

\end{proof}
In advance of proving Lemma \ref{lem:sze}, we introduce an important tool that relates an arbitrary collection of potentially correlated random variables to independently and identically distributed variables, which will enable the use of Chernoff bounds in the sequel. 
\begin{lemma}[\cite{frieze2015introduction} Section 23.9] Suppose that $\{Y_i\}_{i \in [n]}$ are independent random variables and that $\{X_i\}_{i \in [n]}$ are random variables so that for any real $t$ and $ i \in [n]$, it holds that
\[
\pr[X_i \ge t | X_1, \dots X_{i-1}]\ge \pr[Y_i \ge t].\]
Then, for any real t,
\[\pr[X_1 + ... + X_n \ge t] \ge \pr[Y_1 + ... + Y_n \ge t].\]
\end{lemma}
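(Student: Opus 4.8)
The plan is to prove, by induction on $n$, the stronger statement that the partial sum $S_n := X_1 + \dots + X_n$ stochastically dominates $T_n := Y_1 + \dots + Y_n$, i.e. $\pr[S_n \ge t] \ge \pr[T_n \ge t]$ for every real $t$; the lemma is then the instance for the given $n$. The base case $n = 1$ is exactly the hypothesis (with $i=1$, no conditioning). For the inductive step, observe first that the sub-collections $X_1,\dots,X_{n-1}$ and $Y_1,\dots,Y_{n-1}$ still satisfy the hypothesis, since the conditional inequality for an index $i < n$ does not involve $X_n$; hence the induction hypothesis applies to them.

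Fix $t$ and condition on $\mathcal{F}_{n-1} := \sigma(X_1,\dots,X_{n-1})$, on which $S_{n-1}$ is determined. Writing $\bar G_n(s) := \pr[Y_n \ge s]$ and applying the hypothesis at the (random, $\mathcal{F}_{n-1}$-measurable) threshold $t - S_{n-1}$,
\[
\pr[S_n \ge t \mid \mathcal{F}_{n-1}] \;=\; \pr[X_n \ge t - S_{n-1} \mid \mathcal{F}_{n-1}] \;\ge\; \bar G_n(t - S_{n-1}).
\]
Taking expectations gives $\pr[S_n \ge t] \ge \ex\!\left[\bar G_n(t - S_{n-1})\right] = \pr[S_{n-1} + Y_n' \ge t]$ for a fresh copy $Y_n'$ of $Y_n$ independent of everything. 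Now condition on $Y_n' = y$, use independence of $Y_n'$ from $S_{n-1}$, then the induction hypothesis at each threshold $t - y$, then independence of $Y_n$ from $T_{n-1}$:
\[
\pr[S_{n-1} + Y_n' \ge t] \;=\; \int \pr[S_{n-1} \ge t - y]\, dG_n(y) \;\ge\; \int \pr[T_{n-1} \ge t - y]\, dG_n(y) \;=\; \pr[T_{n-1} + Y_n \ge t] \;=\; \pr[T_n \ge t],
\]
where $dG_n$ denotes the law of $Y_n$. Chaining the two displays closes the induction.

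The only real subtleties are measure-theoretic bookkeeping: one should work with a regular conditional distribution of $X_n$ given $\mathcal{F}_{n-1}$, and should read the hypothesis ``for all real $t$'' as holding simultaneously for all $t$ outside a null set, which is harmless because both sides are monotone in $t$, so it suffices to control a countable dense set of thresholds and pass to the limit. An equivalent coupling route sidesteps the random-threshold issue entirely: with $U_1,\dots,U_n$ i.i.d.\ uniform, set $\tilde X_i := F_i^{-1}(U_i \mid \tilde X_1,\dots,\tilde X_{i-1})$ and $\tilde Y_i := G_i^{-1}(U_i)$, using generalized inverses of the relevant (conditional) CDFs; the hypothesis says $F_i(s\mid\cdot) \le G_i(s)$ for all $s$, which forces $F_i^{-1}(u\mid\cdot) \ge G_i^{-1}(u)$ for all $u$, hence $\tilde X_i \ge \tilde Y_i$ almost surely for each $i$, hence $\sum_i \tilde X_i \ge \sum_i \tilde Y_i$ almost surely, and the domination of the sums follows immediately. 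I expect that managing these conditional distributions cleanly — rather than any genuine mathematical obstacle — is the main thing to get right.
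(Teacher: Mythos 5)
Your proof is correct. The paper does not prove this lemma at all — it is imported verbatim from Frieze and Kar\'onski (Section 23.9 of \emph{Introduction to Random Graphs}) and used as a black box — so there is no in-paper argument to compare against; your induction (conditioning on $\mathcal{F}_{n-1}$, applying the hypothesis at the random threshold $t - S_{n-1}$, then swapping in $T_{n-1}$ via the inductive hypothesis and Fubini) is exactly the standard proof of this dominance statement, and your handling of the null-set/random-threshold issue via monotonicity on a countable dense set, or alternatively the quantile coupling $\tilde X_i = F_i^{-1}(U_i \mid \cdot) \ge G_i^{-1}(U_i) = \tilde Y_i$, is the right way to make it rigorous.
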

Armed with this background, we now prove Lemma \ref{lem:sze}, recounted here for reference. 
\begin{lemmasze}

        Let $G$ be a graph with maximum degree $\Delta = o(n^{1/6})$ and $\Phi$ be a $k$-SAT formula. If $k >  10\Delta^3(1 + \log(dk\Delta^2))$, then there exists an independent set $I \subset V$ such that $\Phi'$, $\Phi$ truncated on $V\setminus I$, is a $\lambda k$- SAT formula where $\lambda= 1/4\Delta^3$. 
    \end{lemmasze}
\begin{proof}
  To begin, consider a clause $C \in \calC$, and select a maximal collection of \emph{neighborhood} disjoint variables $C' \subseteq C$. In other words, we require that for all $i, j \in C'$, $d(i,j) \ge 3$. The maximum size of a two-hop neighborhood of a given point is at most $\Delta^2 + 1$, implying the size of $C'$ is at least $k/(\Delta^2 + 1)$. Denote the function $f_C(\rho) = | \{ \textsf{IndEdgeSet}(\rho) \cap C\}|$. As each pair of elements in $C'$ is at least distance $3$ apart from each other, Lemma \ref{lem:ind} implies the following bound on the expectation of $f_C(\rho)$.\[
  \ex_{\mu_\rho}[f_C(\rho)] \ge \ex_{\mu_\rho}[f_{C'}(\rho)] = \sum_{v_i \in C'} \pr_{\mu_\rho}[v_i \in I] \ge \frac{k}{(\Delta+1)(\Delta^2 + 1)}
  \] Moreover, this directly implies that for all $t$, that for $Y_i \sim \text{Bern}(1/(\Delta + 1))$\[
  \pr[\mathbf{1}\{v_i \in I\} |\mathbf{1}\{v_1 \in I\}, \dots, \mathbf{1}\{v_{i-1} \in I\} ] \ge \pr[Y_i \ge t] .
  \] Given this information, we use Chernoff bounds to find an upper bound on the event $\pr[Y_1 + \dots + Y_n \ge k/(2(\Delta^2 + 1)(\Delta + 1))]$, and use this to in turn bound $\ex_{\mu_{\rho}}[f_C(\rho)]$.
  \begin{align*}
      \pr_{\mu_\rho}\left[f_C(\rho) <  \frac{k}{(2(\Delta + 1)(\Delta^2 + 1))}\right] &< \pr\left[ \sum_{v_i \in C} Y_{v_i} < \frac{k}{(2(\Delta + 1)(\Delta^2 + 1))} \right]\\  &\le   \pr\left[ \sum_{v_i \in C'} Y_{v_i} < \frac{k}{(2(\Delta + 1)(\Delta^2 + 1))} \right]\\
      & = \pr\left[\sum_{v_i \in C'} Y_{v_i} - \ex\left[\sum_{v_i \in C'} Y_{v_i} \right] \le (1 - 1/2)\ex\left[\sum_{v_i \in C'} Y_{v_i}\right]\right]\\ 
      &\le \exp\left(-\frac{k}{8(\Delta + 1)(\Delta^2 + 1)} \right) 
  \end{align*}
  To construct a final bound on $k$ in terms of $d$ and $\Delta$ to ensure that a marking satisfying our desired conditions exists, we use the symmetric version of the Lovascz Local Lemma. Each variable appears in at most $d$ other clauses, and the event of selection into the independent set relies on its $\Delta^2 + 1$ neighbors which lie in its two-hop neighborhood. This implies the degree of the dependency graph of $\Phi$ is $kd(\Delta^2 + 1)$. 
\begin{align*}
  2e \exp\left( -\frac{k}{8(\Delta^2 + 1)(\Delta +1)}\right) (kd(\Delta^2 + 1)) &< 1 \\
  \exp\left( -\frac{k}{8(\Delta^2 + 1)(\Delta +1)}\right)  &< \frac{1}{2e(kd(\Delta^2 + 1))} \\
    -\frac{k}{8(\Delta^2 + 1)(\Delta +1)}  &< -1 - \log(kd(\Delta^2 + 1)))\\
    k &> 8(1 + \log(kd) + 3 \log(\Delta))(\Delta^2 + 1)(\Delta +1)
\end{align*} 
The independent set $I$ induces a smaller CNF , $\Phi'_{n, k', d'} = (\calV', \calC')$, with clauses $\calC' = \bigcup_{C \in \calC} C \cap I$, each of which has at least $k/(2\Delta^3 + 2\Delta)$ variables, concluding the desired claim.  
\end{proof}

\subsection{Proof of Lemma \ref{lem:flip} \label{sec:flip}}
In this section, our aim to show Lemma \ref{lem:flip}. We accomplish this goal in two steps, \textbf{first} demonstrating each variable outside the independent set is flippable with probability $1 - \delta$, where $\delta \in (0,1)$ and \textbf{second} demonstrating each variable \emph{inside} the independent set is flippable with probability at least $1/2$.

Proceeding, we introduce the version of the Lovascz Local Lemma that will be used in bound of $s_i(\bsigma)$.
\begin{lemma}[\cite{guo2019uniform}]
\label{lem:lll}
    Suppose that $\mu(\bsigma)$ is a product distribution over $ \bsigma' \in \{-1,1\}^k$. Let $A_i$ be an event determined by the elements of $\bsigma$, and denote $B(S) = \wedge_{i \in S} \bar{A}_i$. Then if there exists a vector $x$ such that $x \in (0,1]^m$ and \[
    \pr(A_i) \le x_i \prod_{(i,j) \in E} (1-x_j)
    \] then \[
    \pr(B(S)) \ge \prod_{i\in S} (1-x_i) > 0
    \] Moreover, let $E$ be an event determined by some of the coordinates of $\bsigma$ and let $\Gamma(E) = \{i \in S: \text{var}(A_i) \cap \text{var}(E) \not = \emptyset\}$. We then see that \[
    \pr_{\mu}(E|B(S)) \le \pr_\mu(E)\prod_{i \in \Gamma(E)\cap S} (1-x_i)^{-1}
    \]
\end{lemma}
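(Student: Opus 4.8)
The plan is to prove Lemma~\ref{lem:lll} by the classical Erd\H{o}s--Lov\'asz induction in its conditional-probability form. Both conclusions reduce to the single inductive claim that for every index set $S$ and every $i\notin S$,
\[
\pr_\mu\bigl(A_i \mid B(S)\bigr) \le x_i .
\]
Granting this, the first conclusion follows by peeling the events of $S=\{i_1,\dots,i_{|S|}\}$ off one at a time with the chain rule: $\pr_\mu(B(S)) = \prod_{\ell=1}^{|S|}\bigl(1-\pr_\mu(A_{i_\ell}\mid B(\{i_1,\dots,i_{\ell-1}\}))\bigr) \ge \prod_{i\in S}(1-x_i)$, which is strictly positive once each $x_i$ with $i \in S$ is taken strictly below $1$ (the relevant case in all applications). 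So the entire content lies in the inductive claim.

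The induction runs on $|S|$. The base case $S=\emptyset$ is immediate from the hypothesis $\pr_\mu(A_i)\le x_i\prod_{(i,j)\in E}(1-x_j)\le x_i$. For the step, partition $S=S_1\sqcup S_2$, where $S_1=\{\,j\in S:(i,j)\in E\,\}$ collects the dependency-graph neighbors of $i$ lying in $S$ and $S_2$ the rest, and write
\[
\pr_\mu\bigl(A_i\mid B(S)\bigr)=\frac{\pr_\mu\bigl(A_i\wedge B(S_1)\mid B(S_2)\bigr)}{\pr_\mu\bigl(B(S_1)\mid B(S_2)\bigr)} .
\]
Since $A_i$ shares no coordinates of $\bsigma$ with any $A_j$, $j\in S_2$, and $\mu$ is a product measure, $A_i$ is independent of $B(S_2)$, so the numerator is at most $\pr_\mu(A_i\mid B(S_2))=\pr_\mu(A_i)\le x_i\prod_{(i,j)\in E}(1-x_j)$. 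For the denominator, enumerate $S_1=\{j_1,\dots,j_r\}$ and expand by the chain rule; the $\ell$-th factor is $1-\pr_\mu(A_{j_\ell}\mid B(S_2\cup\{j_1,\dots,j_{\ell-1}\}))$, whose conditioning set has size $\le|S|-1$, so the induction hypothesis bounds that probability by $x_{j_\ell}$. Hence the denominator is $\ge\prod_{\ell=1}^r(1-x_{j_\ell})\ge\prod_{(i,j)\in E}(1-x_j)$, the last inequality because $S_1$ is contained in the full neighbor set of $i$ and every factor lies in $(0,1]$ (and when $S_1=\emptyset$ the numerator step alone already gives $\pr_\mu(A_i\mid B(S))=\pr_\mu(A_i)\le x_i$). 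Dividing, the two products cancel and $\pr_\mu(A_i\mid B(S))\le x_i$, closing the induction.

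For the ``moreover'' clause, split $S=\Gamma(E)\sqcup(S\setminus\Gamma(E))$ and express $\pr_\mu(E\mid B(S))$ as the ratio $\pr_\mu(E\wedge B(\Gamma(E))\mid B(S\setminus\Gamma(E)))$ over $\pr_\mu(B(\Gamma(E))\mid B(S\setminus\Gamma(E)))$. By the definition of $\Gamma(E)$, the event $E$ shares no coordinates with any $A_j$, $j\in S\setminus\Gamma(E)$, so $E$ is independent of $B(S\setminus\Gamma(E))$ and the numerator is $\le\pr_\mu(E)$, while the denominator is $\ge\prod_{i\in\Gamma(E)}(1-x_i)$ by the same chain-rule-plus-induction argument used above. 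This gives $\pr_\mu(E\mid B(S))\le\pr_\mu(E)\prod_{i\in\Gamma(E)\cap S}(1-x_i)^{-1}$, as stated.

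The one point that requires genuine care, rather than routine computation, is the independence bookkeeping: the relation ``$(i,j)\notin E$'' must be read as $\text{var}(A_i)\cap\text{var}(A_j)=\emptyset$, so that ``$A_i$ is independent of $B(S_2)$'' is obtained from disjointness of coordinate supports under the product measure $\mu$ (mere pairwise independence of $A_i$ with each $A_j$ would not suffice), and the partition of $S$ must always be aligned with the dependency graph $E$ so that every invocation of the induction hypothesis acts on a strictly smaller conditioning set. Once those are in place, the rest is bookkeeping with the chain rule and the monotonicity of the factors $1-x_j\in(0,1]$.
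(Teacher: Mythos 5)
The paper does not prove this lemma at all; it is imported verbatim from \cite{guo2019uniform} (it is the classical asymmetric Lov\'asz Local Lemma together with the standard bound on conditional probabilities under the LLL event $B(S)$). Your write-up supplies the standard Erd\H{o}s--Lov\'asz induction for it, and the argument is correct: the inductive claim $\pr_\mu(A_i\mid B(S))\le x_i$, the split $S=S_1\sqcup S_2$ along the dependency graph, the chain-rule lower bound on the denominator, and the reduction of the ``moreover'' clause to the same machinery via the split $S=\Gamma(E)\sqcup(S\setminus\Gamma(E))$ are exactly the textbook route, and you correctly flag the one genuinely delicate point --- that independence of $A_i$ from $B(S_2)$ (and of $E$ from $B(S\setminus\Gamma(E))$) must come from disjointness of variable supports under the product measure, i.e.\ mutual rather than pairwise independence. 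Two small points worth making explicit if this were to be written out in full: (i) the induction should simultaneously carry the assertion $\pr_\mu(B(S))>0$ so that every conditional probability appearing in the chain-rule expansions is well defined (your expansion does this implicitly, but it deserves a sentence); and (ii) the strict positivity $\pr(B(S))>0$ in the statement actually requires $x_i<1$ for $i\in S$, a defect of the lemma as stated in the paper (with $x\in(0,1]^m$) rather than of your proof --- you already note this parenthetically. With those caveats, the proposal is a complete and correct proof of a result the paper only cites.
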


\begin{lemma}
\label{lem:flip_one}
   Given a sample $\bsigma \sim \pr_{\beta, S}$, where $\beta \in (-B,B)$ and that the $k$-SAT formula $\Phi$ which induces the truncation set $S$, satisfies the following clause size bound\[  k \ge         \frac{4\Delta^3(1 + \log(d^2k +1))}{\log(1 + \exp(-2B))}.
    \]
    \footnote{ This term scales as $ \gtrsim e^{2B}\Delta^3 \log(d^2k)$.}Then for $\Delta \ge 5$, there exists an independent set $I$ following Lemma \ref{lem:sze} such that  \[\pr_{\beta, S}[s_j(\bsigma) = 1] \ge 1/2 \quad\forall j \in V\setminus I.
    \]
\end{lemma}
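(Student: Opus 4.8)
\textbf{Proof proposal for Lemma~\ref{lem:flip_one}.}

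The plan is to reduce the event $\{s_j(\bsigma)=1\}$ — that every clause containing $v_j$ is satisfied by some variable of the independent set $I$ — to a statement about the \emph{product} measure $\pr_{\beta}(\bsigma_I \mid \bsigma_{V\setminus I})$, which (as noted right before the lemma) factorizes over $I$ with fixed external fields $m_i^{V\setminus I}(\bsigma)$. First I would fix the independent set $I$ supplied by Lemma~\ref{lem:sze}, so that the truncated formula $\Phi'$ restricted to $I$ is a $\lambda k$-SAT formula with $\lambda = 1/(4\Delta^3)$: every clause $C$ still has at least $\lambda k$ literals whose variables lie in $I$. For a fixed clause $C$ containing $v_j$, let $C'\subseteq C\cap I$ be these $\lambda k$ variables; the "bad event'' $A_C$ is that \emph{none} of the literals in $C'$ is satisfied, i.e.\ every variable in $C'$ is set to its falsifying value. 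Conditioned on $\bsigma_{V\setminus I}$, the spins $\{\sigma_i\}_{i\in C'}$ are independent, each taking its falsifying value with probability $p_i = \exp(-\beta m_i^{V\setminus I}\cdot b_i)/(2\cosh(\beta m_i^{V\setminus I}))$ where $b_i\in\{\pm1\}$ is the falsifying sign; since $|m_i^{V\setminus I}|\le \|A_{i,\cdot}\|_1 \le 1$ and $|\beta|<B$, we have $p_i \le 1/(1+e^{-2B})$. Hence $\pr(A_C \mid \bsigma_{V\setminus I}) \le (1+e^{-2B})^{-\lambda k}$, uniformly in the conditioning.

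Next I would apply the Lovász Local Lemma in the lopsided/product form of Lemma~\ref{lem:lll} (or just the symmetric version, Lemma~\ref{lem:sym_lll}) to the collection of bad events $\{A_C : C\in\mathcal{C}\}$ under this product measure. Each bad event $A_C$ is determined by the $\le k$ variables of $C$, and two bad events $A_C, A_{C''}$ are dependent only if $C$ and $C''$ share a variable; since each variable sits in at most $d$ clauses and $|C|\le k$, each $A_C$ depends on at most $kd$ others. The symmetric LLL condition $e\cdot p\cdot(kd+1)\le 1$ with $p=(1+e^{-2B})^{-\lambda k}$ becomes $e(kd+1)\le (1+e^{-2B})^{\lambda k}$, i.e.\ $\lambda k\log(1+e^{-2B}) \ge 1 + \log(kd+1)$, which is exactly what the hypothesis $k \ge 4\Delta^3(1+\log(d^2k+1))/\log(1+e^{-2B})$ gives (using $\lambda = 1/(4\Delta^3)$ and $kd+1 \le d^2k+1$ for $d\ge 1$). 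So LLL applies and, more importantly, its conditional form (second part of Lemma~\ref{lem:lll}) lets me bound $\pr(A_C \mid \bigwedge_{C''\ne C}\overline{A_{C''}})$; I then take a union bound over the $\le d$ clauses $C$ containing $v_j$ to get $\pr[s_j(\bsigma)=0 \mid \bsigma_{V\setminus I}] \le \sum_{C\ni v_j}\pr(A_C \mid \cdots)$. With a standard LLL slack choice (e.g.\ using $x_C = e\cdot p$ so that $\prod(1-x_{C''})^{-1}\le e$ over the $\le kd$ dependent events when $e\cdot p\cdot kd \le 1$, which the stronger constant $4\Delta^3$ vs.\ the bare requirement buys), each conditional probability is at most $e\cdot p$, so $\pr[s_j=0\mid \bsigma_{V\setminus I}] \le d\cdot e\cdot(1+e^{-2B})^{-\lambda k} \le 1/2$ once $\lambda k \log(1+e^{-2B}) \ge \log(2ed)$, again implied by the hypothesis. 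Finally, taking expectation over $\bsigma_{V\setminus I}$ removes the conditioning and yields $\pr_{\beta,S}[s_j(\bsigma)=1]\ge 1/2$ for all $j\in V\setminus I$.

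The main obstacle — and the reason this argument works at all constant $\beta$, including below the critical temperature — is the step where the Ising correlations must be neutralized: a naive attempt to bound $\pr(A_C)$ directly under the truncated Ising measure fails because the spins in a clause can be strongly correlated through the graph. Conditioning on $V\setminus I$ is what collapses the measure on $I$ into a product distribution, but this is only useful because Lemma~\ref{lem:sze} guarantees that \emph{enough} of each clause survives inside $I$ (size $\lambda k$ rather than $o(k)$) so that the per-clause bad-event probability is still exponentially small in $k$; the two lemmas are tuned together through the common factor $\Delta^3$. A secondary technical point to be careful about is that all the probability bounds $p_i \le (1+e^{-2B})^{-1}$ hold \emph{uniformly} over the conditioning $\bsigma_{V\setminus I}$ — this is why $|m_i^{V\setminus I}|\le 1$ (from $A_{ij}\in\{\pm1/\Delta\}$ and degree $\le\Delta$) is used — so that the final LLL bound, and hence the $1/2$ estimate, is pointwise in $\bsigma_{V\setminus I}$ before we integrate.
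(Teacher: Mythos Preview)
Your proposal is correct and follows essentially the same route as the paper: fix the independent set $I$ of Lemma~\ref{lem:sze}, collapse the Ising measure on $I$ to a product distribution by conditioning on $\bsigma_{V\setminus I}$, bound each clause's falsification probability by $(1+e^{-2B})^{-\lambda k}$ uniformly in the pinning, and invoke the conditional Lov\'asz Local Lemma (Lemma~\ref{lem:lll}) to absorb the extra conditioning on $\bsigma\in S$. The only cosmetic difference is that the paper applies the LLL correction directly to the compound event $E=\{s_j=0\}$ (hence its dependency degree $D=d^2k$), whereas you apply it clause-by-clause to each $A_C$ and then union-bound over the $\le d$ clauses containing $j$; both routes produce the same final estimate $\pr_{\beta,S}[s_j=0]\le e\,d\,(1+e^{-2B})^{-\lambda k}\le 1/2$.
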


\begin{proof}
Towards establishing the desired claim, we first need a bound on $k'$ with respect to $d$ to ensure the use of Lemma \ref{lem:lll} (the asymmetric LLL). For any assignment $\btau \in \{-1,1\}^{|V/I|}$, define $\Phi^\tau = (\calV^{\btau}, \calC^{\btau})$ as the CNF formula obtained via truncation on the partial assignment $\btau$, that is the assignment that removes clauses satisfied by $\btau$ and removes literals from $\btau$ from the remaining clauses. Notice, the set of clauses $\calC'$ within $\Phi'$ are merely the union of all clauses $\calC^{\btau}$ over all $\Phi^{\btau}$, that is \[
\calC' = \bigcup_{\btau \in \{-1,1\}^{|V/I|}} \calC^\tau,
\] implying a bound that would guarantee a satisfying assignment for $\Phi'$, would in turn ensure the existence of the satisfiability of all $\Phi^{\btau}$. 

Moreover, recall for any independent set $I$, conditioned on the variables outside of the independent set $V/I$, the Ising model collapses into a product distribution. \begin{align*}
\pr_{\beta}[\bsigma_I | \bsigma_{V/I}] &=\prod_{i \in I} \frac{\exp\left(\beta \sum_{j \in V/I} A_{ij} \sigma_i\sigma_j\right)e_i(\bsigma)}{\exp\left(\beta \sum_{j \in V/I} A_{ij} \sigma_j\right) + \exp\left(-\beta \sum_{j \in V/I} A_{ij} \sigma_j\right)} 
\end{align*}
This directly implies \[
\min_{\kappa \in \{-1,1\}} \pr_{\beta}[\sigma_i = \kappa| e_i(\bsigma),\bsigma_{V/I}] \ge \frac{\exp(|\beta|)}{\exp(\beta \Delta) + \exp(-\beta\Delta)} = \frac{\exp(2|\beta|)}{1 + \exp(2|\beta|)}
\]
For each clause $C' \in \calC'$, the event $\{C' \text{ is not satisfied} \}$ depends on $dk$ variables which lie in at most $d^2k$ clauses. Following the setup of Lemma \ref{lem:lll}, we set $x(C') = 1/(D+1)$, $D := d^2k$, and notice if $k' \ge \frac{1 + \log(d^2k +1)}{\log(1 + \exp(-2|\beta|))}$ 
    \begin{align*}
        \left(\frac{\exp(2|\beta|) }{\exp(2|\beta|) + 1 }\right)^{k'}  &\le \left(\frac{D}{D+1}\right)^D \frac{1}{D+1}\\
        \left(\frac{D}{D+1}\right)^{-D} (D+1) &\le \left( 1 + \exp(-2|\beta|) \right)^{k'}\\
        e (d^2k+1) &\le \left( 1 + \exp(-2|\beta|) \right)^{k'}\\
        1 + \log(d^2k +1) & \le k' \log(1 + \exp(-2|\beta |))\\
        \frac{1 + \log(d^2k +1)}{\log(1 + \exp(-2|\beta|))} &\le k'
    \end{align*} 
Counting combinations, there is only one way to assign all the variable in $C'$ such that the clause is not satisfying. 
The worst case probability a clause $C'$ takes any given configuration under the \emph{untruncated} distribution $\mu_\beta$ is at most \[
\pr_{\mu_{\beta}}[\text{clause $C'$ is not satisfied}] \le \left( \frac{\exp(2|\beta|)}{1 + \exp(2|\beta|) } \right)^{k'}.
\]
For every pinning $\btau \in \{-1,1\}^{|V/I|}$, we can equivalently find a upper bound for the probability that $\{s_i(\bsigma) = 0\}$, under the conditional distribution, i.e. given $\bsigma \in S_{\btau}$, where $S_{\btau}$ is the set of satisfying assignments of $\Phi$ where $\bsigma_{V/I}$ is pinned to $\btau$.
    \begin{align*}
        \pr_{\beta}[s_i(\bsigma) = 0] &= \sum_{\btau \in \{-1,1\}^{|V/I|}}\pr_{\mu_\beta}[s_i(\bsigma) = 0| \bsigma \in S_{\btau}] \pr_{\beta}[  \bsigma \in S_{\btau}] \\
        &\le \max_{\btau\in \{-1,1\}^{|V/I|}} \pr_{\mu_{\beta}}[s_i(\bsigma) = 0| \bsigma \in S_{\btau}]\\
        &\le \pr_{\mu_{\beta}} [s_i(\bsigma) = 0]\left( 1 - \frac{1}{D+1}\right)^{-D} \le e d\left( \frac{\exp(2|\beta|)}{1 + \exp(2|\beta|)} \right)^{k'}
    \end{align*}
    The final inequality derives from our use of Lemma \ref{lem:lll}, relating probabilities between the truncated and untruncated distributions. 
    We lastly rearrange for $k'$ to derive the final result. 
    \begin{align*}
         \pr_{\beta}[s_i(\bsigma)] = (1 - \delta) &\ge 1- ed\left( \frac{\exp(2|\beta|)}{1 + \exp(2|\beta|)} \right)^{k'}\\
         ed\left( \frac{\exp(2|\beta|)}{1 + \exp(2|\beta|)} \right)^{k'} &\le  \delta\\
        1 +\log(d)  - k' (\log(1 + \exp(2| \beta|)) - 2|\beta| ) &\ge \log(\delta) \\
        \frac{1 +\log(d) - \log(\delta)}{\log(1 + \exp(2|\beta|)) - 2|\beta|} &\le k'
    \end{align*}
\end{proof}
Lastly, to guarantee there are a sufficient number of flippable variables in the independent set itself, we use the following result from \cite{galanis2024learning} to find a lower bound on the number of variables within the independent set that are flippable under a product distribution.

\begin{lemma}[Lemma 15 \& 16 \cite{galanis2024learning}]
    Consider a formula $\Phi' = \Phi_{n, k', d}$ with $k' \ge \frac{2\log(dk') + \Theta(1)}{\lambda \log(1 + e^{-\beta})}$, and an associated product measure $\mu_\gamma$ over the hypercube $\{-1,1\}^n$, such that each variable is set to $1$ independently with probability $(\exp(\gamma))/(1 + \exp(\gamma))$.  Then for each variable $\sigma_i \in \calV$, \[
    \pr_{\gamma}[\sigma_i \text{ is not flippable}] \le 1/2.
    \]Moreover, we can find a collection of $R \subseteq [n]$ with $|R| \ge n/(2kd)^2$ that are neighborhood disjoint in the interaction graph of the $k$-SAT formula $\Phi'$ such that for all subcollections  $\{i_1, ..., i_t\} \subset R$, \[
    \pr_{\gamma} \left[ e_{i_t}(\bsigma) = 1 | e_{i_1}(\bsigma) = 1, ... , e_{i_{t-1}}(\bsigma) = 1 \right] \ge 1/2.
    \] This in turn implies, with probability $1- \exp(-\Omega(n))$ over the choice of $\bsigma \sim \Pr_{\phi, \beta}$, it holds that $\sum_{i \in R}e_i(\bsigma) \ge |R|/3$.
\end{lemma}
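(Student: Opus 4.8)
\textbf{Proof proposal for the product-measure flippability lemma (Lemma 15 \& 16 of \cite{galanis2024learning}).}

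The plan is to work entirely within the product measure $\mu_\gamma$ over $\{-1,1\}^n$, so no Ising-model correlations enter; the only structure we exploit is the bounded-degree $k'$-SAT formula $\Phi'$ and the per-coordinate marginal bound $\min_{\kappa}\pr_\gamma[\sigma_i=\kappa]\ge \exp(-|\gamma|)/(1+\exp(-|\gamma|))\ge \tfrac12 e^{-|\gamma|}$, which follows immediately from $\sigma_i$ being set to $1$ with probability $e^\gamma/(1+e^\gamma)$ independently. First I would recall that $\sigma_i$ fails to be flippable exactly when there is a clause $C$ containing $v_i$ all of whose \emph{other} $k'-1$ literals are falsified; for a fixed such $C$, under $\mu_\gamma$ the probability that every literal in $C$ except $v_i$ is falsified is at most $\big(\tfrac{e^{|\gamma|}}{1+e^{|\gamma|}}\big)^{k'-1}$, since the $k'-1$ relevant variables are distinct and independent. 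A union bound over the at most $d$ clauses containing $v_i$ gives $\pr_\gamma[\sigma_i\text{ not flippable}]\le d\big(\tfrac{e^{|\gamma|}}{1+e^{|\gamma|}}\big)^{k'-1}$, and the stated hypothesis $k'\ge \frac{2\log(dk')+\Theta(1)}{\lambda\log(1+e^{-\beta})}$ (which dominates $\log(d)/\log(1+e^{-|\gamma|})$ for the relevant parameter range) forces this to be at most $1/2$. This proves the first displayed inequality.

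For the second part I would construct the neighborhood-disjoint collection $R$ greedily: in the interaction graph of $\Phi'$ each variable has degree at most $k'd$ (it shares a clause with at most $k'd$ other variables), so its two-hop neighborhood has size at most $(k'd)^2$; a standard greedy/maximal-independent-set argument on the two-hop graph yields $R\subseteq[n]$ with $|R|\ge n/((k'd)^2+1)\ge n/(2k'd)^2$ such that the clause-neighborhoods of distinct elements of $R$ are pairwise disjoint. Then for distinct $i_1,\dots,i_t\in R$, the event $e_{i_t}(\bsigma)=1$ is determined by the coordinates in the clauses containing $v_{i_t}$, which is disjoint from the set of coordinates determining $e_{i_1}(\bsigma),\dots,e_{i_{t-1}}(\bsigma)$; by independence of $\mu_\gamma$ across coordinates, $\pr_\gamma[e_{i_t}(\bsigma)=1\mid e_{i_1}(\bsigma)=1,\dots,e_{i_{t-1}}(\bsigma)=1]=\pr_\gamma[e_{i_t}(\bsigma)=1]\ge 1/2$ by the first part. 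This is the key conceptual point: \emph{neighborhood-disjointness converts the coordinatewise independence of the product measure into genuine mutual independence of the flippability events}, which is exactly what the earlier parts of the paper (e.g.\ the auxiliary stochastic-domination lemma of \cite{frieze2015introduction}) are set up to exploit.

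Finally, for the concentration statement $\sum_{i\in R}e_i(\bsigma)\ge |R|/3$ with probability $1-\exp(-\Omega(n))$, I would apply the stochastic-domination lemma: since $\pr_\gamma[e_{i_t}=1\mid e_{i_1}=1,\dots,e_{i_{t-1}}=1]\ge 1/2$ for every ordering, $\sum_{i\in R}e_i(\bsigma)$ stochastically dominates a sum of $|R|$ i.i.d.\ $\mathrm{Bern}(1/2)$ variables, whose mean is $|R|/2$; a Chernoff bound then gives $\pr[\sum_{i\in R}e_i(\bsigma)<|R|/3]\le \exp(-|R|/36)=\exp(-\Omega(n))$, using $|R|=\Omega(n)$ (treating $k',d$ as fixed). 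The main obstacle is not any single estimate but getting the bookkeeping of the parameter condition right: one must check that the stated hypothesis on $k'$ — in terms of $\lambda$, $\log(dk')$, and $\log(1+e^{-\beta})$ — indeed suffices both to push the per-variable unflippability probability below $1/2$ \emph{and} (in the original SAT-truncation context where $\lambda k'$ rather than $k'$ is the effective clause size) to keep the union bound over the $d$ clauses under control; once the constants are tracked carefully, the rest is the routine greedy-packing plus Chernoff argument sketched above.
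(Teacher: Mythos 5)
This statement is imported by citation (Lemmas 15--16 of \cite{galanis2024learning}); the paper offers no proof of its own, so your reconstruction can only be judged against the source's argument, whose overall architecture (union bound over clauses, greedy packing in the two-hop interaction graph, stochastic domination, Chernoff) you have correctly identified.

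There is, however, a genuine gap in how you handle the truncation. You establish every probabilistic estimate under the \emph{unconditioned} product measure $\mu_\gamma$ and then invoke coordinatewise independence to claim that neighborhood-disjointness makes the events $e_{i_1},\dots,e_{i_t}$ mutually independent, so that the conditional probabilities equal the unconditional ones. But flippability is defined relative to the truncation set $S$ (both $\bsigma$ and $\bsigma^{(i)}$ must satisfy $\Phi'$), and the final claim is explicitly about $\bsigma \sim \Pr_{\phi,\beta}$, i.e.\ the product measure \emph{conditioned on satisfying the formula}. Under that conditioning the coordinates are no longer independent, disjointness of the clause-neighborhoods of $i_1,\dots,i_t$ does not yield independence of the $e_{i_j}$'s, and your identity $\pr[e_{i_t}=1\mid e_{i_1}=1,\dots,e_{i_{t-1}}=1]=\pr[e_{i_t}=1]$ fails. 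The missing ingredient is the Lov\'asz-Local-Lemma comparison between the truncated and untruncated measures (Lemma~\ref{lem:lll}, used in exactly this way in the proof of Lemma~\ref{lem:flip_one}): one bounds the bad-clause event under the product measure, then pays a multiplicative factor $\prod_{i\in\Gamma(E)}(1-x_i)^{-1}\approx e$ to transfer the bound to the conditioned measure, and it is this transfer that produces the $\log(dk')$-type dependence and the extra constants in the hypothesis on $k'$. Without it, your union bound shows only that unflippability is rare for a $\bsigma$ that need not even lie in $S$, and the stochastic-domination step (which requires the conditional lower bound to hold under $\Pr_{\phi,\beta}$, not under $\mu_\gamma$) is not licensed. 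The greedy construction of $R$ and the final Chernoff computation are otherwise fine.
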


The second half of Lemma \ref{lem:flip} then follows from this corresponding results in \cite{galanis2024learning}. To relate these results to our setting, observe that under our product distribution, the probability that any variable is set to one is at most $e^{-B}/(1 + e^{-B})$, and the true marginals may in fact be \emph{more} balanced. Consequently, the conditions of their result are satisfied in our regime.

\subsection{Proof of Lemmas \ref{lem:magnet} \& \ref{lem:cond_hess} \label{sec:magnet}}
For reference, recall the expression for the Hessian of the log pseudo-likelihood, 
\[
\phi_2(\beta; \bsigma) = 
\frac{\partial^2\phi(\beta ; \bsigma)}{\partial\beta^2} = \sum_{i = 1}^n \frac{m_i^2(\bsigma)}{\cosh^2(\beta m_i(\bsigma))}e_i(\bsigma).
\] Towards demonstrating $\phi_2(\beta; \bsigma) \in \Omega(n)$ with probability $1 - o(1)$ for all $\beta \in (-B,B)$, we provide a lower bound of the conditional mean of the magnetization of the flippable variables via proving Lemma \ref{lem:magnet}.  
\begin{proofmagnet}
    
    For all $\bsigma \in \{-1,1\}^n$, consider $(\sum_{t \not=j} A_{it}\sigma_t + A_{ij})^2$ and $(\sum_{t \not=j} A_{it}\sigma_t - A_{ij})^2$. If $\sum_{t \not=j} A_{it}\sigma_t$ and $A_{ij}$ have the same sign, the first term is at least $A_{ij}^2$ and the opposite sign, the second is at least $A_{ij}^2$. This implies that 
    \begin{align*}
        \ex_{\beta^*}[m_i^2(\bsigma)|\bsigma_{-j}] &= \sum_{\kappa \in \{0,1\}}\ex_{\beta^*}[m_i^2(\bsigma)| e_j(\bsigma) = \kappa]\pr_{\beta}[e_j(\bsigma) = \kappa]\\
        &\ge \sum_{\kappa \in \{0,1\}} A_{ij}^2\cdot \min_{\ell \in \{-1,1\}}( \pr_{\beta^*}[\sigma_j = \ell|\bsigma_{-j}, e_j(\bsigma) = \kappa])\pr_{\beta}[e_j(\bsigma) = \kappa]\\
        &\ge \frac{A_{ij}^2}{2} \exp\left(-\left|\beta^*\sum_{t \not= j}A_{jt}\sigma_j\right|\right)\pr_{\beta^*}[e_j(\bsigma) = 1]\\
        &\ge \frac{A_{ij}^2}{2}\exp(-B)\pr_{\beta^*}[e_j(\bsigma) = 1]
    \end{align*}
\end{proofmagnet}

This result provides a lower bound for $\phi_2(\beta; \bsigma)$ in terms of scaled elements of the interaction matrix $A_{ij}$. To maximize this lower bound, we wish to select columns $h(i)$ for each row to ensure the value of $A_{ih(i)}$ is as large as possible. To this end, consider an injective mapping $h: V \to V$. The requirement that $\|A\|_\infty \le 1$ and $\|a_i\|_2 \ge c$, implies the existence of a edge $A_{ij}$ for each row such that $A_{ij} > c'$. Moreover, due to the connectivity of the graph, we can select a subset $I' \subset I$ of size at least $|I|/\Delta > n/\Delta^2$, with an \emph{unique} neighbor $h(i) \in V/I$, where $A_{ih(i)} \ge c'$. Outside of $I'$, we assign partners arbitrarily making sure to keep $h(i)$ a bijection. 
Towards this goal, we present a proof of Lemma~\ref{lem:cond_hess}.
\begin{proofcondhess2}
We begin by constructing a set $R \subset I$ of variables that are disjoint in both the incidence graph of the $k-$SAT formula and the graph $G$. A simple greedy algorithm that selects a point arbitrarily, deletes its 2-hop neighbors in both graphs and recurses has size at least $n/(2k'd)^2\Delta^2$. This implies the sum of conditional magnetizations takes the following form.
\begin{align*}
    \sum_{i \in R} \ex_{\beta^*}[m_{i}^2(\bsigma)e_{i}(\bsigma) | \bsigma_{-h(i)}] & =   \sum_{i \in R} \ex_{\beta^*}[m_{i}^2(\bsigma)| \bsigma_{-h(i)}]\pr_{\beta^*}[e_{i}(\bsigma) = 1| e_{i_1}(\bsigma) = 1, ... , e_{i_{t-1}}(\bsigma) = 1]\\
    & \ge  \frac{1}{2}\sum_{i \in R} \ex_{\beta^*}[m_{i}^2(\bsigma)| \sigma_{-h(i)}]\\
    &\ge \sum_{t \in |R|} \frac{\exp(-B)}{4} \pr_{\beta^*}[e_{h(i)}(\bsigma) = 1]\\
    & \ge \frac{n\exp(-B)(1 - \delta)}{\Delta(4kd\Delta)^2} 
\end{align*}
Note the last inequality comes from Lemma 15, and the fact that $s_i(\bsigma) \le e_i(\bsigma)$.
\end{proofcondhess2}
\subsection{Proof of Lemma \ref{lem:hess_mean} \label{sec:cond_hess}}
Armed with the tools from the previous section we now prove Lemma \ref{lem:hess_mean}.

\begin{proofcondhess}
    We begin by expanding out $m_i(\bsigma)$ into its component parts, namely \[
    m_i^2(\bsigma) = \left( \sum_{j \not = h(i)}A_{ij}\sigma_j\right)^2 + A_{ih(i)}^2 + 2\left( \sum_{j \not = h(i)}A_{ij}\sigma_j\right)A_{ih(i)}\sigma_{h(i)}
    \] Cancelling common factors implies that 
    \[
  \ex_{\beta^*} \left[  \left(2\sum_{(i,j) \in \filter(\bsigma)} \left( \sum_{j \not = h(i)}A_{ij}\sigma_j\right)A_{ih(i)}\left(\sigma_{h(i)} - \ex_{{\beta^*}}[\sigma_{h(i)}|\bsigma_{h(i)}]\right)\right)^2 \right].
    \] We merely sum over the flippable indices, as when $\sigma_{h(i)}$ is not flippable, the term $\sigma_{h(i)} - \ex_{{\beta^*}}[\sigma_{h(i)} | \bsigma_{-h(i)}]$ collapses. Denoting $y_{it}(\bsigma) = 2\left(\sum_{j \not = t}A_{ij}\sigma_j\right)^2A_{it}$ and recalling $\ex_{{\beta^*}}[\sigma_i | \bsigma_{-i}]  =\tanh({\beta^*} m_i^2(\bsigma))$, yields the following simplified version of the above expression. 
    \[
 \ex_{\beta^*} \left[ \sum_{(i,j) \in \filter(\bsigma)} \left(y_{ih(i)}(\bsigma)\left( \sigma_{h(i)} - \tanh({\beta^*} m_{h(i)}(\bsigma)\right)\right)^2 \right]. \]

    We aim to prove this concentration inequality via the technique of exchangeable pairs introduced by \cite{chatterjee2007estimation}. Consider, again, the \emph{anti-symmetric} function, $F: S \times S \to \R$, \[
F(\bm{\tau}, \bm{\tau}')  = \frac{1}{2}\sum_{i=1}^n (y_{ih(i)}(\bm{\tau}) + y_{ih(i)}(\bm{\tau}'))(\tau_i - \tau'_i),
\] and an assignment $\bsigma$ drawn from the Ising model truncated by $S$. We construct a new assignment $\sigma'$, via taking one-step of the Glauber dynamics over the Markov random field. The value of $F$ on $(\bsigma, \bsigma')$ simplifies as, \[
F(\bsigma, \bsigma')  = z_{i h(i)}(\bsigma) (\sigma_I - \sigma'_I).
\] Define the function $f(\bsigma)$ as the \emph{conditional} expectation of $F(\bsigma, \bsigma')$ with respect to $\bsigma$, that is \begin{align*}
   f(\bsigma) = \ex_I\left(F(\bsigma, \bsigma') \big| \bsigma \right) &= \frac{1}{n} \sum_{i=1}^n y_{ih(i)}(\bsigma)(\sigma_i - \ex(\sigma_i | \bsigma_{-i}))\\
   &= \frac{1}{n} \sum_{i \in \filter(\sigma)}y_{ih(i)}(\bsigma)(\sigma_i - \tanh( {\beta^*} m_i(\bsigma)))
\end{align*}
To show prove the desired result, it suffices to show a bound on the second moment of $f(\bsigma)$. Observe that $(\bsigma, \bsigma')$ is indeed an exchangeable pair as \[
\ex_{{\beta^*}}[f(\bsigma)^2] = \ex_{{\beta^*}, I}[f(\bsigma)F(\bsigma, \bsigma')] = \ex_{{\beta^*}, I}[f(\bsigma')F(\bsigma', \bsigma)].
\] Moreover, the anti-symmetric nature of $F(\bsigma, \bsigma')$ implies $\ex_{{\beta^*}, I}[f(\bsigma')F(\bsigma', \bsigma)] = -\ex_{{\beta^*}, I}[f(\bsigma')F(\bsigma, \bsigma')]$. These facts combine to recast $\ex_{{\beta^*}}[f(\bsigma)]$ as follows, 
\begin{align*}
    \ex_{{\beta^*}}\left[ f(\bsigma)^2 \right] &= \ex_{{\beta^*}, I}[f(\bsigma)F(\bsigma, \bsigma')] = -\ex_{{\beta^*}, I}[f(\bsigma')F(\bsigma, \bsigma')]\\
    &= \frac{1}{2} \ex_{{\beta^*}, I}\left[ (f(\bsigma) - f(\bsigma'))F(\bsigma, \bsigma')\right]
\end{align*} 
If $\bsigma = \bsigma'$ then this expression is rendered trivially zero, and hence we need only analyse the case when $\sigma_I'= -\sigma_I$. If the redrawn coordinate $I$ is selected from the set of flippable indices, this probability is, \[
p_i(\bsigma) := \frac{ \exp(- \sigma_i (  {\beta^*} m_i(\bsigma)))}{\exp(-{\beta^*} m_i(\bsigma)) + \exp( {\beta^*} m_i(\bsigma))} = \pr(\sigma_i' = -\sigma_i | \bsigma, I = i, i \in \filter(\bsigma))
\] and when $I \not \in \filter(\bsigma)$ this probability is zero.
Using the definitions of $f(\bsigma)$ and $F(\bm{\tau}, \bm{\tau'})$ above, this expression is simplified as follows, where $\bsigma^{(i)} = (-\sigma_i, \bsigma_{-i})$. \begin{align*}
  \frac{1}{2} \ex_{I}\left[ (f(\bsigma) - f(\bsigma'))F(\bsigma, \bsigma')\big| \bsigma\right] &= \frac{1}{n} \sum_{i \in \filter(\bsigma)}   (f(\bsigma) - f(\bsigma^{(i)}))F(\bsigma, \bsigma^{(i)})p_i(\bsigma) \\
  &= \frac{1}{n}\sum_{i \in \filter(\bsigma)}(f(\bsigma) - f(\bsigma^{(i)}))y_{ih(i)}(\bsigma) (\sigma_i - \tanh( {\beta^*} m_{h(i)}(\bsigma))p_i(\bsigma)\\
  &:= \frac{1}{n}\sum_{i \in \filter(\bsigma)}T_{1i}T_{2i}
\end{align*}
\emph{Bound on $T_{1i}$:}
We now bound each of term in the above expression, beginning with $T_{1i}$ where $i$ is flippable. The Taylor expansion of $f(\bsigma^{(i)})$ centered at $f(\bsigma)$ yields, \[
|f(\bsigma^{(i)}) -f(\bsigma)| \le |\sigma_i - \sigma^{(i)}_i|\max_{w \in [-1, 1]}\frac{\partial f}{\partial \sigma_i}((w, \bsigma_{-i})) = \max_{w \in [-1,1]}2\cdot \frac{\partial f}{\partial \sigma_i}((w, \bsigma_{-i})),
\] where $w$ is point along the line with endpoints $\bsigma$ and $\bsigma^{(j)}$. 

The partial derivative of $f(\bsigma)$ with respect to $\sigma_i$ evaluated at a spin configuration $\bm{\tau} \in S$ is \[
\frac{\partial f}{\partial \sigma_i}(\btau) = \frac{1}{n}\sum_{j \in \mathcal{F}(\btau)}\left( \left(\mathbf{1}_{i = j} - \frac{{\beta^*} A_{h(j)i} }{\cosh^2( {\beta^*} m_{h(j)}(\btau))}\right) y_{jh(j)}(\btau) + \left(\tau_j - \tanh( {\beta^*} m_{h(j)}(\btau)\right)\frac{\partial y_{jh(j)}(\btau)}{\partial \sigma_i}\right) 
\]
The assumption $\|A\|_{\infty} \le 1$ implies $|m_i(\btau)| \le 1$ for all values of $i \in [n]$ and $\btau \in \{-1,1\}^n$. Furthermore, $|\cosh(\cdot)| \ge 1$, yielding the following bound on the rescaled first term. 
    \[
    \left| \sum_{j \in \filter(\tau)}\left(\mathbf{1}_{i = j} - \frac{{\beta^*} A_{h(j)i} }{\cosh^2( {\beta^*} m_{h(j)}(\btau))}\right) y_{jh(j)}(\btau)  \right| \le \left(\sum_{\{ j \in \mathcal{F}| h(j) = i (\btau)\}}|y_{jh(j)}(\btau)| + \sum_{\{j \not = i| j \in \mathcal{F}(\btau)\}} |{\beta^*} A_{h(j)i}y_{jh(j)}(\btau)|\right)
    \]
    It can be quickly seen that this value is at \emph{most} $(2 + 2B)$.
Likewise, $\frac{\partial z_{jh(j)}(\bsigma)}{\partial \sigma_i}  = 2A_{h(j)i}$ implies a bound on the second term. 
    \[
    \left|\left(\tau_j - \tanh( {\beta^*} m_{h(j)}(\btau)\right)\frac{\partial y_{jh(j)}(\btau)}{\partial \sigma_i}\right| \le \left|\left(\tau_j - \tanh( {\beta^*} m_{h(j)}(\btau)\right) \right|\left|  \frac{\partial y_{jh(j)}(\btau)}{\partial \sigma_i}\right|\le 4|A_{h(j)i}|
    \]
Combining these two bounds yields 
\begin{align*}
|T_{i1}| &\le 2\max_{w \in [-1,1]}\left| \frac{\partial f}{\partial \sigma_i}((w, \bsigma_{-i}))\right|\\
& \le \max_{w \in [-1,1]} \frac{2}{n}\left((2 + 2B) + \sum_{j \in \filter(\tau) }4|A_{h(j)i}|\right)\\
&\le \frac{2}{n}\left((2 + 2B) + 4 \right)\\
&\le \frac{(12 + 4B)}{n}
\end{align*}
\emph{Bound on $T_{2i}$:}
Recall $|y_{ih(i)}(\bsigma)| \le 1$ for all $i \in [n]$ and $\bsigma \in \{-1,1\}^n$ and $|\tanh(x)| \le 1, \forall x \in \R$. Then \[
|T_{2i}| = |y_{ih(i)}(\bsigma) (\sigma_i - \tanh( {\beta^*} m_{h(i)}(\bsigma))p_i(\bsigma)| \le 4
\]\emph{Putting together the pieces:} We are now ready to construct our final bound on $\ex_{{\beta^*}}(f(\bsigma)^2)$.
\begin{align*}
    \ex_{{\beta^*}}(f(\bsigma)^2) & = \frac{1}{2} \ex_{{\beta^*}, I}\left( (f(\bsigma) - f(\bsigma'))F(\bsigma, \bsigma')\right)\\
    &= \frac{1}{2n} \ex_{{\beta^*}}\left(\sum_{i =1}^n T_{1i}T_{2i}e_i(\bsigma)\right)\\
    &=\frac{(24 + 8B)}{n}
\end{align*}
This directly implies that \[
\ex_{{\beta^*}}\left[\left( \sum_{i  = 1}^n m_i^2(\bsigma)e_{i}(\bsigma)   - \sum_{i = 1}^n \ex_{{\beta^*}} \left[ m_i(\bsigma)e_i(\bsigma) | \bsigma_{-h(i)} \right]\right)^2  \right] \le (24 + 8B)n
\]
    Applying Chebyshev's inequality to this term, yields a bound in probability that the second derivative deviates far from its conditional mean.
    \begin{align*}
        &\pr_{{\beta^*}}\left[\left( \sum_{i  = 1}^n m_i^2(\bsigma)e_{i}(\bsigma)   - \sum_{i = 1}^n \ex_{{\beta^*}} \left[ m_i(\bsigma)e_i(\bsigma) | \bsigma_{-h(i)} \right]\right)^2 \ge n^{1.1} \right] \le \frac{(24 + 8B)} {n^{0.1}}\\
        &\pr_{{\beta^*}}\left[\left| \sum_{i  = 1}^n m_i^2(\bsigma)e_{i}(\bsigma)   - \sum_{i = 1}^n \ex_{{\beta^*}} \left[ m_i(\bsigma)e_i(\bsigma) | \bsigma_{-h(i)} \right]\right| \ge n^{0.55} \right] \le \frac{(24 + 8B)} {n^{0.1}}\\
        &\pr_{{\beta^*}}\left[ \sum_{i  = 1}^n m_i^2(\bsigma)e_{i}(\bsigma)   \le  \frac{n\exp(-B)(1 - \delta)}{\Delta(4kd\Delta)^2}  - n^{0.55}\right] \le \frac{(24 + 8B)} {n^{0.1}}\\
        &\pr_{{\beta^*}}\left[ \sum_{i  = 1}^n m_i^2(\bsigma)e_{i}(\bsigma)   \le  \frac{2n\exp(-B)(1 - \delta)}{\Delta(4kd\Delta)^2} \right] \ge 1 - o(1)
    \end{align*}
\end{proofcondhess}

\section{Applications}
In this brief section, we establish a connection between the notion of fatness, as introduced in the context of truncated Boolean product distributions~\cite{fotakis2021efficient}, and the Ising measure conditioned on the solutions to a $k$-CNF formula. Specifically, we show that this truncated Ising measure satisfies the combinatorial conditions required for fatness, thereby extending the fatness framework beyond the setting of product distributions. We recound the definition of an $\alpha-$fat distribution below.

\begin{definition}[$\alpha$-fat Distributions~\cite{fotakis2021efficient}]
A truncated boolean distribution $D_S$ is $\alpha$-fat if for all
coordinates $i \in [n]$ there exists some $\alpha > 0$ such that
\[
\pr_{x \sim D_S}\big[(x_1, \ldots, x_{i-1}, -x_i, x_{i+1}, \ldots, x_n) \in S \big] \geq \frac{1}{2}.
\]
\end{definition}

\begin{corollary}
Given an Ising model $\pr_{\beta, S}$, satisfying Assumption 1, whose measure is truncated to
the solutions $S$ of a $k$-SAT formula such that
\[
k \geq \mathcal{O}\big(3\Delta^3 (1 + \log(d^2 k + 1))\big),
\]
then the distribution
is $\tfrac{1}{2}$-fat, i.e.,
\[
\pr_{\beta, S}\big[(-\sigma_{i}, \bsigma_{-i}) \in S\big] \geq \frac{1}{2}, \quad \text{for all } i \in [n].
\]
\end{corollary}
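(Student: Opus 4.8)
The plan is to observe that $\tfrac12$-fatness is nothing but the statement that every coordinate is flippable with probability at least $\tfrac12$ under the truncated Ising measure. Indeed, since $\bsigma \in S$ almost surely under $\Pr_{\beta,S}$, the event $\{(-\sigma_i,\bsigma_{-i})\in S\}$ coincides with $\{e_i(\bsigma)=1\}$, so the corollary is equivalent to $\Pr_{\beta,S}[e_i(\bsigma)=1]\ge\tfrac12$ for every $i\in[n]$ — precisely the per-coordinate content of the argument already assembled for Lemma~\ref{lem:flip}. I would fix the independent set $I\subseteq V$ furnished by Lemma~\ref{lem:sze} (so that the restriction $\Phi'$ of $\Phi$ to $V\setminus I$ is a $\lambda k$-SAT formula with $\lambda=1/(4\Delta^3)$), and split $[n]$ into the two cases $i\notin I$ and $i\in I$.

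For $i\in V\setminus I$, I would invoke Lemma~\ref{lem:flip_one}: under the stated clause-size lower bound it gives $\Pr_{\beta,S}[s_i(\bsigma)=1]\ge\tfrac12$. Since $s_i(\bsigma)=1$ forces every clause containing $v_i$ to be satisfied by some variable of $I$ (hence by a variable other than $v_i$), flipping $\sigma_i$ cannot violate any clause, so $s_i(\bsigma)\le e_i(\bsigma)$ pointwise and therefore $\Pr_{\beta,S}[e_i(\bsigma)=1]\ge\tfrac12$.

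For $i\in I$, I would condition on $\bsigma_{V\setminus I}$. As recorded in Section~\ref{sec:flip}, the conditional law of $\bsigma_I$ is a product measure in which each $\sigma_j$, $j\in I$, takes each value with probability in the interval $[e^{-B}/(1+e^{-B}),\, e^{B}/(1+e^{B})]$, i.e. bounded away from $0$ and $1$, and the relevant CNF is $\Phi'$, a $\lambda k$-SAT formula whose clauses share bounded dependencies. Because $k$ is large enough that $\lambda k$ exceeds the threshold required in Lemma~15--16 of \cite{galanis2024learning}, that lemma applies to $\Phi'$ and yields $\Pr[\sigma_i \text{ not flippable in }\Phi' \mid \bsigma_{V\setminus I}]\le\tfrac12$ for the individual variable $v_i$; since flippability of $v_i$ in $\Phi'$ (with the pinning held fixed) implies flippability of $v_i$ in $\Phi$, this gives $\Pr_{\beta,S}[e_i(\bsigma)=1\mid\bsigma_{V\setminus I}]\ge\tfrac12$ for every pinning, and averaging over $\bsigma_{V\setminus I}$ finishes this case. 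Combining the two cases yields $\Pr_{\beta,S}[(-\sigma_i,\bsigma_{-i})\in S]\ge\tfrac12$ for all $i$.

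The main obstacle is the $i\in I$ case: one must verify that the clause-size threshold survives the shrinkage from $k$ to $\lambda k=k/(4\Delta^3)$ when passing to $\Phi'$ — this is exactly why the hypothesis carries the extra $\Delta^3$ factor (together with a $\log(d^2k)$ term accounting for the $d^2k$-bounded dependency among clauses of $\Phi'$). Everything else is bookkeeping already carried out in Lemmas~\ref{lem:sze} and~\ref{lem:flip_one} and in the cited product-distribution estimate, so no new technical machinery is needed.
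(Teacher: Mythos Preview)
Your proposal is correct and follows exactly the route the paper takes: the paper's proof is the one-liner ``This is a direct consequence of Lemma~\ref{lem:flip}'', and your argument simply unpacks that lemma into its two constituent cases ($i\notin I$ via Lemma~\ref{lem:flip_one} and $s_i\le e_i$, and $i\in I$ via the conditional product structure and the Galanis et al.\ per-variable flippability bound). Nothing is missing and no alternative idea is used.
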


\begin{proof}
    This is a direct consequence of Lemma \ref{lem:flip}.
\end{proof}

\end{document}